\newcommand{\newterm}[1]{{\bf #1}}
\def\Figref#1{Figure~\ref{#1}}
\def\eqref#1{equation~\ref{#1}}
\def\Eqref#1{Equation~\ref{#1}}
\def\1{\bm{1}}
\def\rmR{{\mathbf{R}}}
\def\vzero{{\bm{0}}}
\def\vlambda{{\bm{\lambda}}}
\def\ve{{\bm{e}}}
\def\vj{{\bm{j}}}
\def\vu{{\bm{u}}}
\def\vv{{\bm{v}}}
\def\vx{{\bm{x}}}
\def\vy{{\bm{y}}}
\def\evlambda{{\lambda}}
\def\evu{{u}}
\def\evx{{x}}
\def\mA{{\bm{A}}}
\def\mB{{\bm{B}}}
\def\mD{{\bm{D}}}
\def\mI{{\bm{I}}}
\def\mJ{{\bm{J}}}
\def\mL{{\bm{L}}}
\def\mM{{\bm{M}}}
\def\mP{{\bm{P}}}
\def\mQ{{\bm{Q}}}
\def\mS{{\bm{S}}}
\def\mU{{\bm{U}}}
\def\mV{{\bm{V}}}
\def\mW{{\bm{W}}}
\def\mX{{\bm{X}}}
\def\mY{{\bm{Y}}}
\def\mZ{{\bm{Z}}}
\def\mLambda{{\bm{\varLambda}}}
\DeclareMathAlphabet{\mathsfit}{\encodingdefault}{\sfdefault}{m}{sl}
\SetMathAlphabet{\mathsfit}{bold}{\encodingdefault}{\sfdefault}{bx}{n}
\def\gG{{\mathcal{G}}}
\def\gX{{\mathcal{X}}}
\def\sE{{\mathbb{E}}}
\def\sV{{\mathbb{V}}}
\def\emA{{A}}
\def\emD{{D}}
\newcommand{\E}{\mathbb{E}}
\newcommand{\R}{\mathbb{R}}
\DeclareMathOperator*{\argmax}{arg\,max}
\pgfplotsset{compat=1.15}
\newcolumntype{Y}{>{\centering\arraybackslash}X}
\definecolor{mydarkblue}{rgb}{0,0.08,0.45}
\newtheorem{theorem}{Theorem}
\newtheorem{definition}{Definition}
\newtheorem{corollary}{Corollary}
\newtheorem{lemma}{Lemma}
\newtheorem{assumption}{Assumption}
\theoremstyle{definition}\newtheorem*{remark}{Remark}
\DeclareMathOperator{\diag}{diag}
\DeclareMathOperator{\NN}{NN}
\DeclareMathOperator{\rank}{rank}
\title{
    Laplacian Canonization: A Minimalist Approach to Sign and Basis Invariant
    Spectral Embedding
}
\author{%
    Jiangyan Ma\textsuperscript{1$*$} \qquad Yifei Wang\textsuperscript{2}\thanks{Equal Contribution.} \qquad Yisen Wang\textsuperscript{3,4}\thanks{Corresponding Author: Yisen Wang (yisen.wang@pku.edu.cn)} \\ 
    \textsuperscript{1} School of Electronics Engineering and Computer Science, Peking University \\
    \textsuperscript{2} School of Mathematical Sciences, Peking University\\  
    \textsuperscript{3} National Key Lab of General Artificial Intelligence, \\ School of Intelligence Science and Technology, Peking University\\
    \textsuperscript{4} Institute for Artificial Intelligence, Peking University\\
}
\begin{document}

    \maketitle
    \doparttoc
    \faketableofcontents

    \begin{abstract}
        Spectral embedding is a powerful graph embedding technique that has received
        a lot of attention recently due to its effectiveness on Graph Transformers.
        However, from a theoretical perspective, the universal expressive power of
        spectral embedding comes at the price of losing two important invariance
        properties of graphs, sign and basis invariance, which also limits its
        effectiveness on graph data. To remedy this issue, many previous methods
        developed costly approaches to learn new invariants and suffer from high
        computation complexity. In this work, we explore a minimal approach that
        resolves the ambiguity issues by directly finding canonical directions for the
        eigenvectors, named Laplacian Canonization (LC). As a pure pre-processing
        method, LC is light-weighted and can be applied to any existing GNNs. We
        provide a thorough investigation, from theory to algorithm, on this approach,
        and discover an efficient algorithm named Maximal Axis Projection (MAP) that
        works for both sign and basis invariance and successfully canonizes more than
        90\% of all eigenvectors. Experiments on real-world benchmark datasets like
        ZINC, MOLTOX21, and MOLPCBA show that MAP consistently outperforms existing
        methods while bringing minimal computation overhead. Code is available at
        \url{https://github.com/PKU-ML/LaplacianCanonization}.
    \end{abstract}

    \section{Introduction}

    Despite the popularity of Graph Neural Networks (GNNs) for graph representation
    learning \citep{gcn,sgc,dissecting,g2cn,gind}, it is found that many existing GNNs have limited expressive power and
    cannot tell the difference between many non-isomorphic graphs
    \citep{gin,higher-order}. Existing approaches to improve expressive power, such
    as high-order GNNs \citep{higher-order} and subgraph aggregation \citep{esan},
    often incur significant computation costs over vanilla GNNs, limiting their use in
    practice. In comparison, a simple but effective strategy is to use discriminative
    node identifiers. For example, GNNs with random features can lead to universal
    expressive power \citep{random-features,rni}. However, these unique node
    identifiers often lead to the loss of the permutation invariance property of GNNs,
    which is an important inductive bias of graph data that matters for  sample
    complexity and generalization \citep{universal-invariant-equivariant-gnn}.
    Therefore, during the pursuit of more expressive GNNs, we should also maintain
    the invariance properties of graph data. Balancing these two conflicting demands
    presents a significant challenge for the development of advanced GNNs.

    Spectral embedding (SE) is a classical approach to encode node positions using
    eigenvectors $\mU$ of the Laplacian matrix $\mL$, which has the advantage of
    being expressive and permutation equivariant. GNNs using SE can attain universal
    expressive power (distinguishing \emph{any} pair of non-isomorphic graphs) even
    under simple architectures (results in Section~\ref{sec:universal}). However,
    spectral embedding faces two additional challenges in preserving graph invariance:
    sign ambiguity and basis ambiguity, due to the non-uniqueness of
    eigendecomposition. These ambiguities could lead to inconsistent predictions for
    the same graph under different eigendecompositions. 

    Many methods have been proposed to address sign and basis ambiguities of spectral
    embedding. A popular heuristic is RandSign \citep{benchmarking-gnn} which randomly
    flips the signs of the eigenvectors during training. Although it is simple to use,
    this data augmentation approach does not offer any formal invariance guarantees
    and can lead to slower convergence due to all possible $2^n$ sign flips. An
    alternative involves using sign-invariant eigenfunctions to attain sign invariance
    \citep{lpe}, which significantly increases the time complexity to $\mathcal{O}(n^4)$.
    Another solution is to design extra GNN modules for sign and basis invariant
    embeddings, \textit{e.g.}, SignNet and BasisNet \citep{signnet}, which can also
    add a substantial computational burden. Therefore, as summarized in
    Table~\ref{tab:comparison}, existing spectral embedding methods are all
    detrimental in a certain way that either hampers sign and basis invariance, or
    induces large computational overhead. More discussions about the related work
    can be found in Appendix \ref{app:related}. 
   
    \begin{table}[t]
        \centering
        \caption{Comparison between prior works and our method.
        $n$ is the number of nodes, $m$ is the exponent of the
        feature dimension of BasisNet \citep{signnet,invariant-universal}.}
        \resizebox{\textwidth}{!}{
            \begin{tabularx}{1.08\textwidth}{Xcccccc}
                \toprule
                Method         & \thead{pre-processing\\ time} & universality   & \thead{permutation\\ invariance} & \thead{addresses sign\\ ambiguity} & \thead{addresses basis\\ ambiguity} & \thead{feature\\ dimension} \\
                \midrule
                LapPE \citep{benchmarking-gnn} & $\mathcal{O}(n^3)$ & \ding{56}      & \ding{52}      & \ding{56}      & \ding{56}      & $n$ \\
                RandSign \citep{benchmarking-gnn} & $\mathcal{O}(n^3)$ & \ding{56}      & \ding{52}      & \ding{52}      & \ding{56}      & $n$ \\
                SAN \citep{lpe} & $\mathcal{O}(n^4)$ & \ding{52}      & \ding{52}      & \ding{52}      & \ding{56}      & $3n$ \\
                SignNet \citep{signnet} & $\mathcal{O}(n^3)$ & \ding{52}      & \ding{52}      & \ding{52}      & \ding{56}      & $2n$ \\
                BasisNet \citep{signnet} & $\mathcal{O}(n^3)$ & \ding{52}      & \ding{52}      & \ding{52}      & \ding{52}      & $n^m$ \\
                MAP (ours)     & $\mathcal{O}(n^3)$ & \ding{52}      & \ding{52}      & \ding{52}      & \ding{52}      & $n$ \\
                \bottomrule
            \end{tabularx}
        }
        \label{tab:comparison}
    \end{table}

    In this work, we explore a new approach called Laplacian Canonization (LC)
    that resolves the ambiguities by identifying a unique canonical direction for
    each eigenvector, amongst all its sign and basis equivalents. Although it is
    relatively easy to find a canonization rule that work for certain vectors, up to
    now, there still lacks a rigorous understanding of what kinds of vectors are
    canonizable and whether we could find a complete algorithm for all canonizable
    features. In this paper, we systematically answer this problem by developing a
    general theory for Laplacian canonization and characterizing the sufficient and
    necessary conditions for sign and basis canonizable features. 
    
    Based on these theoretical properties, we propose a practical canonization
    algorithm for sign and basis invariance, called Maximal Axis Projection (MAP),
    that adopts the permutation-invariant axis projection functions to determine the
    canonical directions. We theoretically characterize the conditions under which
    MAP can guarantee sign and basis canonization, and empirically verify that this
    condition holds for most synthetic and real-world graphs. It is worth noting
    that LC is a lightweight approach since it is only a pre-processing method and
    does not alter the dimension of the spectral embedding. Empirically, we show that
    employing the MAP-canonized spectral embedding yields significant improvements
    over the vanilla RandSign approach, and even matches SignNet on large-scale
    benchmark datasets like OGBG \citep{ogb}. We summarize our contributions as
    follows:
    
    \begin{itemize}
        \item We explore Laplacian Canonization (LC), a new approach to restoring
        the sign and basis invariance of spectral embeddings via determining the
        canonical direction of eigenvectors in the pre-processing stage. We develop
        a general theoretical framework for LC and characterize the canonizability
        of sign and basis invariance.
        \item We propose an efficient algorithm for Laplacian Canonization, named
        Maximal Axis Projection (MAP), that works well for both sign and basis
        invariance. In particular, we show that MAP-sign is capable of canonizing
        all sign canonizable eigenvectors and thus is complete. The assessment of its
        feasibility shows that MAP can effectively canonize almost all eigenvectors on
        random graphs and more than 90\% eigenvectors on real-world datasets.
        \item We evaluate the MAP-canonized spectral embeddings on graph
        classification benchmarks including ZINC, MOLTOX21 and MOLPCBA, and obtain
        consistent improvements over previous spectral embedding methods while
        inducing the smallest computational overhead.
    \end{itemize}

    \section{Benefits and Challenges of Spectral Embedding for GNNs}
    \label{sec:universal}

    Denote a graph as $\gG=(\sV,\sE,\mX)$ where $\sV$ is the vertex set of size $n$,
    $\sE$ is the edge set, and $\mX\in\R^{n\times d}$ are the input node features.
    We denote $\mA$ as the adjacency matrix, and let
    \(
        \hat{\mA}=\mD^{-\frac12}\tilde{\mA}\mD^{-\frac12}
        =\mD^{-\frac12}(\mI+\mA)\mD^{-\frac12}
    \)
    be the normalized adjacency matrix, where $\mI$ denotes the augmented self-loop,
    $\mD$ is the diagonal degree matrix of $\tilde{\mA}$ defined by $\emD_{i,i}=
    \sum_{j=1}^n\tilde{\emA}_{i,j}$. A graph function $f([\mX,\hat{\mA}])$ is
    \emph{permutation invariant} if for all permutation matrix $\mP\in\R^{n\times n}$,
    we have $f([\mP\mX,\mP\hat{\mA}\mP^\top])=f([\mX,\hat{\mA}])$. Similarly, $f$ is
    \emph{permutation equivariant} if $f([\mP\mX,\mP\hat{\mA}\mP^\top ])=\mP f([\mX,
    \hat{\mA}])$.

    \textbf{Spectral Embedding (SE).} Considering the limited expressive power of
    MP-GNNs, recent works explore more flexible GNNs like graph Transformers
    \citep{graphormer,gtn,gt,lpe,gps}. These models bypass the explicit structural
    inductive bias in MP-GNNs while encoding graph structures via positional
    embedding (PE). A popular graph PE is spectral embedding (SE), which uses the
    eigenvectors $\mU$ of the Laplacian matrix $\mL=\mI-\hat\mA$ with
    eigendecomposition $\mL=\mU\mLambda\,\mU^\top$, where $\mLambda=\diag(\vlambda)$
    is the diagonal matrix of ascending eigenvalues $\evlambda_1\leq\dots\leq
    \evlambda_n$, and the $i$-th column of $\mU$ is the eigenvector corresponding to
    $\evlambda_i$. It is easy to see that spectral embedding is permutation
    equivariant: for any node permutation $\mP$ of the graph, $\mP\mU$ is the
    spectral embedding of the new Laplacian since $\mP\mL\mP^\top =(\mP\mU)\mLambda
    (\mP\mU)^\top$. Therefore, a permutation-invariant GNN (\textit{e.g.}, DeepSets
    \citep{deep-sets}, GIN \citep{gin}, Graph Transformer \citep{gt}) using the
    SE-augmented input features $\tilde{\mX}=[\mX,\mU]$ remains permutation invariant.

    \textbf{Reweighted Spectral Embedding (RSE).} Previous works have shown that using
    spectral embedding improves the expressive power of MP-GNNs \citep{dgn}. Nonetheless,
    we find that SE alone is \emph{insufficient} to approximate an arbitrary graph
    function, as it does not contain all information about the graph, in particular,
    the eigenvalues. Consider two non-isomorphic graphs whose Laplacian matrices have
    identical eigenvectors but different eigenvalues. As their SE is the same, a
    network only using SE cannot tell them apart. To resolve this issue, we propose
    \newterm{reweighted spectral embedding (RSE)} that additionally reweights
    each eigenvector $\vu_i$ with the square-root of its corresponding eigenvalue
    $\lambda_i$, \textit{i.e.}, $\mU_\mathrm{RSE}=\mU\mLambda^\frac12$. With the
    reweighting technique, RSE incorporates eigenvalue information without
    need extra dimensions.
    
    \textbf{Universality of RSE\@.} In the following theorem, we prove that with RSE,
    \emph{any} universal network on sets (\textit{e.g.}, DeepSets \citep{deep-sets}
    and Transformer \citep{transformer-universal}) is universal on graphs while
    preserving permutation invariance. All proofs are deferred to
    Appendix~\ref{app:proofs}.

    \begin{theorem}\label{thm:se-universal}
        Let\/ $\Omega\subset\R^{n\times d}\times\R^{n\times n}$ be a compact set
        of graphs, $[\mX,\hat{\mA}]\in\Omega$. Let\/ $\NN$ be a universal neural
        network on sets. Given any continuous invariant graph function $f$ defined
        over\/ $\Omega$ and arbitrary $\varepsilon>0$, there exist a set of NN
        parameters such that for all graphs $[\mX,\hat{\mA}]\in\Omega$,
        \[
            |\NN([\mX,\mU_\mathrm{RSE}])-f([\mX,\hat{\mA}])|<\varepsilon.
        \]
    \end{theorem}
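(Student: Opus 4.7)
The plan is to exploit the key algebraic identity $\mU_{\mathrm{RSE}}\mU_{\mathrm{RSE}}^\top = \mU\mLambda^{1/2}(\mU\mLambda^{1/2})^\top = \mU\mLambda\mU^\top = \mL = \mI-\hat\mA$. This identity shows that although $\mU$ individually suffers from sign and basis ambiguity, the outer product $\mU_{\mathrm{RSE}}\mU_{\mathrm{RSE}}^\top$ is canonical. Hence $\hat\mA$ can be reconstructed from $\mU_{\mathrm{RSE}}$ entrywise, via inner products of its rows. Crucially, this reconstruction only requires the \emph{multiset} of rows of $[\mX,\mU_{\mathrm{RSE}}]$: given any ordering of the rows, the Gram matrix of the $\mU_{\mathrm{RSE}}$-part recovers $\mL$ up to the same permutation applied to $\mX$, which is an equivalent graph under the invariant $f$.

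Concretely, I would proceed in three steps. First, define the reconstruction map $\Phi:(\R^{d+n})^n/S_n\to\R$ by $\Phi(\{(\vx_i,\vr_i)\}_{i=1}^n)=f([\mX,\mI-\mathrm{Gram}(\mR)])$, where $\mX$ stacks the $\vx_i$ and $\mathrm{Gram}(\mR)_{ij}=\langle\vr_i,\vr_j\rangle$. This is well-defined on equivalence classes because $f$ is permutation invariant: a relabeling of the multiset permutes both $\mX$ and the Gram matrix consistently. Second, verify that by construction $\Phi([\mX,\mU_{\mathrm{RSE}}])=f([\mX,\hat\mA])$ for every graph in $\Omega$, using the identity from the first step. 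Third, check that $\Phi$ is a continuous symmetric function on multisets: the Gram matrix is a polynomial in the entries and thus continuous, the compact-set assumption on $\Omega$ (together with normalization giving bounded $\mLambda^{1/2}$) ensures the image $T(\Omega)$ lies in a compact subset $K\subset(\R^{d+n})^n$, and $f$ is continuous on $\Omega$, so $\Phi$ is continuous on $K$. Finally, invoke the universality of $\NN$ on sets to choose parameters so that $\sup_{R\in K}|\NN(R)-\Phi(R)|<\varepsilon$; restricting to $R=[\mX,\mU_{\mathrm{RSE}}]$ yields the desired bound.

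The main subtlety will be the third step, namely continuity of $\Phi$. The eigendecomposition map $\hat\mA\mapsto\mU$ itself is notoriously discontinuous at eigenvalue crossings, so the embedding $T:\Omega\to K$ is in general not continuous; however, this does not enter the argument, because the reconstruction map $\Phi$ is defined intrinsically on multisets through the Gram matrix, which \emph{is} continuous. What one must be careful about is that $\Phi$ may not naturally extend continuously to all of $(\R^{d+n})^n/S_n$, since $\mI-\mathrm{Gram}(\mR)$ need not be a normalized adjacency of a graph in $\Omega$. The clean fix is to extend $f$ continuously from $\Omega$ to a neighborhood (or to all of $\R^{n\times d}\times\R^{n\times n}$) by the Tietze extension theorem, which applies because $\Omega$ is compact and hence closed in Hausdorff space; then $\Phi$ is continuous on a compact $K\supseteq T(\Omega)$ and the universality hypothesis applies directly.

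A minor bookkeeping point is to confirm that the image $T(\Omega)$ is bounded: eigenvectors chosen to be orthonormal have unit norm, eigenvalues of the normalized Laplacian lie in $[0,2]$, and $\mX$ ranges over the projection of the compact set $\Omega$, so each row of $[\mX,\mU_{\mathrm{RSE}}]$ is uniformly bounded. Together with the reconstruction identity and Tietze-extended continuity, every ingredient required by the universal set approximator is in place, and the conclusion of the theorem follows.
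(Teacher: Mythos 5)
Your proposal is essentially the paper's own proof: both rest on the identity that the Gram matrix of the rows of $\mU_{\mathrm{RSE}}$ reconstructs the graph operator (the paper writes $\hat{\mA}=(\mU\mLambda^{\frac12})(\mU\mLambda^{\frac12})^\top$, you equivalently recover $\mL=\mI-\hat{\mA}$), so that $f([\mX,\hat{\mA}])=F([\mX,\mU_{\mathrm{RSE}}])$ for a permutation-invariant function $F$ of the rows, to which the universality of $\NN$ on sets is then applied. Your additional care about continuity and the domain of $F$ (boundedness of the image, Tietze extension of $f$, and the observation that discontinuity of the eigendecomposition itself is irrelevant because $F$ is defined through the continuous Gram map) is a sound tightening of details the paper leaves implicit, not a different argument.
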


    As far as we know, this theorem is the first to show that, with the help of graph
    embeddings like RSE, even an MLP network like DeepSets \citep{deep-sets}
    (composed of a node-wise MLP, a global pooling layer, and a graph-level MLP) can
    achieve universal expressive power to distinguish any pair of non-isomorphic
    graphs. Notably, it does not violate the NP-hardness of graph isomorphism testing,
    since training NN itself is known to be NP-hard \citep{np-complete}. Actually, it
    is not always necessary to use all spectra of the graph. Existing studies find
    that high-frequency components are often unhelpful, or even harmful for
    representation learning \citep{spectral-expressive,gfnn}. Thus, in practice, we
    only use the first $k$ low-frequency components of RSE. An upper bound on the
    approximation error of truncated RSE can be found in Appendix~\ref{app:bound}.

    \textbf{Ambiguities of eigenvectors.} Although RSE enables universal GNNs, there
    exist two types of ambiguity in eigenvectors that hinder their applications. The
    first one, known as \newterm{sign ambiguity}, arises when an eigenvector $\vu_
    {\lambda_i}$ corresponding to eigenvalue $\lambda_i$ is equally valid with its
    sign flipped, \textit{i.e.}, $-\vu_{\lambda_i}$ is also an eigenvector
    corresponding to $\lambda_i$. The second one, termed \newterm{basis ambiguity},
    occurs when eigenvalues with multiplicity degree $d_i>1$ can have any other
    orthogonal basis in the subspace spanned by the corresponding eigenvectors as
    valid eigenvectors. To be specific, for multiple eigenvalues with multiplicity
    degree $d_i>1$, the corresponding eigenvectors $\mU_{\lambda_i}\in\R^{n\times d_i}$
    form an orthonormal basis of a subspace. Then any orthonormal matrix $\mQ\in\R^
    {d_i\times d_i}$ can be applied to $\mU_{\lambda_i}$ to generate a new set of
    valid eigenvectors for $\lambda_i$. Because of these ambiguities, we can get
    distinct GNN outputs for the same graph, resulting in unstable and suboptimal
    performance \citep{benchmarking-gnn,lpe,lspe,signnet}. In Appendix~\ref{app:why},
    we elaborate the challenges posed by these ambiguities.

    \section{Laplacian Canonization for Sign and Basis Invariance}

    Rather than incorporating additional modules to learn new sign and basis
    invariants \citep{lpe,signnet}, we explore a straightforward, learning-free
    approach named Laplacian Canonization (LC). The general idea of LC is to
    determine a unique direction for each eigenvector $\vu$ among all its sign and
    basis equivalents. In this way, the ambiguities can be directly addressed in the
    pre-processing stage. For example, for two sign ambiguous vectors $\vu$ and
    $-\vu$, we aim to find an algorithm that determines a unique direction among
    them to obtain sign invariance. Despite some na\"ive canonization rules
    (discussed in Appendix \ref{app:assumption1}), there still lacks a systematical
    understanding of the following key questions of LC: 
    \begin{enumerate}
        \item What kind of canonization algorithm should we look for?
        (Section~\ref{sec:definition})
        \item What kind of eigenvectors are canonizable or non-canonizable?
        (Section~\ref{sec:canonical})
        \item Is there an efficient canonization algorithm for all canonizable
        features? (Section~\ref{sec:pratical})
    \end{enumerate}
    In this section, we answer the three problems by establishing the first formal
    theory of Laplacian canonization and characterizing the canonizability for sign
    and basis invariance; based on these analyses, we also propose an efficient
    algorithm named MAP for LC that is guaranteed to canonize all sign canonizable
    features. To get a glimpse of our final results, Table~\ref{tab:violation} shows
    that MAP can resolve both sign and basis ambiguities for more than 90\% of all
    eigenvectors on real-world data.

    \begin{table}[t]
        \centering
        \caption{The ratio of uncanonizable eigenvectors \emph{w.r.t.}\ each invariance
        property with our MAP algorithm on three real-world datasets: ZINC, MOLTOX21,
        and MOLPCBA\@.}
        \begin{tabular}{cccc}
            \toprule
            Invariance        & ZINC          & MOLTOX21      & MOLPCBA \\
            \midrule
            Sign              & 2.46\,\%      & 3.04\,\%      & 2.24\,\% \\
            Basis             & 1.59\,\%      & 3.31\,\%      & 7.37\,\% \\
            Total             & 4.05\,\%      & 6.35\,\%      & 9.61\,\% \\
            \bottomrule
        \end{tabular}
        \label{tab:violation}
    \end{table}

    \subsection{Definition of Canonization}\label{sec:definition}

    To begin with, we first find out what properties a desirable canonization
    algorithm should satisfy. The ultimate goal of canonization is to eliminate
    \emph{ambiguities} by selecting a \emph{canonical form} among these ambiguous
    outputs. Generally speaking, we can characterize ambiguity as a multivalued
    function $f\colon\mathcal{X}\to\mathcal{Y}$, \textit{i.e.}, there could be
    multiple outputs $y_1,\dots,y_n$ corresponding to the same input $x$. In our
    sign/basis ambiguity case, $f$ refers to a mapping from a graph $x\in\gX$ to the
    eigenvectors of a certain eigenvalue via the ambiguous eigendecomposition. The
    ambiguous eigenvectors are not independent; they are related by a sign/basis
    transformation. In general, we can assume that all possible outputs $y_1,\dots,
    y_n$ of any $x$ belong to the same equivalence class induced by some group action
    $g\in G$, where $G$ acts on $\mathcal{Y}$. That is, if $f(x)=y_1=y_2$, then there
    exists $g\in G$ such that $y_1=gy_2$. Moreover, eigenvectors of graphs obey a
    fundamental symmetry: permutation equivariance. In general, we assume that $f$
    is equivariant to a group $H$ acting on $\mathcal{X}$ and $\mathcal{Y}$. That is,
    for any $h\in H$, if $y_1,\dots,y_n$ are all possible outputs of $x$, then $hy_1,
    \dots,hy_n$ are all possible outputs of $hx$.

    Specifically, for the goal of canonizing ambiguous eigenvectors, \textit{i.e.},
    Laplacian canonization, we are interested in algorithms invariant to sign/basis
    transformations in the corresponding orthogonal group $G$ ($O(1)$ for sign
    invariance and $O(d)$ for basis invariance). Meanwhile, to preserve the symmetry
    of graph data, this algorithm should also maintain the permutation equivariant
    property of eigenvectors (Section~\ref{sec:universal}) \textit{w.r.t.}\ the
    permutation group $H$. Third, it also should still be discriminative enough to
    produce different canonical forms for different graphs (like the original
    spectral embedding). Combining these desiderata, we have a formal definition of
    canonization.

    \begin{definition}\label{def:canonization}
        A mapping $\mathcal{A}\colon\mathcal{Y}\to\mathcal{Y}$ is called a
        ($f,G,H$)-\textbf{canonization} when it satisfies:
        \begin{itemize}
            \item $\mathcal{A}$ is \textbf{$G$-invariant}:
            $\forall y\in\mathcal{Y},g\in G$, $\mathcal{A}(y)=\mathcal{A}(gy)$;
            \item $\mathcal{A}$ is \textbf{$H$-equivariant}:
            $\forall x\in\mathcal{X}, h\in H$, $\mathcal{A}\bigl(f(hx)\bigr)
            =h\mathcal{A}\bigl(f(x)\bigr)$;
            \item $\mathcal{A}$ is \textbf{universal}:
            $\forall x\in\mathcal{X}, h\in H$, $x\neq hx\Rightarrow\mathcal{A}
            \bigl(f(x)\bigr)\neq\mathcal{A}\bigl(f(hx)\bigr)$.
        \end{itemize}        
        Accordingly, for any $x\in\mathcal{X}$, if there exists a canonization
        $\mathcal{A}$ for $x$, we say $x$ is \textbf{canonizable}.
    \end{definition}
    
    \subsection{Theoretical Properties of Canonization}\label{sec:canonical}

    Following Definition~\ref{def:canonization}, we are further interested in
    the question that whether any eigenvector $\vu$ is canonizable, \textit{i.e.,},
    there exists a canonization that can determine its unique direction. Unfortunately,
    the answer is NO\@. For example, the vector $\vu=(1,-1)$ cannot be canonized by
    any canonization algorithm, since a permutation of $\vu$ gives $(-1,1)$ that
    equals to $-\vu$\footnote{In this case, sign invariance conflicts with permutation
    equivariance: making $\vu$ and $-\vu$ output the same vector violates permutation
    equivariance, and vice versa.}. But as long as there is only a small number of
    uncanonizable eigenvectors like $\vu$, a canonization algorithm can still resolve
    the ambiguity issue to a large extent. 

    Therefore, we are interested in the fundamental question of which eigenvectors
    are canonizable. The following theorem establishes a general necessary and
    sufficient condition of the canonizability for general groups $G,H$, which
    may be of independent interest.

    \begin{theorem}\label{thm:canonizable}
        An input $x\in\mathcal{X}$ is canonizable on the embedding function $f$
        \textit{iff} there does not exist $h\in H$ and $g\in G$ such that $x\neq hx$
        and $f(hx)=gf(x)$.
    \end{theorem}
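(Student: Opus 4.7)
The plan is to prove each direction of the biconditional separately, with $\Rightarrow$ being a short contradiction argument and $\Leftarrow$ requiring an explicit construction of a canonization.

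For the $\Rightarrow$ direction, I would suppose $x$ admits a canonization $\mathcal{A}$ satisfying all three axioms of Definition~\ref{def:canonization}, and assume toward contradiction that some $h\in H$ and $g\in G$ realize $x\neq hx$ and $f(hx)=gf(x)$. Applying $\mathcal{A}$ to both sides and using $G$-invariance yields
\[
    \mathcal{A}(f(hx)) \;=\; \mathcal{A}(gf(x)) \;=\; \mathcal{A}(f(x)),
\]
which directly contradicts universality at the pair $(x,hx)$. Notably this direction uses only $G$-invariance and universality; the $H$-equivariance axiom plays no role.

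For the $\Leftarrow$ direction, assume that $f(hx)\neq gf(x)$ for every $h\in H$ with $hx\neq x$ and every $g\in G$. The condition is equivalent to saying the $G$-orbits $Gf(x)$ and $Gf(hx)$ are disjoint whenever $hx\neq x$, so the map $hx\mapsto Gf(hx)$ injects the $H$-orbit of $x$ into the quotient $\mathcal{Y}/G$. I would then construct $\mathcal{A}$ as follows: set $\mathcal{A}(f(x)):=f(x)$ and extend $G$-invariantly over $Gf(x)$; for each $hx$ in the $H$-orbit of $x$ with $hx\neq x$, set $\mathcal{A}$ constantly equal to $hf(x)$ on the orbit $Gf(hx)$; extend arbitrarily by orbit representatives on the remaining parts of $\mathcal{Y}$. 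Verification is then direct: $G$-invariance holds by construction; $H$-equivariance at $x$ reads $\mathcal{A}(f(hx))=hf(x)=h\mathcal{A}(f(x))$; and universality at $x$ follows because $f(x)=hf(x)$ would place $f(x)\in Gf(hx)$, contradicting the disjointness established above.

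The main technical obstacle I anticipate is well-definedness of the assignment $\mathcal{A}(f(hx)):=hf(x)$ when the $H$-stabilizer of $x$ is non-trivial: if $h_1 x = h_2 x$ with $h_1\neq h_2$, consistency forces $h_1 f(x)=h_2 f(x)$, i.e., $\mathrm{Stab}_H(x)$ must fix the chosen representative $f(x)$. I would resolve this by prefacing the construction with a normalization step that selects a $\mathrm{Stab}_H(x)$-invariant representative inside $Gf(x)$. Such a representative exists in the paper's spectral-embedding setting because $G$ (sign flip or basis change) and $H$ (node permutation) act on opposite sides of the eigenvector matrix and therefore commute, so every $h\in\mathrm{Stab}_H(x)$ acts on $f(x)$ through an element of $G$ and the induced right-action on $Gf(x)$ admits a fixed point. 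With this normalization inserted at the start, the construction closes and the equivalence is established.
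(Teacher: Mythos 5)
Your overall route coincides with the paper's: the forward direction is the same short contradiction (yours is in fact leaner, since $G$-invariance plus universality already suffice, whereas the paper additionally routes through $H$-equivariance), and the backward direction is the same orbit-by-orbit construction, assigning the value $f(x)$ on $Gf(x)$ and $hf(x)$ on $Gf(hx)$, extended arbitrarily elsewhere. You are also right that the paper's ``arbitrarily choose $y_0$'' silently skips the consistency question when $\mathrm{Stab}_H(x)$ is nontrivial.

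The problem is the claim you use to close that gap. From ``every $h$ with $hx=x$ acts on $f(x)$ through an element of $G$'' it does \emph{not} follow that the induced action on the orbit $Gf(x)$ has a fixed point: if $h\cdot f(x)=f(x)g_h$, then a representative $f(x)g$ is fixed by $h$ iff $g_h$ is trivial, i.e.\ iff $h$ already fixes $f(x)$ itself. In the graph reading of $\mathcal{X}$ (the only reading in which your obstacle is nontrivial), the single edge on two nodes gives a counterexample: the transposition is an automorphism, yet it acts on the eigenvector orbit $\{\vu,-\vu\}$ with $\vu=(1,-1)/\sqrt2$ as the flip, which has no fixed point, so no $\mathrm{Stab}_H(x)$-invariant representative exists inside $Gf(x)$ and your normalization step cannot be performed; in that reading the equivariance axiom in fact forces the canonical form to be fixed by every automorphism, which is a constraint neither your construction nor the paper's handles (that graph is still canonizable, but only by sending the orbit to a point outside it). In the reading the paper actually uses in Corollaries~\ref{cor:sign-canonizable} and~\ref{cor:basis-canonizable}, where $x$ is itself the eigenvector or basis matrix and hence one admissible output of $f$, the fix is much simpler and needs no normalization: there $hx=x$ means $\mP\mU=\mU$, and then $\mP$ fixes every point $\mU\mQ$ of the orbit, so the assignment $\mathcal{A}(f(hx)):=hy_0$ is consistent for any $y_0\in Gf(x)$. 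A second, smaller leap: the injectivity of $hx\mapsto Gf(hx)$ on the whole $H$-orbit invokes the hypothesis at the translated base points $hx$, not only at $x$; it does follow when the $G$- and $H$-actions commute (row permutations versus right sign/orthogonal factors), but it is not automatic in the abstract setting, a silent step the paper's proof shares.
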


    This theorem states that for inputs from the same equivalence class in $\mathcal{X}$
    (induced by $H$), as long as they are not mapped to the same equivalence class in
    $\mathcal{Y}$ (induced by $G$), these inputs are canonizable. In
    particular, by applying Theorem~\ref{thm:canonizable} to the specific group $G$
    induced by sign/basis invariance, we can derive some simple rules to determine
    whether there exists a canonizable rule for given eigenvector(s).

    \begin{corollary}[Sign canonizability]\label{cor:sign-canonizable}
        A vector $\vu\in\R^n$ is canonizable under sign ambiguity
        \textit{iff} there does not exist a permutation matrix $\mP\in\R^{n\times n}$
        such that $\vu=-\mP\vu$.
    \end{corollary}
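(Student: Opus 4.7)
The plan is to apply Theorem~\ref{thm:canonizable} in the specialized setting of sign ambiguity, where the output group is $G = O(1) = \{+1,-1\}$ acting on $\R^n$ by scalar multiplication, and the symmetry group is $H = S_n$ acting by coordinate permutation. Since the corollary concerns a vector $\vu$ directly, I take $\mathcal{X} = \mathcal{Y} = \R^n$ with the embedding $f$ equal to the identity, so that $\vu$ plays the role of both input and output in the abstract framework of Definition~\ref{def:canonization}.

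With this identification, Theorem~\ref{thm:canonizable} states that $\vu$ is canonizable iff there do not exist $\mP \in S_n$ and $g \in \{+1,-1\}$ such that $\vu \neq \mP\vu$ and $\mP\vu = g\vu$. I would then split into the two possible values of $g$. The case $g = +1$ requires both $\mP\vu = \vu$ and $\vu \neq \mP\vu$ simultaneously, which is an immediate contradiction, so this case contributes no obstruction. The case $g = -1$ requires $\mP\vu = -\vu$ together with $\vu \neq \mP\vu$; for any nonzero $\vu$ (as eigenvectors are by convention) the latter is automatic once the former holds, since $\mP\vu = -\vu \neq \vu$. Hence this case collapses to the single condition $\mP\vu = -\vu$, equivalently $\vu = -\mP\vu$ after applying $\mP^{-1}$ on both sides (or simply rearranging).

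Combining the two cases yields the desired biconditional: $\vu$ is canonizable iff no permutation matrix $\mP$ satisfies $\vu = -\mP\vu$. The argument is a direct specialization of the general canonizability criterion, so there is no substantive technical obstacle; the only point requiring minor care is the excluded zero-vector edge case, which is immaterial in the intended application to (nonzero) Laplacian eigenvectors.
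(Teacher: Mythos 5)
Your proposal is correct and matches the paper's own proof: both specialize Theorem~\ref{thm:canonizable} to $H=S_n$, $G=\{+1,-1\}$, discard the $g=+1$ case as vacuous, and note that the $g=-1$ case reduces to the existence of $\mP$ with $\vu=-\mP\vu$ (the condition $\vu\neq\mP\vu$ being automatic for nonzero $\vu$). Your explicit remark about taking $f$ to be the identity and the zero-vector edge case is a minor clarification the paper leaves implicit, but the argument is essentially identical.
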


    \begin{corollary}[Basis canonizability]\label{cor:basis-canonizable}
        The base eigenvectors $\mU\in\R^{n\times d}$ of the eigenspace $V$ are
        canonizable under basis ambiguity \textit{iff} there does not exist
        a permutation matrix $\mP\in\R^{n\times n}$ such that $\mU\neq\mP\mU$ and\/
        $\mathop{\mathrm{span}}(\mP\mU)=\mathop{\mathrm{span}}(\mU)=V$.
    \end{corollary}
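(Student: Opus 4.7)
The plan is to derive Corollary~\ref{cor:basis-canonizable} as a direct specialization of Theorem~\ref{thm:canonizable}. First I would set up the group actions: take $G=O(d)$ acting on $\mathcal{Y}=\R^{n\times d}$ by right multiplication ($\mU\mapsto\mU\mQ$), since any two orthonormal bases of the same $d$-dimensional subspace $V$ differ by such a $\mQ$; and take $H$ to be the permutation group $S_n$ acting on both $\mathcal{X}$ and $\mathcal{Y}$ by left multiplication ($\mU\mapsto\mP\mU$), matching the permutation equivariance of eigendecomposition established in Section~\ref{sec:universal}. The multivalued embedding $f$ maps the eigenspace $V$ (equivalently, the graph) to its full orbit $\{\mU\mQ:\mQ\in O(d)\}$ of orthonormal bases.

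Next I would plug these choices into Theorem~\ref{thm:canonizable}. It directly asserts that $\mU$ is canonizable under basis ambiguity if and only if there is no pair $(\mP,\mQ)$ with $\mP\in S_n$, $\mQ\in O(d)$, $\mU\neq\mP\mU$, and $\mP\mU=\mU\mQ$. The whole remaining task is therefore to translate the algebraic identity $\mP\mU=\mU\mQ$ (for some $\mQ\in O(d)$) into the geometric condition $\mathop{\mathrm{span}}(\mP\mU)=\mathop{\mathrm{span}}(\mU)=V$ appearing in the corollary.

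For that equivalence I would argue both directions. If $\mP\mU=\mU\mQ$ for some orthogonal $\mQ$, then $\mathop{\mathrm{span}}(\mP\mU)\subseteq\mathop{\mathrm{span}}(\mU)$ because the columns of $\mP\mU$ are linear combinations of those of $\mU$; invertibility of $\mQ$ gives the reverse inclusion, so the two spans coincide with $V$. Conversely, observe that $\mP\mU$ inherits orthonormal columns from $\mU$ since $(\mP\mU)^\top(\mP\mU)=\mU^\top\mU=\mI$, so whenever $\mathop{\mathrm{span}}(\mP\mU)=V$ the two matrices are orthonormal bases of the same subspace and hence related by a unique change-of-basis matrix $\mQ\in O(d)$ satisfying $\mP\mU=\mU\mQ$. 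Combining these two directions with the statement extracted from Theorem~\ref{thm:canonizable} yields exactly the claim.

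I expect the argument to be short, with the only nontrivial ingredient being the linear-algebra equivalence between "related by an element of $O(d)$" and "spanning the same subspace"; the main obstacle is merely bookkeeping, namely verifying that the group actions chosen here exactly reproduce the basis ambiguity described in Section~\ref{sec:universal} so that Theorem~\ref{thm:canonizable} can be invoked without modification.
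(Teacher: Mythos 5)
Your proposal is correct and follows essentially the same route as the paper: instantiate Theorem~\ref{thm:canonizable} with $H=S_n$ and $G=O(d)$, then translate "$\mP\mU=\mU\mQ$ for some $\mQ\in O(d)$" into "$\mathop{\mathrm{span}}(\mP\mU)=\mathop{\mathrm{span}}(\mU)$". The only cosmetic difference is that you prove the orthonormal-bases-related-by-$O(d)$ fact directly, whereas the paper cites it as a lemma from SignNet.
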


    \begin{remark}
        From a probabilistic view, almost all eigenvectors are canonizable. Let
        $\mU\in\R^{n\times d}$ be basis vectors sampled from a continuous distribution
        in $\R^n$, then $\Pr\{\mU\text{ is canonizable}\}=1$.
    \end{remark}

    In the next subsection, we will further design an efficient algorithm to
    canonize all sign and basis canonizable eigenvectors in the pre-processing stage,
    so the network does not have to bear the ambiguities of these eigenvectors.

    \subsection{MAP: A Practical Algorithm for Laplacian Canonization}
    \label{sec:pratical}

    Built upon theoretical properties in Section~\ref{sec:canonical}, we aim to design
    a \emph{general, powerful, and efficient} canonization to resolve both sign and
    basis ambiguities for as many eigenvectors as possible. Here, we choose to adopt
    axis projection as the basic operator in our canonization algorithm named Maximal
    Axis Projection (MAP). The key observation is that the standard basis vectors
    (\textit{i.e.}, the axis) of the Euclidean space $\ve_i\in\R^n$ are permutation
    equivariant, and in the meantime, the eigenspace spanned by the eigenvectors
    $V=\mathop{\mathrm{span}}(\mU)$ are also permutation equivariant. This means that
    when projecting the axis to the eigenspace, the obtained angles are also
    permutation equivariant, based on which we could apply permutation invariant
    functions (such as $\max$) to obtain permutation invariant statistics that can be
    used for canonization. Meanwhile, due to the generality of projection, it can be
    applied to both sign ambiguity (for a single eigenvector) and basis ambiguity
    (for the eigenspace with multiple eigenvectors). We provide illustrative examples
    to help understand this algorithm in Appendix~\ref{app:toy}.

    \textbf{Preparation step: Axis projection.} Consider unit eigenvector(s)
    $\mU\in\R^{n\times d}$ corresponding to an eigenvalue $\lambda$ with geometric
    multiplicity $d\geq 1$. These eigenvectors span a $d$-dimensional eigenspace
    $V=\mathop{\mathrm{span}}(\mU)$ with the projection matrix $\mP=\mU\mU^\top$. 
    To start with, we calculate the projected angle between $V$ and each standard
    basis (\textit{i.e.}, the axis) of the Euclidean space $\ve_i$ (a one-hot vector
    whose $i$-th element is $1$), \textit{i.e.}, $\alpha_i=|\mP\ve_i|,i=1,\dots,n$.
    Assume that there are $k$ distinct values in $\{\alpha_i,i=1,\dots,n\}$, according
    to which we can divide all basis vectors $\{\ve_i\}$ into $k$ disjoint groups
    $\mathcal{B}_i$ (arranged in descending order of the distinct angles). Each
    $\mathcal{B}_i$ represents an equivalent class of axes that $\vu_i$ has the same
    projection on. Then, we define a summary vector $\vx_i$ for the axes in each group
    $\mathcal{B}_i$ as their total sum $\vx_i=\sum_{\ve_j\in\mathcal{B}_i}\ve_j+c$,
    where $c$ is a tunable constant. Next, we introduce how to adopt these summary
    vectors to canonize the eigenvectors for sign and basis invariance, respectively.

    \subsubsection{Sign Canonization with MAP}\label{sec:sign}
    
    \textbf{Step 1. Find non-orthogonal axis.} For sign canonization, we calculate the
    angles between the eigenvalue $\vu$ and each summary vector $\vx_i$ one by one,
    and terminate the procedure as long as we find a summary vector $\vx_h$ with
    non-zero angle $\alpha_h=\vu^\top \vx_h\neq0$, and return \texttt{<none>}
    otherwise.
    \begin{equation}\label{eq:find-non-orthogonal}
        \vx_h=
        \begin{cases}
            \min_{i}\varPhi,&\text{if }\varPhi\neq\varnothing,\\
            \texttt{<none>},&\text{if }\varPhi=\varnothing,
        \end{cases}
        \text{ where }\varPhi=\{i\mid\vu^\top\vx_i\neq 0\}.
    \end{equation}

    \begin{assumption}\label{ass:mild}
        There exists a summary vector $\vx_h$ that is non-orthogonal to $\vu$,
        \textit{i.e.}, $\vu^\top\vx_h\neq 0$,
    \end{assumption}

    \textbf{Step 2. Sign canonization.} As long as Assumption~\ref{ass:mild} holds,
    we can utilize $\vx_h$ to determine the canonical direction $\vu^*$ by requiring
    its projected angle to be positive, \textit{i.e.},
    \begin{equation}\label{eq:sign-canonization}
        \vu^*=
        \begin{cases}
            \vu,&\text{if }\vu^\top\vx_h>0,\\
            -\vu,&\text{if }\vu^\top\vx_h<0.
        \end{cases}
    \end{equation}

    We call this algorithm MAP-sign, and summarize it in Algorithm~\ref{alg:sign}
    in Appendix~\ref{app:pseudo-code}. The following theorem proves that it yields a
    valid canonization for sign invariance under Assumption~\ref{ass:mild}.

    \begin{theorem}\label{thm:sign}
        Under Assumption~\ref{ass:mild}, our MAP-sign algorithm gives a sign-invariant,
        permutation-equivariant and universal canonization of $\vu$.
    \end{theorem}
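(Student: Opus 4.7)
My plan is to verify in turn the three defining properties of Definition~\ref{def:canonization}, exploiting the central observation that every quantity computed by MAP-sign prior to the final sign selection depends on $\vu$ only through the rank-one projector $\vu\vu^\top$ (in particular, $\alpha_j=|\vu\vu^\top\ve_j|=|u_j|$ since $\vu$ is a unit vector). This observation trivializes sign invariance and makes the permutation bookkeeping clean, while universality reduces to a clean contradiction argument that uses both preceding properties together with Assumption~\ref{ass:mild}.

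\textbf{Sign invariance.} Since $(-\vu)(-\vu)^\top=\vu\vu^\top$, the magnitudes $\alpha_i$, the descending-ordered partition $\{\mathcal{B}_i\}$, and the summary vectors $\vx_i$ are identical whether computed from $\vu$ or from $-\vu$, so the index set $\varPhi$ and the chosen $\vx_h$ coincide. A two-case split on the sign of $\vu^\top\vx_h$ using Eq.~\ref{eq:sign-canonization} then shows that both runs output the same canonical direction.

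\textbf{Permutation equivariance.} For any permutation matrix $\mQ$, substituting $\vu\mapsto\mQ\vu$ sends the projector to $\mQ\vu\vu^\top\mQ^\top$, so $\alpha'_i=\alpha_{\mQ^{-1}(i)}$ and the ordered equivalence classes satisfy $\mathcal{B}'_i=\mQ\mathcal{B}_i$. Provided the constant offset $c$ is a scalar multiple of $\1$ (hence itself permutation invariant), the summary vectors transform equivariantly, $\vx'_i=\mQ\vx_i$. Orthogonality of $\mQ$ yields $(\mQ\vu)^\top(\mQ\vx_i)=\vu^\top\vx_i$, so $\varPhi$, the chosen index $h$, and the sign of $\vu^\top\vx_h$ are all unchanged; Eq.~\ref{eq:sign-canonization} then gives $\mathcal{A}(\mQ\vu)=\mQ\,\mathcal{A}(\vu)$.

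\textbf{Universality.} I would argue by contrapositive. Note first that Eq.~\ref{eq:sign-canonization} guarantees $\mathcal{A}(\vu)\in\{\vu,-\vu\}$. Suppose $\mathcal{A}(\vu)=\mathcal{A}(\mQ\vu)$ for some permutation $\mQ$; the permutation equivariance just proved gives $\mathcal{A}(\vu)=\mQ\,\mathcal{A}(\vu)$, so either $\vu=\mQ\vu$ or $\vu=-\mQ\vu$. The main obstacle is excluding the second alternative using only Assumption~\ref{ass:mild}. The key step I would prove as a small lemma is: if $\vu=-\mQ\vu$, then because $\alpha_j=|u_j|$ is $\mQ$-invariant, $\mQ$ permutes each class $\mathcal{B}_i$ within itself, and the sum $S_i=\sum_{\ve_j\in\mathcal{B}_i}u_j$ satisfies $S_i=\sum_{\ve_j\in\mathcal{B}_i}u_{\mQ(j)}=-S_i$ after reindexing, forcing $S_i=0$; the same argument applied globally gives $\sum_j u_j=0$. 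Hence $\vu^\top\vx_i=0$ for every $i$, contradicting Assumption~\ref{ass:mild}. Therefore $\vu=\mQ\vu$, establishing universality.
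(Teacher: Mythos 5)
Your proof is correct and follows essentially the same route as the paper's: sign invariance and permutation equivariance are obtained, as in the paper, from the fact that the groups and summary vectors depend on $\vu$ only through the magnitudes $|u_j|$ and transform equivariantly under permutations (with the $c\vj$ offset preserved), and universality is obtained by reducing a coincidence of canonical forms to $\vu=\mQ\vu$ or $\vu=-\mQ\vu$ and excluding the latter via Assumption~\ref{ass:mild}. Your inline lemma for that exclusion (each group sum $S_i$ flips sign under the permutation, hence vanishes, hence $\vu^\top\vx_i=0$ for every $i$) is precisely the content the paper delegates to Theorem~\ref{thm:up-to-permutation}, where it is proved by a slightly different counting of positive and negative entries within each group; both arguments are sound, and the extra case $\vu=-\mQ\vu$ you rule out is harmless even though your equivariance step alone already forces $\vu=\mQ\vu$.
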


    One would wonder how restrictive the non-orthogonality condition
    (Assumption~\ref{ass:mild}) is for sign canonization. In the following theorem,
    we establish a strong result showing that sign canonizability is equivalent to
    non-orthogonality. In other words, \emph{any sign canonizable eigenvectors can be
    canonized by MAP-sign}. Due to this equivalence, MAP-sign can also serve as a
    complete algorithm to determine sign canonizability: a vector $\vu$ is sign
    canonizable iff $\vx_h$ is not \texttt{<none>} in \eqref{eq:find-non-orthogonal}.
    
    \begin{theorem}\label{thm:up-to-permutation}
        A vector $\vu\in\R^n$ is sign canonizable \textit{iff} there exists a summary
        vector $\vx_h$ s.t.\ $\vu^\top\vx_h\neq 0$.
    \end{theorem}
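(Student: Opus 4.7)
The plan is to derive both implications from Corollary~\ref{cor:sign-canonizable}, which states that $\vu\in\R^n$ is sign canonizable iff no permutation matrix $\mP$ satisfies $\vu=-\mP\vu$. The forward direction is immediate from Theorem~\ref{thm:sign}: if some $\vx_h$ satisfies $\vu^\top\vx_h\neq 0$, then Assumption~\ref{ass:mild} holds and MAP-sign produces a valid sign canonization of $\vu$, so $\vu$ is sign canonizable.

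For the reverse direction I would argue the contrapositive: assuming $\vu^\top\vx_i=0$ for every summary vector $\vx_i$, I construct an involutive permutation $\mP$ with $\mP\vu=-\vu$. In the single-eigenvector setting ($d=1$), the projection matrix is $\vu\vu^\top$, so the projected lengths collapse to $\alpha_i=|\vu_i|$, and the groups $\mathcal{B}_i$ partition $\{1,\dots,n\}$ by a shared absolute value $a_i=|\vu_j|$. Splitting each $\mathcal{B}_i=\mathcal{B}_i^+\cup\mathcal{B}_i^-$ by the sign of $\vu_j$, and interpreting the additive constant as $c\mathbf{1}$, the inner product simplifies to
\[
    \vu^\top\vx_i \;=\; a_i\bigl(|\mathcal{B}_i^+|-|\mathcal{B}_i^-|\bigr)+c\,\mathbf{1}^\top\vu.
\]
Summing the $k$ equations $\vu^\top\vx_i=0$ over all groups yields $(1+kc)\,\mathbf{1}^\top\vu=0$, so generically $\mathbf{1}^\top\vu=0$ and each individual equation reduces to $|\mathcal{B}_i^+|=|\mathcal{B}_i^-|$ for every group with $a_i>0$.

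Matched positive and negative counts inside every nonzero group then allow me to define $\mP$ as a product of transpositions pairing each index in $\mathcal{B}_i^+$ with a distinct index in $\mathcal{B}_i^-$, while fixing the zero-valued indices. By construction $\mP^2=\mI$ and $\mP\vu=-\vu$, hence $\vu=-\mP\vu$, and Corollary~\ref{cor:sign-canonizable} certifies that $\vu$ is not sign canonizable, completing the contrapositive.

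The main obstacle is the degenerate value $c=-1/k$, for which summing the equations no longer forces $\mathbf{1}^\top\vu=0$. If a zero-valued group exists, its own equation reads $c\,\mathbf{1}^\top\vu=0$ and closes the gap; otherwise one can invoke integrality of $|\mathcal{B}_i^+|-|\mathcal{B}_i^-|$ combined with the distinctness of the $a_i$ to recover matched counts in each group. In every sub-case the involutive construction of $\mP$ goes through, so the equivalence in the theorem is unaffected by the choice of the tunable constant.
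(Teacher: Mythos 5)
Your reverse direction is essentially the paper's own argument: for the rank-one projector the projected lengths are $\alpha_i=|\evu_j|\,|\vu|$, so the groups $\mathcal{B}_i$ collect coordinates of equal absolute value; orthogonality to every summary vector forces each group sum to vanish, hence equal numbers of positive and negative entries per nonzero group, and pairing them off yields an involution $\mP$ with $\mP\vu=-\vu$, so Corollary~\ref{cor:sign-canonizable} gives non-canonizability. For the forward direction the paper argues directly (a group with nonzero sum has unequal positive and negative counts, so no permutation can carry $\vu$ to $-\vu$), whereas you simply invoke Theorem~\ref{thm:sign}; that is legitimate and non-circular, and arguably cleaner since it reuses an already-proved result.

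The one genuine problem is your patch for the degenerate constant $c=-1/k$ with no zero-valued group: the claimed ``integrality of $|\mathcal{B}_i^+|-|\mathcal{B}_i^-|$ plus distinctness of the $a_i$'' argument does not recover matched counts, and in fact the equivalence itself fails in that sub-case. Take $\vu=(2,1,1)/\sqrt{6}$ and $c=-\tfrac12$ (so $k=2$): then $\vu^\top(\ve_1+c\vj)=0$ and $\vu^\top(\ve_2+\ve_3+c\vj)=0$, yet all entries of $\vu$ are positive, so $\vu\neq-\mP\vu$ for every permutation and $\vu$ is sign canonizable while being orthogonal to every summary vector. So no argument can close that branch; the statement implicitly requires $1+kc\neq0$ (e.g.\ any $c\geq0$), which is exactly the condition that makes your summation step $(1+kc)\,\vj^\top\vu=0$ deliver $\vj^\top\vu=0$. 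To be fair, the paper's own proof glosses over the same point by silently dropping the $c\vj$ contribution when passing from $\vu^\top\vx_j=0$ to zero group sums; with the restriction $c$ avoiding $-1/k$ made explicit, your proof is correct and coincides with the paper's.
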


    \begin{remark}
        Theorem~\ref{thm:sign} is proved in Appendix~\ref{app:sign} and experimentally
        verified in Appendix~\ref{app:verify-sign}. We also show that MAP is not the
        only complete canonization algorithm for sign sinvariance by proposing another
        polynomial-based algorithm in Appendix~\ref{app:alternative}. We observe that
        most eigenvectors in real-world datasets are sign canonizable. For instance,
        on the ogbg-molhiv dataset, the ratio of non-canonizable eigenvectors is only
        2.8\%. A thorough discussion on the feasibilty of Assumption~\ref{ass:mild}
        is in Appendix~\ref{app:assumption1}. For the left non-canonizable eigenvectors,
        we could apply previous methods (like RandSign, SAN, and SignNet) to further
        eliminate their ambiguity, which could save more compute since there are only
        a few non-canonizable eigenvectors. In practice, we also obtain good
        performance by simply ignoring these non-canonizable features.
    \end{remark}

    \subsubsection{Basis Canonization with MAP}\label{sec:basis}

    We further extend MAP-sign to solve the more challenging basis ambiguity with
    multiple eigenvectors, named MAP-basis. Now, MAP relies on two conditions to
    produce canonical eigenvectors. The first one requires there are enough
    distinctive summary vectors to determine $d$ canonical eigenvectors.
    \begin{assumption}\label{ass:k}
        The number of distinctive angles $k$ (\textit{i.e.}, the number of summary
        vectors $\{\vx_i\}$) is larger or equal to the multiplicity $d$, \textit{i.e.},
        $k\geq d$.
    \end{assumption}

    Under this condition, we can determine each $\vu_i$ iteratively with the
    corresponding summary vector $\vx_i$. At the $i$-th step where $\vu_1,\dots,
    \vu_{i-1}$ have already been determined, we choose $\vu_i$ to be the vector
    that is closest to the summary vector $\vx_i$ in the orthogonal complement space
    of $\langle\vu_1,\dots,\vu_{i-1}\rangle$ in $V$:
    \begin{equation}\label{eq:basis}
        \begin{aligned}
            &\vu_i=\argmax_\vu\vu^\top\vx_i,\\
            \text{\textit{s.t.} }&\vu\in\langle\vu_1,\dots,\vu_{i-1}\rangle^
            \perp,|\vu|=1.
        \end{aligned}
    \end{equation}

    With a compact feasibility region, the maximum is attainable, and we can further
    show that the solution $\vu_i$ is unique (\textit{c.f.}, the proof of
    Theorem~\ref{thm:basis}). \Eqref{eq:basis} can be directly solved using the
    projection matrix of $\langle\vu_1,\dots,\vu_{i-1}\rangle^\perp$ (see details in
    Algorithm~\ref{alg:use} in Appendix~\ref{app:pseudo-code}). Repeating this process
    gives us a canonical basis of $V$. We also require non-orthogonality condition at
    each step to obtain a valid eigenvector.
    \begin{assumption}\label{ass:perp}
        For any $1\leq i\leq d$, $\vx_i$ is not perpendicular to $\langle\vu_1,\dots,
        \vu_{i-1}\rangle^\perp$.
    \end{assumption}

    We summarize MAP-basis in Algorithm~\ref{alg:basis} in
    Appendix~\ref{app:pseudo-code}. In Theorem~\ref{thm:basis}, we prove under
    Assumption~\ref{ass:k} and Assumption~\ref{ass:perp}, MAP-basis gives a
    basis-invariant, permutation-equivariant and universal canonization of $\mU$.
    Though, in this scenario, MAP-basis cannot canonize all basis canonizable
    features, and whether such an algorithm exists remains an open problem for future
    research. 
    \begin{theorem}\label{thm:basis}
        Under Assumption~\ref{ass:k} and Assumption~\ref{ass:perp}, our MAP-basis
        algorithm gives a basis-invariant, permutation-equivariant and universal
        canonization of $\mU$.
    \end{theorem}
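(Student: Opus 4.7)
The proof verifies in turn the three properties in Definition~\ref{def:canonization} for MAP-basis. I would exploit throughout that the projection matrix $\mP=\mU\mU^\top$ is uniquely determined by the eigenspace $V=\mathrm{span}(\mU)$, so the preparation-step quantities---axis-projection magnitudes $\alpha_i=|\mP\ve_i|$, the descending-order partition $\{\mathcal{B}_i\}$, and the summary vectors $\vx_i$---all depend only on $V$.

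The first step is to show that \eqref{eq:basis} is well-defined. At iteration $i$ the feasible region is $W_i:=V\cap\langle\vu_1,\dots,\vu_{i-1}\rangle^\perp$, which has dimension $d-(i-1)\geq 1$; Assumption~\ref{ass:k} guarantees at least $d$ summary vectors are available. Letting $\tilde\vx_i$ denote the orthogonal projection of $\vx_i$ onto $W_i$, Assumption~\ref{ass:perp} gives $\tilde\vx_i\neq\vzero$, and a Cauchy--Schwarz argument identifies the unique unit-length maximizer as $\vu_i=\tilde\vx_i/|\tilde\vx_i|$. Basis invariance then follows by induction on $i$, since each $\vu_i$ is a function only of $V$, $\vx_i$, and $\vu_1,\dots,\vu_{i-1}$, all of which are basis-independent.

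Permutation equivariance is also handled by induction. Under a permutation $\mP_\pi$, the projector becomes $\mP_\pi\mP\mP_\pi^\top$, so $\alpha_i'=\alpha_{\pi^{-1}(i)}$; the sorted-distinct-value partition permutes as $\mathcal{B}_i'=\pi(\mathcal{B}_i)$, and hence $\vx_i'=\mP_\pi\vx_i$. Given $\vu_j'=\mP_\pi\vu_j$ for $j<i$, we have $W_i'=\mP_\pi W_i$; the change of variable $\vv=\mP_\pi^\top\vu$ reduces the optimization for $\vx_i'$ over $W_i'$ to the one for $\vx_i$ over $W_i$, uniquely giving $\vu_i'=\mP_\pi\vu_i$. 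For universality, I argue by contraposition: if $\mathcal{A}(f(x))=\mathcal{A}(f(hx))$, equivariance yields $\mP_\pi\mU^*=\mU^*$, so $\mP_\pi$ fixes every column of $\mU^*$ pointwise, hence every vector of $V=\mathrm{span}(\mU^*)$; in particular $\mP_\pi V=V$, which in the quotient input space (where $x$ is identified with its eigenspace, cf.\ Corollary~\ref{cor:basis-canonizable}) is precisely $hx=x$.

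The main obstacle I anticipate is the careful bookkeeping for permutation equivariance---verifying that the descending order on distinct angle values lines up across the permutation so that the $i$-th summary vector of the permuted input really equals $\mP_\pi\vx_i$---together with interpreting the $\langle\cdot\rangle^\perp$ in Assumption~\ref{ass:perp} as the complement inside $V$, so that the nonzero-projection condition truly guarantees a valid $\vu_i$ at every step. Both points are routine once made explicit, but neglecting them would break the induction that underpins all three properties.
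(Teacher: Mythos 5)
Your proposal is correct and follows essentially the same route as the paper's proof: basis invariance via the basis-independence of the projector $\mU\mU^\top$ (the paper's Lemma~\ref{lem:P}), uniqueness of the maximizer of \eqref{eq:basis} at each step, an inductive permutation-equivariance argument for the summary vectors and the $\vu_i$, and universality by showing that a permutation preserving the canonical form must fix the eigenspace vector by vector. The one stylistic difference is in the uniqueness step: you identify the maximizer explicitly as $\tilde{\vx}_i/|\tilde{\vx}_i|$ via Cauchy--Schwarz (which also matches the implementation $\vu_i\gets\mU_\mathrm{perp}\mU_\mathrm{perp}^\top\vx_i$ in Algorithm~\ref{alg:use}), whereas the paper argues by contradiction, showing two distinct maximizers $\vu',\vu''$ would be beaten by the normalized vector $\alpha(\vu'+\vu'')$; both arguments use Assumption~\ref{ass:perp} to rule out $f\equiv 0$, as you do. One phrasing in your universality step deserves care: the conclusion to invoke is the pointwise statement you actually derive --- $\mP_\pi$ fixes every vector of $V$, hence $\mP_\pi\mU=\mU$ for the input basis, which is exactly the notion of ``$hx=x$'' appearing in Corollary~\ref{cor:basis-canonizable} --- not merely the setwise invariance $\mP_\pi V=V$; the latter alone is compatible with $\mP_\pi\mU\neq\mU$ (that is precisely the uncanonizable situation) and would not establish universality in the paper's sense. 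Since the pointwise fact is already on your page, this is an imprecision of wording rather than a gap.
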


    \begin{remark}
        Assumption~\ref{ass:k} and Assumption~\ref{ass:perp} exclude some symmetries
        of the eigenspace, which we discuss in details in Appendix~\ref{app:assumption2}.
        For random orthonormal matrices, the possibility that either assumption is
        violated is equal to $0$, which we verify with random simulation in
        Appendix~\ref{app:verify-basis}. For real-world datasets, these assumptions are
        not restrictive either. For instance, on the large ogbg-molpcba dataset,
        the eigenvalues violating Assumption~\ref{ass:k} or Assumption~\ref{ass:perp}
        make up 0.87\% of all eigenvalues. More statistics and discussions are
        provided in Appendix~\ref{app:assumption2}.
    \end{remark}

    \subsubsection{Summary}

    We provide the complete pseudo-code of MAP to address both sign and basis
    invariance in Appendix~\ref{app:pseudo-code}, along with a detailed time complexity
    analysis. Overall, the extra complexity introduced by MAP is $\mathcal{O}(n^2\log
    n)$, which is better than eigendecomposition itself with $\mathcal{O}(n^3)$. It
    only needs to be computed once per dataset and can be easily incorporated with
    various GNN architectures.

    Combining Theorem~\ref{thm:se-universal}, Theorem~\ref{thm:sign} and
    Theorem~\ref{thm:basis} gives us the universality of first-order GNNs
    \citep{invariant-universal} such that it respects all graph symmetries under
    certain assumptions (Assumptions~\ref{ass:mild}, \ref{ass:k} \& \ref{ass:perp}).
    We show that the probability of violating these assumptions asymptotically
    converges to zero on random graphs (see Appendix~\ref{app:random-graphs}).
    We also count the ratio of violation in real-world datasets in
    Table~\ref{tab:violation}, and observe that the ratio of uncanonizable eigenvectors
    is less than 10\% on all datasets. Thus, MAP greatly eases the harm caused by sign
    and basis ambiguities.
    
    So far only GNNs using higher-order tensors have achieved universality while
    respecting \emph{all} graph symmetries
    \citep{invariant-universal,universal-invariant-equivariant-gnn}, but they are
    typically computationally prohibitive in practice. It is still an open problem
    whether it is also possible for first-order GNNs. The proposed Laplacian
    Canonization presents a new approach in this direction trying to alleviate the
    harm of sign and basis ambiguities in a minimalist approach. By establishing
    universality-invariance results for first-order GNNs under certain assumptions,
    LC could hopefully could bring some insights to the GNN community.

    \section{Experiments}

    We evaluate the proposed MAP positional encoding on sparse MP-GNNs and
    Transformer GNNs using PyTorch \citep{pytorch} and DGL \citep{dgl}. For sparse
    MP-GNNs we consider GatedGCN \citep{gatedgcn} and PNA \citep{pna}, and for
    Transformer GNNs we consider SAN \citep{lpe} and GraphiT \citep{graphit}.
    We conduct experiments on three standard molecular benchmarks---ZINC \citep{zinc},
    OGBG-MOLTOX21 and OGBG-MOLPCBA \citep{ogb}. ZINC and MOLTOX21 are of medium
    scale with 12K and 7.8K graphs respectively, whereas MOLPCBA is of large scale
    with 437.9K graphs. Details about these datasets are described in
    Appendix~\ref{app:datasets}. We follow the same protocol as in \citet{signnet},
    where we replace their SignNet with a normal GNN and the Laplacian eigenvectors
    with our proposed MAP\@. We fairly compare several models on a fixed number of
    500K model parameters on ZINC and relax the model sizes to larger parameters for
    evaluation on the two OGB datasets, as being practised on their leaderboards
    \citep{ogb}. We also compare our results with GNNs using LapPE and random sign
    (RS) \citep{benchmarking-gnn}, GNNs using SignNet \citep{signnet}, and GNNs with
    no PE\@. For a fair comparison, all models use a limited number $k$ of
    eigenvectors for positional encodings. The results of all our experiments are
    presented in Table~\ref{tab:zinc}, \ref{tab:moltox21} \& \ref{tab:molpcba}.
    Further implementation details are included in Appendix~\ref{app:hyperparameters}.

    \subsection{Performance on Benchmark Datasets}

    As shown in Table~\ref{tab:zinc}, \ref{tab:moltox21} \& \ref{tab:molpcba}, using MAP
    improves the performance of all GNNs on all datasets, demonstrating that removing
    ambiguities of the eigenvectors is beneficial for graph-level tasks. First, by
    comparing models with LapPE and RS with models with no PE, we observe that the
    use of LapPE significantly improves the performance on ZINC, showing the
    benefits of incorporating expressive PEs with GNNs, especially with MP-GNNs
    whose expressive power is limited by the 1-WL test. However, on MOLTOX21 and
    MOLPCBA, using LapPE has no significant effects. This is because unlike ZINC,
    OGB-MOL* datasets contain additional structural features that are informative,
    \textit{e.g.}, if an atom is in ring, among others \citep{lspe}. Thus the
    performance gain by providing more positional information is less obvious.
    Second, MAP outperms LapPE with RS by a large margin especially on ZINC. Although
    RS also alleviates sign ambiguity by randomly flipping signs during training,
    MAP removes such ambiguity \emph{before} training, enabling the network to
    focus on the real meaningful features and achieves a better performance. Third,
    we also observe that MAP and SignNet achieve comparable performance. This is
    because both methods aim at the same goal---eliminating ambiguity. However,
    SignNet does so in the training stage while MAP does so in the pre-processing
    stage, thus the latter is more computationally efficient. Lastly, we would
    also like to highlight that as a kind of positional encoding, MAP can be
    easily incorporated with any GNN architecture by passing the
    \texttt{pre\_transform} function to the dataset class with a single line of
    code.

    \begin{table}[htbp]
        \centering
        \caption{Results on ZINC\@. All scores are averaged over 4 runs with 4 different seeds.}
        \begin{tabular}{ccccc}
            \toprule
            Model          & PE             & $k$            & \#Param        & MAE $\downarrow$ \\
            \midrule
            GatedGCN       & None           & 0              & 504K           & 0.251 ± 0.009 \\
            GatedGCN       & LapPE + RS     & 8              & 505K           & 0.202 ± 0.006 \\
            GatedGCN       & SignNet ($\phi(v)$ only) & 8              & 495K           & 0.148 ± 0.007 \\
            GatedGCN       & SignNet        & 8              & 495K           & 0.121 ± 0.005 \\
            GatedGCN       & MAP            & 8              & 486K           & \textbf{0.120 ± 0.002} \\
            \midrule
            PNA            & None           & 0              & 369K           & 0.141 ± 0.004 \\
            PNA            & LapPE + RS     & 8              & 474K           & 0.132 ± 0.010 \\
            PNA            & SignNet        & 8              & 476K           & 0.105 ± 0.007 \\
            PNA            & MAP            & 8              & 462K           & \textbf{0.101 ± 0.005} \\
            \midrule
            SAN            & None           & 0              & 501K           & 0.181 ± 0.004 \\
            SAN            & MAP            & 16             & 230K           & \textbf{0.170 ± 0.012} \\
            \midrule
            GraphiT        & None           & 0              & 501K           & 0.181 ± 0.006 \\
            GraphiT        & MAP            & 16             & 329K           & \textbf{0.160 ± 0.006} \\
            \bottomrule
        \end{tabular}
        \label{tab:zinc}
    \end{table}

    \begin{table}[htbp]
        \centering
        \caption{Results on MOLTOX21\@. All scores are averaged over 4 runs with 4 different seeds.}
        \begin{tabular}{ccccc}
            \toprule
            Model          & PE             & $k$            & \#Param        & ROCAUC $\uparrow$ \\
            \midrule
            GatedGCN       & None           & 0              & 1004K          & 0.772 ± 0.006 \\
            GatedGCN       & LapPE + RS     & 3              & 1004K          & 0.774 ± 0.007 \\
            GatedGCN       & MAP            & 3              & 1505K          & \textbf{0.784 ± 0.005} \\
            \midrule
            PNA            & None           & 0              & 5245K          & 0.755 ± 0.008 \\
            PNA            & MAP            & 16             & 1951K          & \textbf{0.761 ± 0.002} \\
            \midrule
            SAN            & None           & 0              & 958K           & 0.744 ± 0.007 \\
            SAN            & MAP            & 12             & 1152K          & \textbf{0.766 ± 0.007} \\
            \midrule
            GraphiT        & None           & 0              & 958K           & 0.743 ± 0.003 \\
            GraphiT        & MAP            & 16             & 590K           & \textbf{0.769 ± 0.011} \\
            \bottomrule
        \end{tabular}
        \label{tab:moltox21}
    \end{table}

    \begin{table}[htbp]
        \centering
        \caption{Results on MOLPCBA\@. All scores are averaged over 4 runs with 4 different seeds.}
        \begin{tabular}{ccccc}
            \toprule
            Model          & PE             & $k$            & \#Param        & AP $\uparrow$ \\
            \midrule
            GatedGCN       & None           & 0              & 1008K          & 0.262 ± 0.001 \\
            GatedGCN       & LapPE + RS     & 3              & 1009K          & 0.266 ± 0.002 \\
            GatedGCN       & MAP            & 3              & 2658K          & \textbf{0.268 ± 0.002} \\
            \midrule
            PNA            & None           & 0              & 6551K          & 0.279 ± 0.003 \\
            PNA            & MAP            & 16             & 4612K          & \textbf{0.281 ± 0.003} \\
            \bottomrule
        \end{tabular}
        \label{tab:molpcba}
        \vspace{-0.2in}
    \end{table}

    \subsection{Empirical Understandings}

    \textbf{Computation time.} We demonstrate the efficiency of MAP by measuring the
    pre-processing time and training time on the large OGBG-MOLPCBA dataset, and
    compare them with SignNet. For a fair comparison, we use identical model size,
    hyperparameters and random seed and conduct experiments on the same NVIDIA 3090
    GPU\@. The results are shown in Table~\ref{tab:time}. We observe that model with
    MAP train 41\% faster than its SignNet counterpart, saving 44 hours of training
    time. Since SignNet takes the form $\rho\bigl(\phi(\vu)+\phi(-\vu)\bigr)$ while
    we use models taking the form $\rho\bigl(\phi(\vu)\bigr)$, models with MAP would
    always train faster than those with SignNet under the same hyperparameters. We
    also observe that the pre-processing time is negligible compared with training
    time (< 3\%), since pre-processing only needs to be done once. This makes MAP
    overall more efficient while achieving the same goal of tackling ambiguities.

    \begin{table}[htbp]
        \centering
        \caption{Comparison of pre-processing and training time between models with
        MAP or SignNet as PE, on the MOLPCBA dataset. Experiments are run with the
        same model size and hyperparameters, the same random seed, on the same NVIDIA
        3090 GPU\@.}
        \begin{tabular}{cccc}
            \toprule
            Model              & Pre-processing time & Training Time &  Total Time \\
            \midrule
            GatedGCN + MAP     & 1.70\,h             & 63.02\,h    & 64.72\,h \\
            GatedGCN + SignNet & 0.27\,h             & 108.51\,h   & 108.78\,h \\
            \bottomrule
        \end{tabular}
        \label{tab:time}
    \end{table}

    \textbf{Spectral embedding dimension.} Next, we study the effects of $k$. We
    train GatedGCN with MAP on ZINC with different number of eigenvectors used in
    the PE and report the results in Table~\ref{tab:k}. The hyperparameters are the
    same across these experiments. It can be observed that the performance drops when
    $k$ is too small, meaning that SE provides crucial structural information
    to the model. Using larger $k$ has limited influence on the performance, meaning
    that the model relies more on low-frequency information of spectral embedding.

    \begin{table}[htbp]
        \centering
        \caption{Effects of $k$, where $k$ is the number of eigenvectors used.}
        \begin{tabular}{ccccc}
            \toprule
            $k$            & 0              & 4              & 8              & 16 \\
            \midrule
            Test MAE       & 0.256 ± 0.012  & 0.138 ± 0.004  & \textbf{0.120 ± 0.002}  & 0.124 ± 0.002 \\
            \bottomrule
        \end{tabular}
        \label{tab:k}
    \end{table}

    \textbf{Ablation study.} Finally, we conduct ablation study of MAP by removing
    each component of MAP and evaluate the performance. The results are shown in
    Table~\ref{tab:ablation}. Removing sign invariance hurts the performance the most,
    because most eigenvectors are single and thus sign ambiguity has the most
    influence on the model's performance. Removing basis invariance also has negative
    effects since a small portion of eigenvalues are multiple. Removing eigenvalues
    has moderate negative effects, showing that the incorporation of eigenvalue
    information is beneficial for the network.

    \begin{table}[htbp]
        \centering
        \caption{Effects of the three components of MAP (GatedGCN on ZINC).}
        \begin{tabular}{ccccc}
            \toprule
            PE             & full MAP       & without can.\ sign & without can.\ basis & without eigenvalues \\
            \midrule
            Test MAE       & \textbf{0.120 ± 0.002} & 0.131 ± 0.003    & 0.122 ± 0.003    & 0.125 ± 0.001 \\
            \bottomrule
        \end{tabular}
        \label{tab:ablation}
    \end{table}

    \section{Conclusion}

    In this paper, we explored a new approach called Laplacian Canonization for
    addressing sign and basis ambiguities of Laplacian eigenvectors while also
    preserving permutation invariance and universal expressive power. We developed
    a novel theoretical framework to characterize canonization and the canonizability
    of eigenvectors. Then we proposed a practical canonization algorithm called
    Maximal Axis Projection (MAP). Theoretically, it is sign/basis-invariant,
    permutation-equivariant and universal. Empirically, it canonizes most eigenvectors
    on synthetic and real-world data while showing promising performance on
    various datasets and GNN architectures.

\section*{Acknowledgements}
Yisen Wang was supported by National Key R\&D Program of China (2022ZD0160304), National Natural Science Foundation of China (62006153, 62376010, 92370129), Open Research Projects of Zhejiang Lab (No. 2022RC0AB05), and Beijing Nova Program (20230484344).

    \bibliographystyle{plainnat}
    \bibliography{references.bib}

    \newpage

    \appendix

    \renewcommand\thepart{}
    \renewcommand \partname{}
    \part{Appendix}
    \setcounter{secnumdepth}{4}
    \setcounter{tocdepth}{4}
    \parttoc
    
    \newpage

    \section{Why sign and basis invariance matters}\label{app:why}

    It is perfectly understandable that one may not see the significance and benefits
    of sign and basis invariance at once. For instance, below is a quote from one of
    the reviews of \citet{signnet} when they first submitted their paper to NeurIPS
    2022:
    \begin{quotation}
        It is not clear ``why'' to preserve the two symmetries this paper proposes
        to preserve including sign invariance and basis invariance. In graph \&
        molecule setting, two important symmetries are permutation (\textit{i.e.}\ if
        we permute the nodes/atoms, the output stays the same) and rotation
        (\textit{i.e.}\ if we rotate the 3D coordinates associating with each atom,
        the prediction for a physical property stays the same). I don't see a strong
        evidence that suggests preserving sign \& basis invariances on top of
        permutation \& rotation invariances brings any benefits.
    \end{quotation}
    Indeed, it is not easy to see why issues related with something matters if they
    don't even use it. One may overlook the importance of sign and basis invariance
    for two reasons: (1) people not familiar with Laplacian PE may not even realize
    sign/basis ambiguity is a thing; (2) people that does have some knowledge about
    Laplacian PE may not realize sign and basis ambiguities are markedly hindering
    its performance. Nonetheless, Laplacian PE does have advantages over other
    commonly used PEs that it is worth the efforts addressing their drawbacks.
    For these reasons we feel it is of great significance that we explain the
    motivation behind preserving sign and basis invariances in more details.

    \textbf{Why use spectral embedding: trade-off between universality and
    ambiguities.} Sign and basis ambiguities arise only in spectral embeddings, so
    the first important question is why to use them. There are many kinds of
    positional encodings in the literature, such as Random Walk PE (RWPE),
    Position-aware Encoding, Relational Pooling (RP), random features, \textit{etc}.
    Without addressing sign and basis ambiguities, some works even reported
    superior performance of RWPE over spectral embedding \citep{lspe,gps}. However,
    these other PEs all have drawbacks either in their expressive power or
    permutation equivariance. RP and random features are universal, but they do
    not guarantee permutation equivariance, which is an important inductive bias
    in graph-structured data. RWPE and positional-aware encoding are permutation
    equivariant, but their expressive power is limited. Thus one may wonder whether
    there exists a kind of positional encoding method that achieves both. Spectral
    Embedding, as it turns out, is both permutation equivariant and universal
    (when coupled with eigenvalue information, see Sec~\ref{sec:universal}). The
    good thing about SE is that through eigendecomposition, it transforms a
    ``harder'' permutation equivariance formulated by
    \begin{equation}\label{eq:harder}
        f(\mP^\top\hat{\mA}\mP)=\mP^\top f(\hat{\mA})\mP,
    \end{equation}
    into a ``easier'' permutation equivariance formulated by
    \begin{equation}\label{eq:easier}
        f(\mP\mU)=\mP f(\mU),
    \end{equation}
    where $\mP$ is an arbitrary permutation matrix, $\hat{\mA}$ is the normalized
    adjacency matrix of the graph and $\mU$ are its eigenvectors. Universality
    and equivariant results for \eqref{eq:harder} are only known for high-order
    tensor networks \citep{ign}, but similar results for \eqref{eq:easier} have
    long been known to be achievable even by first-order networks such as DeepSets
    \citep{deep-sets}. Thus SE enables efficient and universal graph neural
    networks that respects permutation equivariance. However, SE also brings new
    problems, sign and basis ambiguities. It is not fair to say models with SE
    are universal without achieving sign and basis invariance, since we expect graph
    neural networks to respect all symmetries among isomorphic graphs.

    \textbf{Sign and basis invariances are well-established problems in the
    literature.} Sign and basis ambiguities have been recognized by numerous works
    in the literature to be challenging and important issues when incorporating SE
    as positional encoding. Below we list some existing discussions in these papers.

    Quote from \citet{benchmarking-gnn}:
    \begin{quotation}
        We propose an alternative to reduce the sampling space, and therefore the
        amount of ambiguities to be resolved by the network. Laplacian eigenvectors
        are hybrid positional and structural encodings, as they are invariant by
        node re-parametrization. However, they are also limited by natural symmetries
        such as the arbitrary sign of eigenvectors (after being normalized to have
        unit length). The number of possible sign flips is $2^k$, where $k$ is the
        number of eigenvectors. In practice we choose $k\ll n$, and therefore $2^k$
        is much smaller than $n!$ (the number of possible ordering of the nodes).
        During the training, the eigenvectors will be uniformly sampled at random
        between the $2^k$ possibilities. If we do not seek to learn the invariance
        \textit{w.r.t.}\ all possible sign flips of eigenvectors, then we can remove
        the sign ambiguity of eigenvectors by taking the absolute value. This choice
        seriously degrades the expressivity power of the positional features.
    \end{quotation}

    Quote from \citet{lpe}:
    \begin{quotation}
        As noted earlier, there is a sign ambiguity with the eigenvectors. With the
        sign of $\phi$ being independent of its normalization, we are left with a
        total of $2^k$ possible combination of signs when choosing $k$ eigenvectors
        of a graph. Previous work has proposed to do data augmentation by randomly
        flipping the sign of the eigenvectors \citep{dgn,gt,benchmarking-gnn}, and
        although it can work when $k$ is small, it becomes intractable for large $k$.
    \end{quotation}

    Quote from \citet{dgn}:
    \begin{quotation}
        For instance, a pair of eigenvalues can have a multiplicity of 2 meaning that
        they can be generated by different pairs of orthogonal eigenvectors. For an
        eigenvalue of multiplicity 1, there are always two unit norm eigenvectors of
        opposite sign, which poses a problem during the directional aggregation.
        We can make a choice of sign and later take the absolute value. An alternative
        is to take a sample of orthonormal basis of the eigenspace and use each choice
        to augment the training.
    \end{quotation}

    Quote from \citet{graphit}:
    \begin{quotation}
        Note that eigenvectors of the Laplacian computed on different graphs could
        not be compared to each other in principle, and are also only defined up to
        a $\pm 1$ factor. While this raises a conceptual issue for using them in
        an absolute positional encoding scheme, it is shown in \citet{gt}---and
        confirmed in our experiments---that the issue is mitigated by the Fourier
        interpretation, and that the coordinates used in LapPE are effective in
        practice for discriminating between nodes in the same way as the position
        encoding proposed in \citet{transformer} for sequences. Yet, because the
        eigenvectors are defined up to a $\pm 1$ factor, the sign of the encodings
        needs to be randomly flipped during the training of the network.
    \end{quotation}

    Quote from \citet{gt}:
    \begin{quotation}
        In particular, \citet{benchmarking-gnn} make the use of available graph
        structure to pre-compute Laplacian eigenvectors
        \citep{laplacian-eigenmaps-for-reduction} and use them as node positional
        information. Since Laplacian PEs are generalization of the PE used in the
        original transformers \citep{transformer} to graphs and these better help
        encode distance-aware information (\textit{i.e.}, nearby nodes have similar
        positional features and farther nodes have dissimilar positional features),
        we use Laplacian eigenvectors as PE in Graph Transformer. Although these
        eigenvectors have multiplicity occuring due to the arbitrary sign of
        eigenvectors, we randomly flip the sign of the eigenvectors during training,
        following \citet{benchmarking-gnn}.
    \end{quotation}

    Quote from \citet{lspe}:
    \begin{quotation}
        Another PE candidate for graphs can be Laplacian Eigenvectors
        \citep{benchmarking-gnn,gt} as they form a meaningful local coordinate system,
        while preserving the global graph structure. However, there exists sign
        ambiguity in such PE as eigenvectors are defined up to $\pm 1$, leading to $2^k$
        number of possible sign values when selecting $k$ eigenvectors which a
        network needs to learn. Similarly, the eigenvectors may be unstable due to
        eigenvalue multiplicities.
    \end{quotation}

    Quote from \citet{signnet}:
    \begin{quotation}
        However, there are nontrivial symmetries that should be accounted for when
        processing eigenvectors. If $v$ is a unit-norm eigenvector, then so is $-v$,
        with the same eigenvalue. More generally, if an eigenvalue has higher
        multiplicity, then there are infinitely many unit-norm eigenvectors that can
        be chosen. Indeed, a full set of orthonormal eigenvectors is only defined up
        to a change of basis in each eigenspace. In the case of sign invariance, for
        any $k$ eigenvectors there are $2^k$ possible choices of sign. Accordingly,
        prior works randomly flip eigenvector signs during training in order to
        approximately learn sign invariance \citep{lpe,benchmarking-gnn}. However,
        learning all $2^k$ invariances is challenging and limits the effectiveness
        of Laplacian eigenvectors for encoding positional information. Sign
        invariance is a special case of basis invariance when all eigenvalues are
        distinct, but higher dimensional basis invariance is even more difficult to
        deal with, and we show that these higher dimensional eigenspaces are abundant
        in real datasets.
    \end{quotation}

    Quote from \citet{equivariant-and-stable}:
    \begin{quotation}
        \citet{embedding-equivalence} states that PE using the eigenvectors of the
        randomly permuted graph Laplacian matrix keeps permutation equivariant.
        \citet{benchmarking-gnn,lpe} argue that such eigenvectors are unique up to
        their signs and thus propose PE that randomly perturbs the signs of those
        eigenvectors. Unfortunately, these methods may have risks. They cannot provide
        permutation equivariant GNNs when the matrix has multiple eigenvalues, which
        thus are dangerous when applying to many practical networks. For example,
        large social networks, when not connected, have multiple 0 eigenvalues;
        small molecule networks often have non-trivial automorphism that may give
        multiple eigenvalues. Even if the eigenvalues are distinct, these methods are
        unstable. We prove that the sensitivity of node representations to the graph
        perturbation depends on the inverse of the smallest gap between two
        consecutive eigenvalues, which could be actually large when two eigenvalues
        are close.
    \end{quotation}

    \textbf{Sign and basis invariances make the learning tasks easier.} Permutation
    invariance is an important inductive bias in graphs in the sense that applying
    permutations on graphs results in isomorphic graphs and they should have the
    same properties and produce the same outputs. Empirically constraining the
    network to be invariant to such permutations benefits the performance, since
    otherwise the network has to learn these $n!$ ambiguities by itself. Similarly,
    sign and basis ambiguities result in $2^k$ and infinite possible choices of
    positional features respectively that a network has to learn, which could
    degrade its performance greatly. By preserving sign and basis invariance, a
    network will not need to learn the equivalence between all these possible
    choices and the learning task is significantly easier.

    \textbf{There are less ambiguity in signs than in permutations.} For a given
    graph, there are $n!$ possible ordering of the nodes that all represent the
    same graph, resulting in $n!$ number of ambiguities. Considering that the
    vast majority of eigenvectors on real-world data have distinct eigenvalue
    (Table~\ref{tab:m}), when selecting $k$ eigenvectors as positional encoding
    to present structural information, the number of ambiguities is $2^k$, which
    is much smaller than $n!$. This transformation from permutation ambiguity
    to sign ambiguity greatly reduces the amount of random noise in the input data
    and makes the model much more stable.

    \textbf{When sign invariance meets permutation equivariance.} Sign ambiguity
    is not an issue when we do not take permutation symmetry into account. For
    each eigenvector pair $\pm\vu$, we can choose the one whose first non-zero
    entry is positive. This simple \emph{canonization} solves the sign ambiguity
    problem with ease. The problem of sign ambiguity is only \emph{non-trivial}
    when combined with permutation equivariance, that is, we require the canonization
    process to be both sign-invariant and permutation-equivariant. Since the
    eigenvectors can now be permuted arbitrarily, there is no ``first'' non-zero
    entry and the simple canonization above fails.

    Moreover, such canonization only makes sense when we require it to be
    \emph{universal}. For instance, mapping all eigenvectors to $\vj=(1,1,\dots,1)$
    is also a sign-invariant and permutation-equivariant canonization, but this
    canonization provides us with no information. Thus one may wonder whether a
    network can be both sign-invariant, permutation-equivariant and universal at
    the same time. Unfortunately, as far as we know, no existing work addresses all
    of them. The universality of SignNet only considers sign invariance, as stated
    in their theorem:
    \begin{theorem}[Universal representation for SignNet]
        A continuous function $f\colon(\mathbb{S}^{n-1})^k\to\R^s$ is sign invariant,
        \textit{i.e.}\ $f(s_1v_1,\dots,s_kv_k)=f(v_1,\dots,v_k)$ for any $s_i\in\{-1,
        1\}$, if and only if there exists a continuous $\phi\colon\R^n\to\R^{2n-2}$
        and a continuous $\rho\colon\R^{(2n-2)k}\to\R^s$ such that
        \[
            f(v_1,\dots,v_k)=\rho\bigl([\phi(v_i)+\phi(-v_i)]_{i=1}^k\bigr).
        \]
    \end{theorem}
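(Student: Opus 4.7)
The plan is to prove the two directions of the biconditional separately; the ``if'' direction is immediate, while the ``only if'' direction carries all the technical content. For ``if'', the map $v_i\mapsto\phi(v_i)+\phi(-v_i)$ is manifestly invariant under $v_i\mapsto -v_i$, so composing with a continuous $\rho$ yields a continuous sign-invariant $f$.

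For ``only if'', the key observation is that sign invariance in each argument lets $f$ descend to a continuous map $\tilde f$ on the quotient $(\mathbb{S}^{n-1}/\{\pm 1\})^k\cong(\mathbb{RP}^{n-1})^k$, which is a compact Hausdorff space. The strategy is to realize this quotient concretely as a subset of $\R^{(2n-2)k}$ via a continuous map of the special form $v\mapsto\phi(v)+\phi(-v)$, and then extend. Concretely I would (i) construct a continuous $\phi\colon\R^n\to\R^{2n-2}$ such that $\Psi(v):=\phi(v)+\phi(-v)$ is injective on $\mathbb{RP}^{n-1}$; (ii) observe that the product map $\Psi^{\times k}\colon(\mathbb{RP}^{n-1})^k\to\R^{(2n-2)k}$ is a continuous injection, hence a topological embedding by compactness of the domain; (iii) define $\rho$ on the closed image as $\tilde f\circ(\Psi^{\times k})^{-1}$; and (iv) extend $\rho$ continuously to all of $\R^{(2n-2)k}$ by Tietze's extension theorem, applied coordinate-wise since $\rho$ is $\R^s$-valued.

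The main obstacle is step (i): exhibiting a continuous $\phi\colon\R^n\to\R^{2n-2}$ whose even-under-negation part separates antipodal pairs, with the output dimension tight at $2n-2$. A high-dimensional candidate is the Veronese-type map $\phi(v)=(v,\,v\otimes v)$, whose even part $v\otimes v$ is the classical Veronese embedding and is injective on $\mathbb{RP}^{n-1}$, but its target dimension is $n^2$ rather than $2n-2$. To reach $\R^{2n-2}$, I would appeal to the fact that $\mathbb{RP}^{n-1}$ is an $(n-1)$-manifold and hence continuously embeds into $\R^{2(n-1)}$ by Whitney's theorem, then realize such an embedding as the even part of some $\phi$. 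A cleaner route is to post-compose the Veronese embedding with a generic linear projection onto $2n-2$ coordinates: genericity (a transversality / general-position argument) preserves injectivity, and composing a linear map with an even function stays even, so the composite still has the required $\phi(v)+\phi(-v)$ form.

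Once step (i) is in hand, the remaining steps are routine: a continuous injection from a compact Hausdorff space into a Hausdorff space is automatically a homeomorphism onto its (closed) image, so $\tilde f\circ(\Psi^{\times k})^{-1}$ is a continuous function on that closed subset, and Tietze extension produces the desired continuous $\rho\colon\R^{(2n-2)k}\to\R^s$, completing the factorization $f(v_1,\dots,v_k)=\rho\bigl([\phi(v_i)+\phi(-v_i)]_{i=1}^k\bigr)$.
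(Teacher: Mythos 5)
This theorem is not proved in the paper at all: it is quoted verbatim from \citet{signnet} in Appendix~\ref{app:why} purely as background (to point out that SignNet's universality guarantee concerns sign invariance only), so there is no in-paper proof to compare against. Judged on its own, your outline follows the standard route for this result, which is also essentially the original one: the ``if'' direction is immediate; for ``only if'' you pass to the quotient $(\mathbb{RP}^{n-1})^k$, embed $\mathbb{RP}^{n-1}$ continuously into $\R^{2n-2}$, realize that embedding in the form $\phi(v)+\phi(-v)$ (e.g.\ take $\phi=\tfrac12 E\circ q$ on the sphere and extend continuously to $\R^n$), use that a continuous injection from a compact Hausdorff space is a homeomorphism onto its compact, hence closed, image, and finish with Tietze extension componentwise. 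All of that is sound.

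The one genuine flaw is your ``cleaner route'': a \emph{generic} linear projection of the Veronese image down to $\R^{2n-2}$ does \emph{not} preserve injectivity. The general-position/secant-variety argument only works while the ambient projective space has dimension strictly larger than $2\dim(\mathbb{RP}^{n-1})=2n-2$, so it takes you to $\R^{2n-1}$ and then stalls; the target $2n-2$ is exactly the hard Whitney bound, and it is sharp (e.g.\ $\mathbb{RP}^{2}$ does not embed in $\R^3$, and more generally $\mathbb{RP}^{2^k}$ does not embed in $\R^{2^{k+1}-1}$), so no genericity or transversality argument can produce the last projection; in the tight cases most projections must identify antipodal classes. Your primary fallback---invoking the strong Whitney embedding theorem for the compact $(n-1)$-manifold $\mathbb{RP}^{n-1}$ and then writing the embedding as the even part of some $\phi$---is the correct way to reach dimension $2n-2$, so the proof goes through once you drop the generic-projection shortcut (or are content with the weaker dimension $2n-1$ that it actually yields).
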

    In fact, being sign-invariant, permutation-equivariant and universal at the
    same time is not even \emph{well-defined}. As mentioned in the main text,
    for an uncanonizable eigenvector $\vu$ with $\vu=-\mP\vu$ for some permutation
    matrix $\mP$, let $f$ be an arbitrary continuous sign-invariant,
    permutation-equivariant and universal function and denote $f(\vu)=\vy$.
    The sign invariance property requires that $f(-\vu)=\vy$, while the permutation
    equivariance property requires that $f(\mP\vu)=\mP\vy$. Since $-\vu=\mP\vu$,
    we have $\vy=\mP\vy$. However $\vu\neq\mP\vu$, thus universality is not
    satisfied, leading to a contradiction. This can be illustrated in
    \Figref{fig:dilemma}, where the three properties cannot be met at the same
    time.
    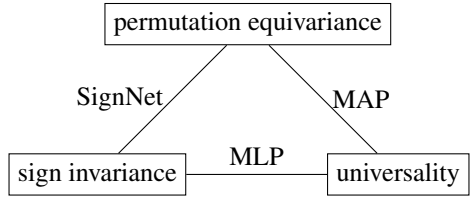
\begin{figure}[hbtp]
        \centering
        \begin{tikzpicture}
            \node [draw] at (0,0) (1) {sign invariance};
            \node [draw] at (2,2) (2) {permutation equivariance};
            \node [draw] at (4,0) (3) {universality};
            \draw [-] (1) -- node[pos=0.5, left]{SignNet} (2);
            \draw [-] (1) -- node[pos=0.5, above]{MLP} (3);
            \draw [-] (2) -- node[pos=0.5, right]{MAP} (3);
        \end{tikzpicture}
        \caption{The dilemma where permutation-equivariance, sign-invariance and
        universal expressive power cannot be achieved at the same time.}
        \label{fig:dilemma}
    \end{figure}

    \begin{remark}
        The \emph{universality} in Definition~\ref{def:canonization} is defined
        on eigenvectors $\mU\in\R^{n\times n}$. The permutation equivariance
        property of eigenvectors is defined as $f(\mP\mU)=\mP f(\mU)$, and here
        universality implies that if two eigenvectors are not equal up to permutation,
        then $f$ should be able to tell them apart. As mentioned above, such
        \emph{set-universal} networks (like DeepSets or MLP) cannot be permutation
        equivariant and sign invariant at the same time. In the context of graphs,
        we may also care about universality defined on the adjacency matrix $\mA\in
        \R^{n\times n}$, in which case the permutation equivariance property is
        defined as $f(\mP\mA\mP^\top)=\mP f(\mA)$. Such \emph{graph-universal}
        networks are always able to tell two non-isomorphic adjacency matrices apart.
        We can view $f(\mA)$ as a function of $\mU$ (we omit the eigenvalues for
        simplicity), and we denote $g(\mU)=f(\mA)$, then to achieve the graph
        universality of $f$, $g$ does not have to be set-universal, thus may not
        suffer from the dilemma above.
    \end{remark}
    
    Much effort has been devoted to the structure of invariant and equivariant
    networks in the literature, but few has studied the case when invariance
    and equivariance combine (in this case you may have to sacrifice some invariance,
    or some equivariance, or some expressive power). Our work takes a step in this
    direction, providing some insights to this dilemma and characterize the
    necessary and sufficient conditions where these three properties conflict
    with each other.

    \section{Related work}\label{app:related}

    \textbf{Theoretical Expressivity}\quad It has been shown that
    neighborhood-aggregation based GNNs are not more powerful than the WL test
    \citep{gin,higher-order}. Since then, many attempts have been made to endow
    graph neural networks with expressive power beyond the 1-WL test, such as
    injecting random attributes \citep{rp,random-features,p-gnn,embedding-equivalence,
    benchmarking-gnn}, injecting positional encodings \citep{distance-encoding,
    identity-aware}, and designing higher-order GNNs \citep{higher-order,
    ign,invariant-universal,ppgn,isomorphism-equivalence}.

    \textbf{Positional Encodings}\quad Recently, there are many works that aim to
    improve the expressive power of GNNs by positional encoding (PE). \citet{rp}
    assigned each node an identifier depending on its index. \citet{distance-encoding}
    proposed to use distances between nodes characterized by powers of the random
    walk matrix. \citet{p-gnn} proposed to use distances of nodes to a sampled anchor
    set of nodes. Using random node features can also improve the expressiveness of
    GNNs, even making them universal \citep{random-features,rni}, but it has several
    defects: (1) loss of permutation invariance, (2) slower convergence, (3) poor
    generalization on unseen graphs \citep{p-gnn,depth-vs-width}. More detailed
    discussion can be found in Appendix~\ref{app:pe}.

    \textbf{Laplacian PE}\quad Another PE candidate for graphs is Laplacian
    Eigenvectors \citep{gt,benchmarking-gnn} which form a meaningful local
    coordinate system while preserving global structure. However, there exists
    ambiguities in sign and basis that the network needs to learn, which harms the
    performance of GNNs. To address these ambiguities, many approaches have been
    proposed such as randomly flip the signs \citep{benchmarking-gnn}, use
    eigenfunctions over the edges \citep{lpe} or use invariant network architectures
    \citep{signnet}.

    \textbf{Sign Invariance}\quad Several prior works have proposed ways
    of addressing sign ambiguity. A popular heuristic is Random Sign (RS), which
    randomly flips the signs of eigenvectors during training to encourage the
    insensitivity to different signs \citep{benchmarking-gnn}. However, the network
    still has to learn these signs, leading to slower convergence and harder curve
    fitting task. \citet{resolving-sign} developed a data-dependent method to choose
    signs for each singular vector of a singular value decomposition. Still, in the
    worst case the signs chosen will be arbitrary, and they do not handle rotational
    ambiguities in higher dimensional eigenspaces. \citet{lpe} proposed to use the
    relative Laplace embeddings of two nodes. However, their approach suffers from a
    major computational bottleneck with $\mathcal{O}(n^4)$ complexity. \citet{signnet}
    proposed SignNet that passes both $\vu$ and $-\vu$ to the same network, adds the
    outputs together and then passes them to another network. As both outputs of
    $\vu$ and $-\vu$ need to be computed, their approach adds to the training cost.
    Our approach happens at pre-processing time with $\mathcal{O}(n^3)$ complexity.

    \textbf{Basis Invariance}\quad The only works we know addressing basis ambiguity
    are \citet{signnet} and \citet{spe}. \citet{signnet} proposed BasisNet that is
    basis invariant. However, to achieve universality, the components of BasisNet need to
    use higher order tensors in $\R^{n^k}$ where $k$ can be as large as $\frac{n(n-1)}2$
    \citep{invariant-universal}, rendering BasisNet impractical. \citet{spe}
    generalized BasisNet and proposed SPE that is both basis-invariant and stable
    to perturbation to the Laplacian matrix, but it still suffers from exponential
    complexity as BasisNet. In this paper, we showed that under certain assumptions,
    the issue of basis ambiguity can be resolved more efficiently in the
    pre-processing stage.

    There is also a literature beyond GNN that also considers spectral embeddings of
    graphs and going around the sign/basis ambiguity problems.
    \citet{point-cloud-registration} proposed to address sign/basis ambiguities
    using optimal transport theory that involves solving a non-convex optimization
    problem, thus it could be less efficient than our approach. \citet{eld} proposed
    to symmetrize the embedding using a heuristic measure called ELD that is quite
    similar to the form of SignNet, while our MAP algorithm offers an axis projection
    approach and establish its theoretical guarantees.

    \textbf{Canonical Forms}\quad The theory of canonical forms has been widely
    studied in mathematics \citep{general-theory} and applied to many fields of
    machine learning. \citet{condor} proposed a self-supervised method named ConDor
    that learns to canonicalize the 3D orientation and position for full and partial
    3D point clouds. \citet{canonical-fields} presented Canonical Field Network
    (CaFi-Net), a self-supervised method to canonicalize the 3D pose of instances
    from an object category represented as neural ﬁelds, specifically neural radiance
    ﬁelds (NeRFs). \citet{canonical-capsules} proposed an unsupervised capsule
    architecture for 3D point clouds. \citet{equivariance-canonicalization} proposed
    to decouple the equivariance and prediction components of neural networks by
    predicting a transformation that can be used to align the input data to some
    canonical pose. After this alignment the remaining layers do not have to be
    equivariant anymore. However, as far as we know, no existing work addresses the
    issue of \emph{ambiguities}, which arises when some function (\textit{e.g.},
    eigendecomposition) is required to be both \emph{equivariant} on the input
    space and \emph{invariant} on the output space. No existing work addresses the
    \emph{canonizability} of input data, which only arises when the invariance and
    equivariance property conflicts. Instead, we develop a novel theoretical
    framework that addresses these issues.

    \section{Implementation and verification of MAP}

    \subsection{Complete pseudo-code of MAP}\label{app:pseudo-code}

    A simplified pseudo-code for sign canonization with MAP is shown in
    Algorithm~\ref{alg:sign}. A simplified pseudo-code for basis canonization with
    MAP is shown in Algorithm~\ref{alg:basis}. ($\vj=(1,1,\dots,1)\in\R^n$)

    \begin{algorithm}[htbp]
        \caption{Maximal Axis Projection for eliminating sign ambiguity}
        \begin{algorithmic}
            \Require Input graph $\gG=(\sV,\sE,\mX)$
            \Ensure Spectral embedding of $\gG$
            \State Calculate the eigendecomposition $\hat{\mA}=\mU\mLambda\,\mU^\top$
            \Comment{$\mathcal{O}(n^3)$ complexity}
            \For{each eigenvector $\vu$}\Comment{$\mathcal{O}(n^2\log n)$ complexity}
                \State $\vx_i\gets\sum_{\ve_j\in\mathcal{B}_i}\ve_j+c\vj,\ i=1,\dots,k$
                \State $\vx_h\gets$ non-orthogonal summary vector with smallest $h$ (\eqref{eq:find-non-orthogonal})
                \If{$\vx_h$ is not \texttt{<none>}}
                \State $\vu\gets\vu^*$ (choose the direction with $\vu^\top\vx_h>0$ as in \eqref{eq:sign-canonization})
                \Else
                \State $\vu\gets\vu$ (no canonization)
                \EndIf
            \EndFor
            \State \Return{$\mU$}
        \end{algorithmic}
        \label{alg:sign}
    \end{algorithm}

    \begin{algorithm}[htbp]
        \caption{Maximal Axis Projection for eliminating basis ambiguity}
        \begin{algorithmic}
            \Require Eigenvalue $\lambda$ with multiplicity $d>1$
            \Ensure Spectral embedding corresponding to $\lambda$
            \State Calculate the eigenvectors $\mU\in\R^{n\times d}$ of $\lambda$ through eigendecomposition
            \Comment{$\mathcal{O}(n^3)$ complexity}
            \For{$i=1,2,\dots,d$}\Comment{$\mathcal{O}(n^2d)$ complexity}
                \State $\vx_i\gets\sum_{\ve_j\in\mathcal{B}_i}\ve_j+c\vj$
                \State Choose $\vu_i\in\langle\vu_1,\dots,\vu_{i-1}\rangle^\perp$, $|\vu_i|=1$, \textit{s.t.}\ $f(\vu_i)=\vu_i^\top\vx_i$ is maximized (\eqref{eq:basis})
            \EndFor
            \State \Return{$\mU_0\coloneqq[\vu_1,\dots,\vu_d]$}
        \end{algorithmic}
        \label{alg:basis}
    \end{algorithm}

    The complete pseudo-code of MAP is shown in Algorithm~\ref{alg:use}. Despite
    the sophisticated workflow, programmers do not have to know the principles
    of our algorithm to use it. The entire module can be passed as a
    \texttt{pre\_transform} function to the dataset class with a single line of code.

    \begin{algorithm}[htbp]
        \caption{Maximal Axis Projection}
        \begin{algorithmic}
            \Require Graph $\gG=(\sV,\sE)$, its normalized adjacency matrix $\hat{\mA}$
            \Ensure Maximal Axis Projection of $\gG$
            \State Calculate the eigendecomposition $\hat{\mA}=\mU\mLambda\,\mU^
            \top$\Comment{$\mathcal{O}(n^3)$ complexity}
            \For{each single eigenvector $\vu\in\R^n$}\Comment{$\mathcal{O}(n^2\log n)$
            complexity}
            \State $\mathit{proj}\gets(|\vu\vu^\top\ve_1|,|\vu\vu^\top\ve_2|,
            \dots,|\vu\vu^\top\ve_n|)$\Comment{$\mathcal{O}(n)$ complexity}
            \State $\mathit{len},\mathit{ind}\gets$ \Call{Sort}{$\mathit{proj}$}
            \Comment{$\mathcal{O}(n\log n)$ complexity}
            \State $\mathit{len}\gets$ \Call{Unique}{$\mathit{len}$}
            \State $k\gets|\mathit{len}|$
            \For{$i=1,2,\dots,k$}\Comment{$\mathcal{O}(n)$ complexity}
            \State $\vx_i\gets\sum_j\ve_{\mathit{ind}[j]}$ such that $\mathit{proj}_
            {\mathit{ind}[j]}=\mathit{len}[i]$
            \State $\vx_i\gets\vx_i+c\vj$
            \State $\vu_0\gets$ \Call{Normalize}{$\vu\vu^\top\vx_i$}
            \If{$\lVert\vu_0\rVert>\varepsilon$}\Comment{floating-point errors are
            considered}
            \State substitute $\vu$ with $\vu_0$
            \State \textbf{break}
            \EndIf
            \EndFor
            \EndFor
            \For{each multiple eigenvalue and its eigenvectors $\mU\in\R^{n\times d}$}
            \Comment{$\mathcal{O}(n^2dm)$ complexity}
            \State $\mathit{proj}\gets(|\mU\mU^\top\ve_1|,|\mU\mU^\top\ve_2|,
            \dots,|\mU\mU^\top\ve_n|)$\Comment{$\mathcal{O}(n^2d)$ complexity}
            \State $\mathit{len},\mathit{ind}\gets$ \Call{Sort}{$\mathit{proj}$}
            \State $\mathit{len}\gets$ \Call{Unique}{$\mathit{len}$}
            \If{$k<d$}
            \State \textbf{break}\Comment{Assumption~\ref{ass:k} not satisfied}
            \EndIf
            \For{$i=1,2,\dots,d$}\Comment{$\mathcal{O}(n)$ complexity}
            \State $\vx_i\gets\sum_j\ve_{\mathit{ind}[j]}$ such that $\mathit{proj}_
            {\mathit{ind}[j]}=\mathit{len}[i]$
            \State $\vx_i\gets\vx_i+c\vj$
            \EndFor
            \State $\mU_\mathrm{span}\gets$ empty matrix of shape $n\times 0$
            \State $\mU_\mathrm{perp}\gets\mU$\Comment{orthogonal complementary space}
            \For{$i=1,2,\dots,d$}\Comment{$\mathcal{O}(nd^4)$ complexity}
            \State $\vu_i\gets\mU_\mathrm{perp}\mU_\mathrm{perp}^\top\vx_i$
            \Comment{$\mathcal{O}(nd)$ complexity}
            \If{$\lVert\vu_i\rVert<\varepsilon$}\Comment{floating-point errors are
            considered}
            \State \textbf{break}\Comment{Assumption~\ref{ass:perp} not satisfied}
            \EndIf
            \State $\vu_i\gets$ \Call{Normalize}{$\vu_i$}
            \State substitute $\mU_{:,i}$ with $\vu_i$
            \State $\mU_\mathrm{span}\gets[\mU_\mathrm{span},\vu_i]$
            \State $\mU_\mathrm{base}\gets\mU_\mathrm{span}$
            \For{$j=1,2,\dots,d$}\Comment{$\mathcal{O}(nd^3)$ complexity}
            \State $\mU_\mathrm{temp}\gets[\mU_\mathrm{base},\mU_{:,j}]$
            \If{$\rank(\mU_\mathrm{temp})=j+1$}\Comment{$\mathcal{O}(nd^2)$ complexity}
            \State $\mU_\mathrm{base}\gets\mU_\mathrm{temp}$
            \EndIf
            \If{$\mU_\mathrm{base}\in\R^{n\times d}$}
            \State \textbf{break}
            \EndIf
            \EndFor
            \State $\mU_\mathrm{base}\gets$ \Call{GramSchmidtOrthogonalization}{$\mU_
            \mathrm{base}$}\Comment{$\mathcal{O}(nd^2)$ complexity}
            \State $\mU_\mathrm{perp}\gets(\mU_\mathrm{base})_{:,i+1:d}$
            \EndFor
            \EndFor
            \State $\mU\gets\mU\mLambda^\frac12$\Comment{$\mathcal{O}(n^2)$ complexity}
            \State \Return{$\mU$}
        \end{algorithmic}
        \label{alg:use}
    \end{algorithm}

    There are three steps in Algorithm~\ref{alg:use}. The first is to eliminate sign
    ambiguity with $\mathcal{O}(n^2\log n)$ time complexity. The second is to eliminate
    basis ambiguity with $\mathcal{O}(n^2dm)$ time complexity, where $d$ is the
    multiplicity of the eigenvalue and $m$ is the number of multiple eigenvalues
    of $\hat{\mA}$. The third is to incorporate eigenvalue information with
    $\mathcal{O}(n^2)$ time complexity. The overall time complexity is $\mathcal{O}
    (n^3)$ with the bottleneck being the eigendecomposition.

    The second part of our algorithm (eliminating basis ambiguity) has time complexity
    $\mathcal{O}(n^2dm)$. We point out that in real-world datasets $m$ is often
    quite small. As shown in Table~\ref{tab:m}, multiple eigenvalues only make up
    around $5\,\%$ of all eigenvalues, thus in practice the time complexity of
    the second part is $\mathcal{O}(n^2d)$, better than eigendecomposition.
    (Note it is important to take floating-point errors into account when counting
    these multiple eigenvalues)

    \begin{table}[htbp]
        \centering
        \caption{The number of multiple eigenvalues in real-world datasets.}
        \resizebox{\textwidth}{!}{
            \begin{tabular}{cccccccc}
                \toprule
                Dataset       & ogbg-molesol  & ogbg-molfreesolv & ogbg-molhiv   & ogbg-mollipo  & ogbg-moltox21 & ogbg-moltoxcast & ogbg-molpcba \\
                \midrule
                \#multiple eigenvalues & 738           & 286           & 52367         & 5391          & 8772          & 10556         & 491247 \\
                \#all eigenvalues & 13420         & 4654          & 952055        & 104669        & 129730        & 141042        & 10627757 \\
                Ratio & 5.50\,\%     & 6.15\,\%     & 5.50\,\%     & 5.15\,\%     & 6.76\,\%     & 7.48\,\%     & 4.62\,\% \\
                \bottomrule
            \end{tabular}
        }
        \label{tab:m}
    \end{table}

    Some parts of Algorithm~\ref{alg:use} have time complexity $\mathcal{O}(nd^4)$.
    We also point out that $d$ is usually small in real datasets. We show the number
    of eigenvalues and their multiplicities in logarithmic scale in Figure~\ref{fig:multiplicity}.

    \begin{figure}[htbp]
        \centering
        \includegraphics[scale=0.59]{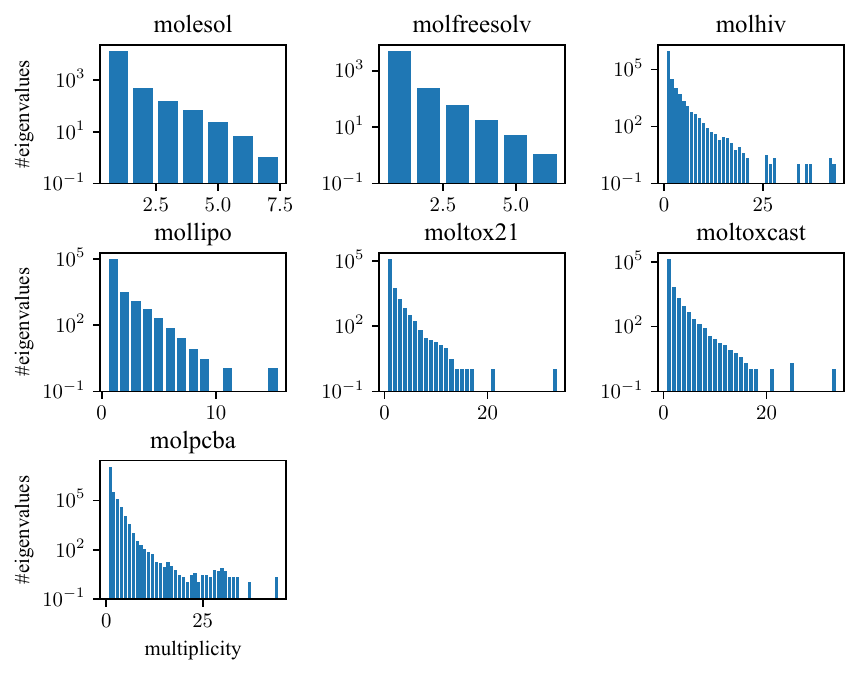}
        \caption{\#Eigenvalues \textit{w.r.t.}\ their multiplicities in
        real-world datasets (in logarithmic scale).}
        \label{fig:multiplicity}
    \end{figure}

    The details of the datasets are listed in Table~\ref{tab:datasets}.

    \begin{table}[htbp]
        \centering
        \caption{The details of the datasets. Reg: Regression; Bin: Binary classification.}
        \resizebox{\textwidth}{!}{
            \begin{tabular}{cccccccc}
                \toprule
                Dataset       & ogbg-molesol  & ogbg-molfreesolv & ogbg-molhiv   & ogbg-mollipo  & ogbg-moltox21 & ogbg-moltoxcast & ogbg-molpcba \\
                \midrule
                \#Graphs      & 1128          & 642           & 41127         & 4200          & 7831          & 8576          & 437929 \\
                Avg \#Nodes   & 13.3          & 8.7           & 25.5          & 27.0          & 18.6          & 18.8          & 26.0 \\
                Avg \#Edges   & 13.7          & 8.4           & 27.5          & 29.5          & 19.3          & 19.3          & 28.1 \\
                \#Tasks       & 1             & 1             & 1             & 1             & 12            & 617           & 128 \\
                Task Type     & Reg           & Reg           & Bin           & Reg           & Bin           & Bin           & Bin \\
                Metric        & RMSE          & RMSE          & ROC-AUC       & RMSE          & ROC-AUC       & ROC-AUC       & AP \\
                \bottomrule
            \end{tabular}
        }
        \label{tab:datasets}
    \end{table}

    \subsection{Verifying the correctness of Theorem~\ref{thm:sign}}
    \label{app:verify-sign}

    We verify the correctness of Theorem~\ref{thm:sign} through random simulation.
    The program is shown in Algorithm~\ref{alg:verify-sign}. Let $\mU$ be a
    random orthonormal matrix, $\mP$ be a random permutation matrix and $\mS$ be a
    random sign matrix (diagonal matrix of $1$ and $-1$). We pass $\mU$, $\mP\mU$,
    $\mU\!\mS$, $\mP\mU\!\mS$ to the \textsc{UniqueSign} function (Algorithm~\ref{alg:sign})
    and compare the outputs. If our algorithm is correct, $\mU$ and $\mU\!\mS$
    should have invariant outputs, while $\mU$, $\mP\mU$ and $\mP\mU\!\mS$ should
    have equivariant outputs.

    \begin{algorithm}[htbp]
        \caption{Verify the correctness of Theorem~\ref{thm:sign}}
        \begin{algorithmic}
            \State $\mathit{p\_correct}\gets0$, $\mathit{q\_correct}\gets0$, $\mathit
            {pq\_correct}\gets0$, $\mathit{total}\gets0$
            \For{$i=1,2,\dots,\mathit{trials}$}
            \State $n\gets$ a random positive integer
            \State $\mU\gets$ a random orthonormal matrix in $\R^{n\times n}$
            \State $\mU_0\gets$ \Call{UniqueSign}{$\mU$}
            \State $\mP\gets$ a random permutation matrix
            \State $\mV\gets\mP\mU$
            \State $\mV_0\gets$ \Call{UniqueSign}{$\mV$}
            \State $\mathit{p\_correct}\gets\mathit{p\_correct}+1$ \textbf{if}
            $|\mP\mU_0-\mV_0|<\varepsilon$\Comment{test permutation-equivariance}
            \State $\mS\gets$ a random sign matrix (diagonal matrix of $1$ and $-1$)
            \State $\mW\gets\mU\!\mS$
            \State $\mW_0\gets$ \Call{UniqueSign}{$\mW$}
            \State $\mathit{q\_correct}\gets\mathit{q\_correct}+1$ \textbf{if}
            $|\mU_0-\mW_0|<\varepsilon$\Comment{test uniqueness}
            \State $\mY\gets\mP\mW$
            \State $\mY_0\gets$ \Call{UniqueSign}{$\mY$}
            \State $\mathit{pq\_correct}\gets\mathit{pq\_correct}+1$ \textbf{if}
            $|\mP\mU_0-\mY_0|<\varepsilon$\Comment{test both}
            \State $\mathit{total}\gets\mathit{total}+1$
            \EndFor
            \State print out the values of $\mathit{p\_correct}$, $\mathit{q\_correct}$,
            $\mathit{pq\_correct}$ and $\mathit{total}$
        \end{algorithmic}
        \label{alg:verify-sign}
    \end{algorithm}

    We conduct 1000 trials. The results are $\mathit{p\_correct}=\mathit{q\_correct}=
    \mathit{pq\_correct}=\mathit{total}=1000$, showing that Algorithm~\ref{alg:sign}
    is both permutation-equivariant and unique (unambiguous).

    \subsection{Verifying the correctness of Theorem~\ref{thm:basis}}
    \label{app:verify-basis}

    We verify the correctness of Theorem~\ref{thm:basis} through random simulation.
    The program is shown in Algorithm~\ref{alg:verify-basis}. Let $\mU$ be a random
    orthonormal matrix in $\R^{n\times d}$, $\mP$ be a random permutation matrix and
    $\mQ$ be a random orthonormal matrix in $\R^{d\times d}$. We pass $\mU$, $\mP\mU$,
    $\mU\!\mQ$, $\mP\mU\!\mQ$ to the \textsc{UniqueBasis} function (Algorithm~\ref{alg:basis})
    and compare the outputs. If our algorithm is correct, $\mU$ and $\mU\!\mQ$
    should have invariant outputs, while $\mU$, $\mP\mU$ and $\mP\mU\!\mQ$ should
    have equivariant outputs.

    \begin{algorithm}[htbp]
        \caption{Verify the correctness of Theorem~\ref{thm:basis}}
        \begin{algorithmic}
            \State $\mathit{p\_correct}\gets0$, $\mathit{q\_correct}\gets0$, $\mathit
            {pq\_correct}\gets0$, $\mathit{total}\gets0$
            \For{$i=1,2,\dots,\mathit{trials}$}
            \State $n\gets$ a random positive integer (greater than $1$)
            \State $d\gets$ a random positive integer (less than $n$)
            \State $\mU\gets$ a random orthonormal matrix in $\R^{n\times d}$
            \State $\mU_0\gets$ \Call{UniqueBasis}{$\mU$}
            \State $\mP\gets$ a random permutation matrix
            \State $\mV\gets\mP\mU$
            \State $\mV_0\gets$ \Call{UniqueBasis}{$\mV$}
            \State $\mathit{p\_correct}\gets\mathit{p\_correct}+1$ \textbf{if}
            $|\mP\mU_0-\mV_0|<\varepsilon$\Comment{test permutation-equivariance}
            \State $\mQ\gets$ a random orthonormal matrix in $\R^{d\times d}$
            \State $\mW\gets\mU\!\mQ$
            \State $\mW_0\gets$ \Call{UniqueBasis}{$\mW$}
            \State $\mathit{q\_correct}\gets\mathit{q\_correct}+1$ \textbf{if}
            $|\mU_0-\mW_0|<\varepsilon$\Comment{test uniqueness}
            \State $\mY\gets\mP\mW$
            \State $\mY_0\gets$ \Call{UniqueBasis}{$\mY$}
            \State $\mathit{pq\_correct}\gets\mathit{pq\_correct}+1$ \textbf{if}
            $|\mP\mU_0-\mY_0|<\varepsilon$\Comment{test both}
            \State $\mathit{total}\gets\mathit{total}+1$
            \EndFor
            \State print out the values of $\mathit{p\_correct}$, $\mathit{q\_correct}$,
            $\mathit{pq\_correct}$ and $\mathit{total}$
        \end{algorithmic}
        \label{alg:verify-basis}
    \end{algorithm}

    We conduct 1000 trials. The results are $\mathit{p\_correct}=\mathit{q\_correct}=
    \mathit{pq\_correct}=\mathit{total}=1000$, showing that Algorithm~\ref{alg:basis}
    is both permutation-equivariant and unique. The function \textsc{UniqueBasis}
    raises an assertion error when either Assumption~\ref{ass:k} or
    Assumption~\ref{ass:perp} is violated, so Algorithm~\ref{alg:verify-basis} also
    shows that random orthonormal matrices violate these assumptions with probability
    $0$.

    \subsection{Verifying the correctness of Theorem~\ref{thm:se-universal}}
    \label{app:verify-universal}

    We conduct experiment on the \textsc{Exp} dataset proposed in \citet{rni}, which
    is designed to explicitly evaluate the expressive power of GNNs. The dataset
    consists of a set of 1-WL indistinguishable non-isomorphic graph pairs, and each
    graph instance is a graph encoding of a propositional formula. The classification
    task is to determine whether the formula is satisfiable (SAT). Since the graph
    pairs in the \textsc{Exp} dataset are not distinguishable by 1-WL test, if a model
    shows above 50\% accuracy on this dataset, it should have expressive power
    beyond the 1-WL algorithm.

    The models we used on \textsc{Exp} dataset are as follows: an 8-layer GCN, GIN
    \citep{gin}, PPGN \citep{ppgn}, 1-2-3-GCN-L \citep{higher-order}, 3-GCN \citep{rni},
    DeepSets-RNI (DeepSets with Random Node Initialization (RNI) \citep{rni}), GCN-RNI
    (GCN with Random Node Initialization (RNI)), Linear-RSE (linear network with
    RSE), and DeepSets-RSE. GCN and GIN belongs to the family of MP-GNNs whose expressive
    power is bounded by 1-WL, and RNI is a method to improve the expressive
    power of MP-GNNs. We verify the expressive power gain of RSE-based models by
    comparing them with GCN, and evaluate their efficiency by comparing them with
    other expressive models.

    We evaluate all models on the \textsc{Exp} dataset using 10-fold cross-validation,
    and train each model for 500 epochs per fold. Mean test accuracy across all
    folds is measured and reported. The results are reported in Table~\ref{tab:exp}.
    In addition, we also measure the learning curves of models to show their
    convergence rate, as shown in \Figref{fig:exp-converge}.
    
    \begin{table}[htbp]
        \centering
        \begin{minipage}{.49\linewidth}
            \centering
            \captionof{table}{Accuracy results on \textsc{Exp}.}
            \begin{tabular}{lc}
                \toprule
                Model & Test Accuracy (\%) \\
                \midrule
                GCN   & $50.0$ \\
                GIN & $50.0$ \\
                PPGN & $50.0$ \\
                1-2-3-GCN-L & $50.0$ \\
                3-GCN & $99.7\pm 0.0$ \\
                DeepSets-RNI & $50.0$ \\
                GCN-RNI & $97.6\pm 2.5$ \\
                Linear-RSE & $99.1\pm 1.8$ \\
                DeepSets-RSE & $\mathbf{99.8\pm 0.5}$ \\
                \bottomrule
            \end{tabular}
            \label{tab:exp}
        \end{minipage}
        \begin{minipage}{.49\linewidth}
            \centering
            \includegraphics[width=\linewidth]{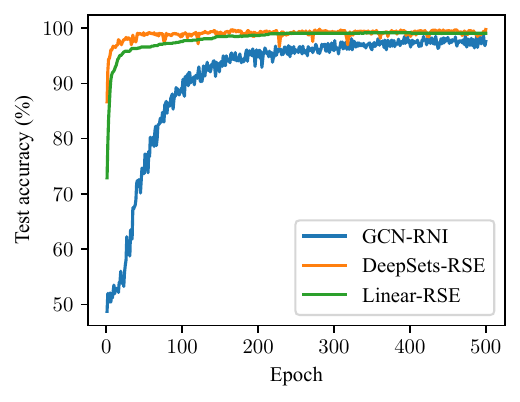}
            \captionof{figure}{Learning curves on \textsc{Exp}.}
            \label{fig:exp-converge}
        \end{minipage}
    \end{table}

    In Table~\ref{tab:exp}, we observe that vanilla GCN and DeepSets-RNI only achieves
    50\% accuracy, because they do not have expressive power beyond the 1-WL test.
    DeepSets-RSE achieves the best performance among all models with a near 100\% accuracy,
    which demonstrates the expressive power of RSE-based models is beyond the 1-WL
    test. Other models, namely Linear-RSE and GCN-RNI, also achieve comparable
    performance. It's worth mentioning that even a simple linear model (Linear-RSE)
    could achieve performance above 99\%. This indicates that the universal
    expressiveness of models with RSE is mostly from RSE itself, rather
    than the network structure.

    From \Figref{fig:exp-converge}, we find that the convergence rate of RSE-based
    models are much faster than their RNI-based counterpart. This is because
    RSE-based models are deterministic, while RNI-based models are random and require
    more training epochs to converge. The structures of DeepSets-RSE and Linear-RSE
    are simpler, which means they also train much faster than GCN-RNI\@.

    \section{Synthetic experiment on basis invariance}\label{basis-synthetic}

    As mentioned in Appendix~\ref{app:related}, SignNet \citep{signnet} only deals
    with sign ambiguity, while BasisNet \citep{signnet} has a prohibitive
    computational overhead. On the other hand, our proposed MAP addresses basis
    ambiguity efficiently, albeit with the existence of uncanonizable eigenspaces.
    In this section, we conduct a synthetic experiment to verify the ability of MAP
    on addressing basis ambiguity. We use graph isomorphic testing, a traditional
    graph task. Our focus is on 10 non-isomorphic random weighted graphs $\gG_1,
    \dots,\gG_{10}$, all exhibiting basis ambiguity issues (with the first three
    eigenvectors belonging to the same eigenspace). We sample 20 instances for each
    graph, introducing different permutations and basis choices for the initial
    eigenspace. The dataset is then split into a 9:1 ratio for training and testing,
    respectively. The task is a 10-way classification, where the aim is to
    determine the isomorphism of a given graph to one of the 10 original graphs. The
    model is given the first 3 eigenvectors as input (\textit{i.e.}\ $k=3$). The
    results are averaged over 4 different runs.

    \begin{table}[htbp]
        \centering
        \caption{Test accuracy of the synthetic graph isomorphic testing task with
        DeepSets \citep{deep-sets} using different PEs.}
        \begin{tabular}{lc}
            \toprule
            Positonal Encoding & \multicolumn{1}{c}{Accuracy} \\
            \midrule
            LapPE           & 0.11 ± 0.08 \\
            LapPE + RS      & 0.10 ± 0.09 \\
            LapPE + SignNet & 0.10 ± 0.03 \\
            LapPE + MAP     & \textbf{0.84 ± 0.21} \\
            \bottomrule
        \end{tabular}
        \label{tab:basis-synthetic}
    \end{table}

    As evident from the results in Table~\ref{tab:basis-synthetic}, approaches that
    address sign ambiguity (like RandSign and SignNet) cannot obtain nontrivial
    performance on this task. Conversely, MAP shows commendable performance. The 84\%
    accuracy, although impressive, indicates potential avenues for further enhancement.
    We believe this synthetic task could also serve as a valuable benchmark for future
    studies addressing basis invariance through canonization. Code for this
    experiment is available at
    \url{https://github.com/GeorgeMLP/basis-invariance-synthetic-experiment}.

    \section{Other positional encoding methods}\label{app:pe}

    In this paper, we proposed MAP, which is a kind of positional encodings. In
    the field of graph representation learning, many other positional encoding
    methods have also been proposed. \citet{rp} proposed Relational Pooling (RP)
    that assigns each node with an identifier that depends on the index ordering.
    They showed that RP-GNN is strictly more expressive than the original WL-GNN\@.
    However, to ensure permutation equivalence, we have to account for all possible
    $n!$ node orderings, which is computationally intractable. \citet{p-gnn} proposed
    learnable position-aware embeddings by computing the distance of a target node
    to an anchor-set of nodes, and showed that P-GNNs have more expressive power than
    existing GNNs. However, the expressive power of their model is dependent on the
    random selection of anchor sets. \citet{random-features,rni} proposed to use full
    or partial random node features and proved that their model has universal
    expressive power, but it has several defects: (1) the loss of permutation
    invariance, (2) slower convergence, and (3) poor generalization on unseen graphs
    \citep{p-gnn,depth-vs-width}. \citet{distance-encoding} proposed Distance Encoding
    (DE) that captures the distance between the node set whose representation is to
    be learned and each node in the graph. They proved that DE can distinguish node
    sets embedded in almost all regular graphs where traditional GNNs always fail.
    However, their approach fails on distance regular graphs, and computation of
    power matrices can be a limiting factor for their model's scalability. \citet{lspe}
    proposed Learnable Structural and Positional Encodings (LSPE) that decouples
    structural and positional representations by inserting MPGNNs-LSPE layers and
    showed promising performance on three molecular benchmarks.

    In particular, we will show that with Random Node Initialization (RNI),
    (1) A linear network is universal on a fixed graph,
    and (2) An MLP with just one additional message passing layer can be
    universal on arbitrary graphs.
    
    In our work, we denote RNI as concatenating a random matrix to the input node
    features. The random matrix can be sampled from Gaussian distribution, uniform
    distribution, etc. Without loss of generality, we will assume that each entry
    of the random matrix is sampled independently from the standard Gaussian
    distribution $N(0,1)$.

    \begin{definition}
        A GNN with RNI is defined by concatenating a random matrix\/ $\rmR$ to the
        input node features, i.e., $\mX'=[\mX,\rmR]$, where $\mX$ are the original
        node features, $\mX'$ are the modified node features and each entry of\/
        $\rmR$ is sampled independently from the standard Gaussian distribution
        $N(0,1)$. The value of\/ $\rmR$ is sampled at every forward pass of GNN\@.
    \end{definition}

    To study the effects of RNI on the expressiveness of GNNs, we consider two types
    of tasks: tasks on a fixed graph (e.g., node classification) and tasks on
    arbitrary graphs (e.g. graph classification). On a fixed graph, we aim to learn
    a function $f\colon\R^{n\times d}\to\R^{n\times d'}$ that transforms the feature
    of each node $v_i$ to a presentation vector $\mZ_{i,:}\in\R^{1\times d'}$. We
    claim that a linear GNN with RNI in the form
    \begin{equation}\label{eq:linear-rni}
        [\mX,\rmR]\mW=\mZ
    \end{equation}
    is universal, where $\mW\in\R^{d\times d'}$ are the network parameters, and $\mZ
    \in\R^{n\times d'}$ is the desired output. In other words, we have the
    universality theorem of linear GNNs with RNI on a fixed graph:

    \begin{theorem}\label{thm:linear-rni-universal}
        On a fixed graph $\gG$, a linear GNN with RNI defined by \Eqref{eq:linear-rni}
        is equivariant and can produce any prediction $\mZ\in\R^{n\times d'}$ with
        probability\/ $1$.
    \end{theorem}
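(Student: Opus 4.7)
The plan is to decompose the claim into two pieces, permutation equivariance and universality, and handle each separately. First I would fix the dimensions: interpret $\rmR\in\R^{n\times d_R}$ with enough random columns that the augmented feature matrix $[\mX,\rmR]\in\R^{n\times(d+d_R)}$ can have row rank $n$ (so in particular $d+d_R\geq n$, and accordingly the weight $\mW$ lives in $\R^{(d+d_R)\times d'}$; the statement $\mW\in\R^{d\times d'}$ is reinterpreted with $d$ standing for the total input dimension after concatenation).

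For equivariance, I would prove equivariance \emph{in distribution} under the permutation action on nodes. Given a permutation matrix $\mP$, the network applied to $\mP\mX$ produces $[\mP\mX,\rmR']\mW$ for a freshly drawn $\rmR'$. Using the identity $\mP[\mX,\rmR]=[\mP\mX,\mP\rmR]$ and the fact that the i.i.d.\ standard Gaussian law is invariant under the row permutation $\rmR\mapsto\mP\rmR$, the distribution of $[\mP\mX,\rmR']\mW$ equals the distribution of $\mP[\mX,\rmR]\mW$. Thus the output is permutation equivariant in law, matching the notion of equivariance appropriate for a stochastic model with RNI.

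For universality, the key claim is that with probability one over $\rmR$, the matrix $[\mX,\rmR]$ has full row rank $n$. I would argue this by noting that the condition $\rank([\mX,\rmR])<n$ is equivalent to the vanishing of every $n\times n$ minor of $[\mX,\rmR]$, each of which is a nontrivial polynomial in the entries of $\rmR$ (nontriviality holds since we can always choose $\rmR$ so that $[\mX,\rmR]$ contains an identity submatrix of size $n$). The zero set of a nontrivial polynomial is an algebraic variety of Lebesgue measure zero in $\R^{nd_R}$, and the Gaussian law is absolutely continuous, so $\Pr\{\rank([\mX,\rmR])<n\}=0$. Off this null event, for any target $\mZ\in\R^{n\times d'}$ one can take, e.g., $\mW=[\mX,\rmR]^{+}\mZ$ (the Moore--Penrose pseudoinverse solution), which satisfies $[\mX,\rmR]\mW=\mZ$ exactly.

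The hard part will not be any deep technicality but rather being precise about \emph{which} notion of equivariance is in force (almost-sure versus in-distribution) and about reconciling the dimension convention in the theorem statement with what universality actually requires (namely $d+d_R\geq n$ and a nondegenerate $\rmR$). Once those conventions are pinned down, the rank argument via the measure-zero vanishing locus of the maximal minors is the central step, and the rest is a direct application of linear algebra.
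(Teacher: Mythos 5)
Your proposal is correct and takes essentially the same route as the paper: almost-sure full row rank of $[\mX,\rmR]$ plus exact solvability of the linear system for the universality claim, and permutation equivariance in distribution via invariance of the i.i.d.\ Gaussian law under row permutations. The paper takes $\rmR\in\R^{n\times n}$, proving full rank by a sequential column-span argument (Lemma~\ref{lem:1}) and solvability via the rank criterion $\rank(\mA)=\rank([\mA,\mB])$ (Lemma~\ref{lem:2}), whereas you use a measure-zero argument on minors and the pseudoinverse solution; these are interchangeable details, with the only caveat that guaranteeing rank $n$ is achievable for an arbitrary (possibly degenerate) $\mX$ really requires $d_R\geq n$ random columns as in the paper's setup, not merely $d+d_R\geq n$.
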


    We prove Theorem~\ref{thm:linear-rni-universal} in Appendix~\ref{app:linear-rni-universal}.

    On arbitrary graphs, the target function is not only dependent on the node features,
    but on the graph structure $\hat{\mA}\in\R^{n\times n}$ as well. Let $\Omega
    \subset\R^{n\times d}\times\R^{n\times n}$ be a compact set of graphs with
    $[\mX,\hat{\mA}]\in\Omega$, where $\mX$ are the node features and $\hat{\mA}$
    is the normalized adjacency matrix.
    We wish to learn a function $f\colon\Omega\to\R$ that transforms each graph to its
    label. Since $f$ is also dependent on $\hat{\mA}$, an MLP-based network with
    $\mX'$ as input is not expressive enough, and we need additional graph convolutional
    layers to obtain information about the graph structure. However, \citet{rgnn}
    proved that with just \emph{one} additional message passing layer, an MLP with
    RNI can approximate any continuous invariant graph function $f$.

    \begin{theorem}[\citet{rgnn}]\label{thm:rgnn-universal}
        Given a compact set of graphs\/ $\Omega\subset\R^{n\times d}\times\R^{n\times n}$,
        a GNN with one message passing layer, an MLP network with RNI can approximate
        an arbitrary continuous invariant graph function $f\colon\Omega\to\R$ to an
        arbitrary precision.
    \end{theorem}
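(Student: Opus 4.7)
The plan is to combine three ingredients: (i) the node-distinguishability provided by the random features $\rmR$, (ii) the one-hop structural information revealed by a single message passing layer, and (iii) the standard universal approximation property of MLPs on compact sets. Concretely, I would view the architecture as a composition $\MLP\circ\mathrm{Pool}\circ\mathrm{MP}([\mX,\rmR],\hat{\mA})$ and show that, with probability one over the draw of $\rmR$, the intermediate representation already separates non-isomorphic graphs in $\Omega$, at which point a standard MLP approximation argument closes the gap.

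First, I would note that the rows of $[\mX,\rmR]$ are almost surely pairwise distinct, since $\rmR$ has absolutely continuous entries, so any two rows agree only on a measure-zero event. After one message passing layer, each node's embedding is a function of its own random fingerprint and the multiset of its neighbors' fingerprints; since the fingerprints are all distinct a.s., this multiset implicitly encodes the node's one-hop neighborhood structure together with the original node features $\mX$. Second, I would invoke an argument analogous to the proof of Theorem~\ref{thm:linear-rni-universal}: given distinct per-node random features, any desired node-level mapping can be realized, and hence the MP layer can be chosen (or its output processed by the MLP) to produce a representation from which $\hat{\mA}$ and $\mX$ are jointly recoverable up to permutation. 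Third, using permutation-invariant pooling followed by the MLP, I would apply a Cybenko/Hornik-style universal approximation theorem on the compact set obtained as the image of $\Omega$ under the (random) featurization; compactness of $\Omega$ together with continuity of $f$ then yields uniform $\varepsilon$-approximation.

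The hard part will be making precise the claim that a single message passing layer suffices. In the absence of RNI, one round of aggregation captures only local structure and is strictly weaker than $1$-WL, so the non-trivial content is that RNI shifts the bottleneck: because every node already carries a unique identifier, the single MP layer no longer needs to propagate identity information, and a sufficiently wide MLP on top can reconstruct any invariant graph statistic from the resulting (node,\,neighborhood-multiset) pairs. Formalizing this requires (a) an injectivity statement saying that the map from $[\mX,\hat{\mA},\rmR]$ to the MP output is injective on isomorphism classes with probability one, and (b) a careful handling of the randomness: since $\rmR$ is resampled per forward pass, ``approximation'' must be interpreted either in expectation or with high probability over $\rmR$, and the chosen MLP parameters must work uniformly over the compact $\Omega$.

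A secondary obstacle is preserving the invariance of $f$: the RNI itself is not permutation-equivariant in a pointwise sense, so invariance of the final output must come from the permutation-invariant pooling together with the fact that the \emph{distribution} of $\rmR$ is permutation-invariant. I would address this by showing that both sides of the approximation claim depend on $\rmR$ only through symmetric functions of its rows, after which the usual approximation machinery applies and the bound carries through uniformly on $\Omega$.
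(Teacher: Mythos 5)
Your plan follows essentially the same route as the paper: the paper does not prove this theorem itself but presents it as a direct inference of the proof of Proposition~1 in \citet{rgnn}, whose construction — a single message-passing layer that transfers the graph's structural information into the (randomly-identified) node features, followed by a DeepSets-style MLP that approximates $f$ — is exactly the pipeline you describe. The subtleties you flag (high-probability versus in-expectation reading of the resampled $\rmR$, and invariance holding only through the permutation-invariant distribution of the random features) are genuine, but they are handled in the cited work rather than in this paper, so your sketch is consistent with the intended argument.
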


    Theorem~\ref{thm:rgnn-universal} is a direct inference of the proof of
    Proposition~1 in \citet{rgnn}, where the authors constructed a RGNN that first
    transfers the graph structural information to the node features through a message
    passing layer, and then approximates $f$ with a DeepSets network, which is an
    MLP-based network.

    \section{Toy Examples}\label{app:toy}

    \subsection{Toy examples for Algorithm~\ref{alg:sign}}

    \begin{figure}[htbp]
        \centering
        \def\svgwidth{130mm}
        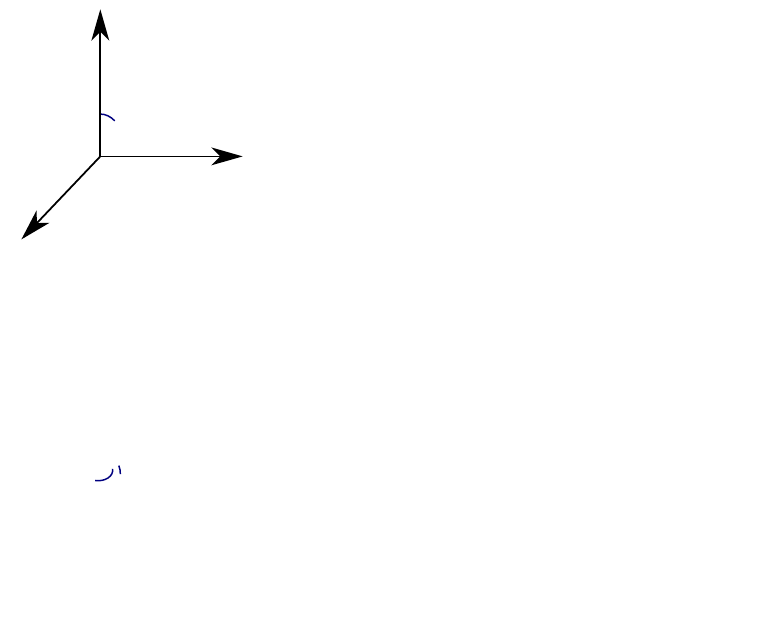
        \caption{A toy example illustrating our algorithm for eliminating sign
        ambiguity. \textbf{Top}: The angle between the $z$-axis and $\vu$ is the
        smallest, so we choose $+\vu$ to maximize $\vu^\top\ve_z$.
        \textbf{Bottom}: The angle between both $x$ and $y$-axes and $\vu$ are the
        smallest, so we choose $+\vu$ to maximize $\vu^\top(\ve_x+\ve_y)$.}
        \label{fig:sign-example}
    \end{figure}

    We give toy examples to help illustrate our MAP-sign
    algorithm. As shown on the top row of \Figref{fig:sign-example}, we have $n=3$
    and two possible sign choices for the eigenvector $\vu$. Our algorithm first
    compares the angles (or equivalently, the absolute value of inner product) between
    $\vu$ and the standard basis: $\ve_x,\ve_y,\ve_z$, and pick the smallest one
    (the one with the largest absolute inner product), in this case $\ve_z$. Thus
    we let $\vx_h=\ve_z$ and choose the sign that maximize $\vu^\top\vx_h$.
    In the first example we have $(+\vu)^\top\vx_h>0>(-\vu)^\top\vx_h$,
    thus $+\vu$ is chosen instead of $-\vu$. This choice is \emph{unique} and
    \emph{permutation-equivariant}.
    
    It is possible though, that the angle between $\vu$ and more than 1 basis vectors
    are equal. As shown on the bottom row of \Figref{fig:sign-example}, the angle
    between both $\ve_x,\ve_y$ and $\vu$ are maximum. In this case we let $\vx_h=
    \ve_x+\ve_y$ be their sum and maximize $\vu^\top\vx_h$, thus $+\vu$ is
    chosen. A special case is when $\vu$ and $\vx_h$ are perpendicular. If this
    happens, we go on to pick the basis vector with the second largest angle and
    continue this process.

    \subsection{Toy examples for Algorithm~\ref{alg:basis}}

    \begin{figure}[htbp]
        \centering
        \def\svgwidth{100mm}
        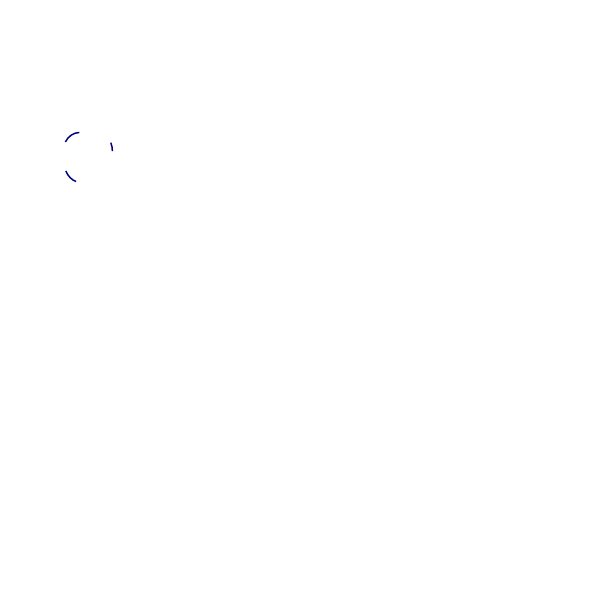
        \caption{A toy example illustrating our algorithm for eliminating basis
        ambiguity. First, we sort the angles between $V$ and the standard basis
        vectors, and set $\vx_1\coloneqq\ve_x,\vx_2\coloneqq\ve_y$. Next, we choose
        $\vu_1\in V$ that maximizes $\vu_1^\mathrm{T}\vx_1$. Finally, we choose
        $\vu_2\in\langle\vu_1\rangle^\perp$ that maximizes $\vu_2^\mathrm{T}\vx_2$.
        This gives us a unique basis $\vu_1,\vu_2$ of $V$.}
        \label{fig:basis-example}
    \end{figure}

    We give a toy example to help illustrate our MAP-basis algorithm. As shown in
    \Figref{fig:basis-example}, we have $n=3$, $d=2$ with a two-dimensional
    eigenspace $V$. Our algorithm first compares the angles (or equivalently,
    the length of orthogonal projection) between $V$ and the standard basis:
    $\ve_x,\ve_y,\ve_z$, and pick the two smallest one (the two with the largest
    and second largest lengths of orthogonal projection), in this case $\ve_x$ and
    $\ve_y$. Thus we let $\vx_1=\ve_x$, $\vx_2=\ve_y$. Then we choose $\vu_1\in V$
    that maximizes $\vu_1^\top\vx_1$, which is just the orthogonal projection of
    $\vx_1$ onto $V$. Finally we choose $\vu_2\in\langle\vu_1\rangle^\perp$ that
    maximizes $\vu_2^\top\vx_2$, which is the orthogonal projection of $\vx_2$ onto
    $\langle\vu_1\rangle^\perp$. This gives us a basis $\vu_1,\vu_2$ of $V$. This
    choice is \emph{unique} and \emph{permutation-equivariant}.

    It is possible though, that the angle between $V$ and more than 1 basis vectors
    are equal. For example, if $\langle V,\ve_x\rangle=\langle V,\ve_y\rangle$ are
    both the smallest angles between $V$ and the standard basis, then we let
    $\vx_1=\ve_x+\ve_y$. The same goes for the second smallest angle, the third
    smallest angle, and so on.

    \section{Discussions on the assumptions}

    \subsection{Discussions on Assumption~\ref{ass:mild}}\label{app:assumption1}

    The purpose of Section~\ref{sec:sign} is to uniquely determine the signs of
    the eigenvectors while also preserving their permutation-equivariance. One could
    easily come up with simple solutions such as choosing the signs such that the
    sum of the entries of eigenvectors are positive, or signs such that the element
    with the greatest absolute value is positive. In fact, \citet{signnet} has
    proposed a more general way of choosing signs:
    \begin{quotation}
        We also consider ablations in which we \ldots{} choose a canonical sign for each
        eigenvector by maximizing the norm of positive entries.
    \end{quotation}

    This ``canonical sign'' approach did not work well because (1) a large percentage
    of eigenvectors in real-world datasets have the same norm for positive and
    negative entries, thus this approach does not actually solve sign ambiguity
    of these eigenvectors; (2) this approach cannot be generalized to multiple
    eigenvalues. As a reference, we list the number of eigenvectors that ``canonical
    sign'' fails in Table~\ref{tab:canonical}.

    \begin{table}[htbp]
        \centering
        \caption{The number of eigenvectors with the same norm for positive and
        negative entries, the total number of eigenvectors, and the ratio of
        eigenvectors that ``canonical sign'' approach fails in real-world datasets.}
        \resizebox{\textwidth}{!}{
            \begin{tabular}{cccccccc}
                \toprule
                Dataset         & ogbg-molesol & ogbg-molfreesolv & ogbg-molhiv & ogbg-mollipo     & ogbg-moltox21    & ogbg-moltoxcast  & ogbg-molpcba \\
                \midrule
                \#Violation     & 4060         & 2135             & 209854      & 15115            & 31365            & 35174            & 1536415 \\
                \#Eigenvectors  & 14991        & 5600             & 1049163     & 113568           & 145459           & 161088           & 11373137 \\
                Ratio           & 27.08\,\%    & 38.13\,\%        & 20.00\,\%   & 9.84\,\%         & 21.56\,\%        & 21.84\,\%        & 13.51\,\% \\
                \bottomrule
            \end{tabular}
        }
        \label{tab:canonical}
    \end{table}

    Some examples of such eigenvectors in real-world datasets are as follows.
    \begin{center}
\begin{verbatim}
[ 0.0000,  0.0000, -0.0000,  0.0000,  0.1364, -0.0965, -0.0965, -0.1575,
  0.0965, -0.0000,  0.1575, -0.0965, -0.2784,  0.2227, -0.1364, -0.0000,
  0.3341, -0.2363, -0.2363, -0.1364,  0.2227, -0.0000, -0.0487,  0.0345,
  0.0345, -0.0877,  0.0620,  0.0620,  0.2784, -0.2227,  0.1364,  0.0000,
 -0.3341,  0.2363,  0.2363,  0.1364, -0.2227],
[ 0.0000,  0.0000,  0.0000, -0.0796,  0.0796,  0.0975, -0.0796, -0.0563,
  0.0000,  0.2087, -0.3615,  0.2951,  0.0000, -0.1815,  0.1815,  0.4767,
 -0.4767, -0.0029,  0.0051, -0.0042,  0.0000, -0.0458,  0.0458,  0.0416,
 -0.0416, -0.1495,  0.2589, -0.2114,  0.0000, -0.0975,  0.0975, -0.1139,
  0.1139],
[ 0.0000,  0.0000,  0.0000, -0.0025,  0.0025,  0.0031, -0.0025, -0.0018,
  0.0000, -0.0025,  0.0043, -0.0035, -0.0000, -0.0987,  0.0987,  0.0952,
 -0.0952, -0.1271,  0.2202, -0.1798,  0.0000, -0.3234,  0.3234,  0.1436,
 -0.1436,  0.1313, -0.2275,  0.1858,  0.0000, -0.2524,  0.2524,  0.4381,
 -0.4381].
\end{verbatim}
    \end{center}

    Assumption~\ref{ass:mild}, on the other hand, is less restrictive. It requires
    that the eigenvectors are not perpendicular to at least one of the vectors
    $\vx_i$. For random unit vectors or random weighted graphs,
    Assumption~\ref{ass:mild} has $0$ possibility of being violated. The number of
    eigenvectors violating Assumption~\ref{ass:mild} in real-world datasets are
    listed in Table~\ref{tab:ass1}. It can be observed that the ratio of violation
    tends to become smaller as the graph size becomes larger.

    \begin{table}[htbp]
        \centering
        \caption{The number of eigenvectors violating Assumption~\ref{ass:mild},
        the total number of eigenvectors, and the ratio of violation in real-world
        datasets. We ignore small graphs with no more than 5 nodes.}
        \resizebox{\textwidth}{!}{
            \begin{tabular}{cccccccc}
                \toprule
                Dataset          & ogbg-molesol     & ogbg-molfreesolv & ogbg-molhiv      & ogbg-mollipo     & ogbg-moltox21    & ogbg-moltoxcast  & ogbg-molpcba \\
                \midrule
                \#Violation      & 727              & 388              & 29558            & 3328             & 5418             & 6032             & 343088 \\
                \#Eigenvectors   & 14551            & 5048             & 1049101          & 113568           & 144421           & 159987           & 11372381 \\
                Ratio            & 5.00\,\%         & 7.69\,\%         & 2.82\,\%         & 2.93\,\%         & 3.75\,\%         & 3.77\,\%         & 3.02\,\% \\
                \bottomrule
            \end{tabular}
        }
        \label{tab:ass1}
    \end{table}

    \subsection{Discussions on Assumption~\ref{ass:k} and Assumption~\ref{ass:perp}}
    \label{app:assumption2}

    Assumption~\ref{ass:k} requires that the projections of $\ve_1,\ve_2,\dots,\ve_n$
    on $V$ have at least $d$ distinct lengths, while Assumption~\ref{ass:perp}
    requires that each $\vx_i$ is not perpendicular to $\langle\vu_1,\dots,\vu_{i-1}
    \rangle^\perp$, which is a subspace of $V$. We illustrate two examples of
    these assumptions being violated when $n=3$ and $d=2$ in Figure~\ref{fig:ass23}.
    In the left figure, the projections of $\ve_1,\ve_2,\ve_3$ on $V$ all have the
    same lengths, thus $k=1<d$, violating Assumption~\ref{ass:k}. In the right figure,
    $\vx_2$ is perpendicular to the orthogonal complementary space of $\mathop{\mathrm
    {span}}(\vu_1)$ in $V$, violating Assumption~\ref{ass:perp}. We observe that
    the eigenspace $V$ needs to obey certain kinds of symmetries in order to violate
    either assumptions. For random orthonormal matrices and random weighted graphs,
    these assumptions have $0$ possibility of being violated; and in real-world
    datasets, the ratio of violation tends to become smaller as the graph size
    becomes larger.

    \begin{figure}[htbp]
        \centering
        \def\svgwidth{100mm}
        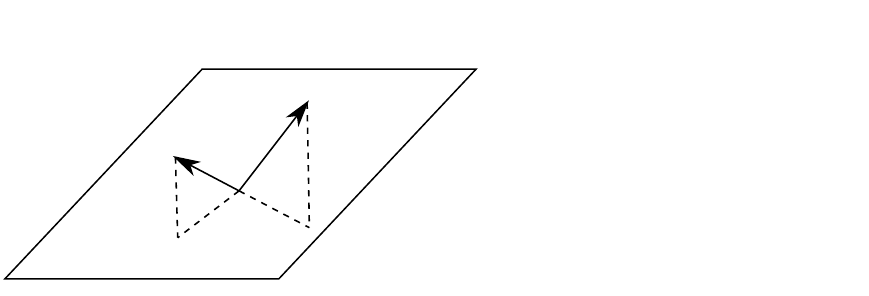
        \caption{Examples of Assumption~\ref{ass:k} and Assumption~\ref{ass:perp}
        being violated when $n=3$ and $d=2$. \textbf{Left:} the projections of
        $\ve_1,\ve_2,\ve_3$ are of the same length, thus $k=1<d$, violating
        Assumption~\ref{ass:k}. \textbf{Right:} $\vx_2$ is perpendicular to the
        orthogonal complementary space of $\mathop{\mathrm{span}}(\vu_1)$ in $V$,
        violating Assumption~\ref{ass:perp}.}
        \label{fig:ass23}
    \end{figure}

    The number of multiple eigenvalues violating these assumptions in real-world
    datasets are listed in Table~\ref{tab:ass23}. Indeed there is still around
    $20\,\%$ of violation on large datasets, but since multiple eigenvalues only
    make up a small portion of all eigenvalues (Table~\ref{tab:m}), the ratio of
    violation is relatively small and has negligible influence on the model
    performance.

    \begin{table}[htbp]
        \centering
        \caption{The number of eigenvalues violating Assumption~\ref{ass:k} or
        Assumption~\ref{ass:perp} in real-world datasets. $n_1$: the number of
        eigenvalues violating Assumption~\ref{ass:k}. $n_2$: the number of
        eigenvalues not violating Assumption~\ref{ass:k} but violating
        Assumption~\ref{ass:perp}. $N_1$: the total number of multiple eigenvalues.
        $p_1$: the ratio of multiple eigenvalues violating Assumption~\ref{ass:k}.
        $p_2$: the ratio of multiple eigenvalues violating Assumption~\ref{ass:perp}.
        $N_2$: the total number of eigenvalues. $p_3$: the ratio of all eigenvalues
        violating Assumption~\ref{ass:k}. $p_4$: the ratio of all eigenvalues
        violating Assumption~\ref{ass:perp}. We ignore small graphs with no more
        than 5 nodes.}
        \resizebox{\textwidth}{!}{
            \begin{tabular}{cccccccc}
                \toprule
                Dataset          & ogbg-molesol     & ogbg-molfreesolv & ogbg-molhiv      & ogbg-mollipo     & ogbg-moltox21    & ogbg-moltoxcast  & ogbg-molpcba \\
                \midrule
                $n_1$            & 39               & 30               & 5315             & 440              & 646              & 873              & 30844 \\
                $n_2$            & 126              & 45               & 6329             & 599              & 1073             & 1450             & 61318 \\
                $N_1$            & 738              & 286              & 52367            & 5391             & 8772             & 10556            & 491247 \\
                $p_1$            & 5.28\,\%         & 10.49\,\%        & 10.15\,\%        & 8.16\,\%         & 7.36\,\%         & 8.27\,\%         & 6.28\,\% \\
                $p_2$            & 17.07\,\%        & 15.73\,\%        & 12.09\,\%        & 11.11\,\%        & 12.23\,\%        & 13.74\,\%        & 12.48\,\% \\
                $N_2$            & 13420            & 4654             & 952055           & 104669           & 129730           & 141042           & 10627757 \\
                $p_3$            & 0.29\,\%         & 0.64\,\%         & 0.56\,\%         & 0.42\,\%         & 0.50\,\%         & 0.62\,\%         & 0.29\,\% \\
                $p_4$            & 0.94\,\%         & 0.97\,\%         & 0.66\,\%         & 0.57\,\%         & 0.83\,\%         & 1.03\,\%         & 0.58\,\% \\
                \bottomrule
            \end{tabular}
        }
        \label{tab:ass23}
    \end{table}

    \section{Discussions on random graphs}\label{app:random-graphs}

    In this section we summarize some existing results on random graphs and more
    generally random matrices. These discussions help us to have a better understanding
    of how eigenvalues and eigenvectors of random graphs distribute, and the
    probability that they are uncanonizable by our MAP algorithm. Due to the
    theoretical complexity, we do make some simplifications in our discussions.

    We first give some basic definitions about random matrices and random graphs.

    \begin{definition}[Wigner matrix]
        Let $\xi,\zeta$ be real random variables with mean zero. We say $W$ is
        a \textbf{Wigner matrix} of size $n$ with atom variables $\xi,\zeta$ if
        $W=(w_{ij})_{i,j=1}^n$ is a random real symmetric $n\times n$ matrix that
        satisfies the following conditions.
        \begin{itemize}
            \item (independence) $\{w_{ij}\colon 1\leq i\leq j\leq n\}$ is a
            collection of independent random variables.
            \item (off-diagonal entries) $\{w_{ij}\colon 1\leq i<j\leq n\}$ is a
            collection of independent and identically distributed (iid) copies of
            $\xi$.
            \item (diagonal entries) $\{w_{ii}\colon 1\leq i\leq n\}$ is a
            collection of iid copies of $\zeta$.
        \end{itemize}
    \end{definition}

    If $\xi$ and $\zeta$ have the same distribution, we say $W$ is a Wigner matrix
    with atom variable $\xi$. We always assume that $\xi$ is \emph{non-degenerate},
    namely that there is no value $c$ such that $\mathbb{P}(\xi=c)=1$.

    \begin{definition}[symmetric Bernoulli matrix]
        Let $0<p<1$, and take $\xi$ to be the random variable
        \[
            \xi\coloneqq
            \begin{cases}
                1-p,&\text{with probability }p,\\
                -p,&\text{with probability }1-p.
            \end{cases}
        \]
        Then $\xi$ has zero mean. Let $B_n(p)$ denote the $n\times n$ Wigner matrix
        with atom variable $\xi$. We refer to $B_n(p)$ as a \textbf{symmetric
        Bernoulli matrix} (with parameter $p$).
    \end{definition}

    \begin{definition}[sub-exponential]
        A random variable $\xi$ is called \textbf{sub-exponential} with exponent
        $\alpha>0$ if there exists a constant $\beta>0$ such that
        \[
            \mathbb{P}(|\xi|>t)\leq\beta\exp(-t^\alpha/\beta)\quad\text{for all }
            t>0.
        \]
    \end{definition}

    \begin{definition}[Erd\H{o}s-R\'enyi random graph]
        Let $G(n,p)$ denote the Erd\H{o}s-R\'enyi random graph on $n$ vertices
        with edge density $p$. That is, $G(n,p)$ is a simple graph on $n$
        vertices such that each edge $\{i,j\}$ is in $G(n,p)$ with probability
        $p$, independent of other edges.
    \end{definition}

    Let $A_n(p)$ be the zero-one adjacency matrix of $G(n,p)$. $A_n(p)$ is not
    a Wigner matrix since its entries do not have mean zero. Let $\tilde{G}(n,p)$
    denote the Erd\H{o}s-R\'enyi random graph with loops on $n$ vertices with
    edge density $p$. Let $\tilde{A}_n(p)$ denote the zero-one adjacency matrix
    of $\tilde{G}(n,p)$. Technically, $\tilde{A}_n(p)$ is not a Wigner random
    matrix because its entries do not have mean zero. However, we can view
    $\tilde{A}_n(p)$ as a low rank deterministic perturbation of a Wigner matrix.
    That is, we can write $\tilde{A}_n(p)$ as
    \[
        \tilde{A}_n(p)=p\mJ_n+B_n(p),
    \]
    where $\mJ_n$ is the all-ones matrix.
    
    \textbf{Simplification in our discussions.} The adjacency matrix of
    Erd\H{o}s-R\'enyi random graphs can be viewed as a rank-one perturbation of
    a zero-mean symmetric random matrix, which has no effect on the distribution
    of eigenvalues in the limit $n\to\infty$ \citep{low-rank-perturbation}. In
    the finite case, the difference between eigenvalues and eigenvectors of the
    perturbed and unperturbed random matrices can be bounded as well, as stated
    in the following theorems.
    \begin{theorem}[\citet{random-perturbation}]
        Let $E$ be an $n\times n$ Bernoulli random matrix, and let $A$ be an
        $n\times n$ matrix with rank $r$. Let $\sigma_1\geq\sigma_2\geq\dots\geq
        \sigma_{\min\{m,n\}}\geq 0$ be singular values of $A$, $v_1,v_2,\dots,
        v_{\min\{m,n\}}$ be corresponding singular vectors of $A$, $\sigma_1'\geq
        \dots\geq\sigma_{\min\{m,n\}}'\geq 0$ be singular values of $A+E$,
        $v_1',\dots,v_{\min\{m,n\}}$ be corresponding singular vectors of $A+E$.
        For every $\varepsilon>0$ there exists constants $C_0,\delta_0>0$
        (depending only on $\varepsilon$) such that if $\delta>\delta_0$ and
        $\sigma_1\geq\max\{n,\sqrt{n}\delta\}$, then, with probability at least
        $1-\varepsilon$,
        \[
            \sin\angle(v_1,v_1')\leq C\frac{\sqrt{r}}{\delta}.
        \]
    \end{theorem}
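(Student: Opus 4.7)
The plan is to combine a random-matrix concentration bound on the Bernoulli perturbation $E$ with a Wedin/Davis--Kahan-type $\sin\Theta$ inequality for singular vectors, and then sharpen the latter using the rank-$r$ structure of $A$.

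First, I would bound the operator norm of $E$. Since the entries of $E$ are iid, bounded, mean-zero random variables, standard random-matrix concentration results (e.g., Bai--Yin, Seginer's inequality, or non-commutative Bernstein) give $\|E\|_{\mathrm{op}} \le C_1\sqrt{n}$ with probability at least $1-\varepsilon/2$, where $C_1 = C_1(\varepsilon)$ is an absolute constant. This is the ``typical size'' of the perturbation and the probabilistic input to the proof.

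Second, I would invoke Wedin's $\sin\Theta$ theorem for the top singular pair, which in its standard form yields
\[
\sin\angle(v_1,v_1') \;\le\; \frac{C\,\|E\|_{\mathrm{op}}}{\sigma_1(A) - \sigma_2(A+E)}.
\]
Under the hypotheses $\sigma_1 \ge \max\{n,\sqrt{n}\delta\}$ and the good event $\|E\|_{\mathrm{op}} \le C_1\sqrt{n}$, a Weyl-interlacing argument shows that the denominator is at least $\sigma_1/2 \ge \tfrac12\sqrt{n}\delta$ once $\delta \ge \delta_0 := 2C_1$. Plugging in the crude operator-norm bound already gives a $\sqrt{n}/\delta$ estimate.

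Third, to replace the $\sqrt{n}$ in the numerator by $\sqrt{r}$, I would exploit that $v_1$ lies in the $r$-dimensional row space $V = \mathrm{row}(A)$. A refined Wedin/Davis--Kahan argument replaces $\|E\|_{\mathrm{op}}$ by a quantity measuring only the action of $E$ on (or within) $V$; concretely, one decomposes $E = EP_V + E(I-P_V)$ and observes that the orthogonal part is absorbed into $\sigma_1(A+E)$ through eigenvalue interlacing, while the in-subspace part $EP_V$ is the only perturbation that meaningfully rotates $v_1$. For a fixed rank-$r$ projection $P_V$ and a Bernoulli $E$, a tail bound on $\|EP_V\|_{\mathrm{op}}$ (via matrix Bernstein applied to a sum supported on an $r$-dimensional image, or via an $\varepsilon$-net over $V$) yields an estimate of the correct order. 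Feeding this into the refined Wedin bound and intersecting the two events delivers the claimed $C\sqrt{r}/\delta$ bound with probability at least $1-\varepsilon$.

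The main obstacle will be the third step: the vanilla Wedin inequality is rank-agnostic, so obtaining the improvement from $\sqrt{n}$ to $\sqrt{r}$ requires a careful subspace perturbation analysis combined with a tight probabilistic estimate on the restriction of $E$ to a fixed $r$-dimensional subspace. Tracking the constants through this argument — ensuring that the failure probabilities add up to at most $\varepsilon$ and that $C,\delta_0$ depend only on $\varepsilon$ — is the delicate part, and is precisely where the technical content of \citet{random-perturbation} lies.
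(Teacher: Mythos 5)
First, note that the paper does not prove this statement at all: it is quoted verbatim (with some artifacts, e.g.\ the undefined $m$ and the undefined $\delta$, which in the source is the spectral gap $\sigma_1-\sigma_2$) from \citet{random-perturbation} and used as a black box in the discussion of random graphs, so there is no in-paper proof to compare against. Judged on its own merits, your sketch has two genuine gaps. In Step 2 you claim that $\sigma_1(A)-\sigma_2(A+E)\geq\sigma_1/2$ follows from $\sigma_1\geq\max\{n,\sqrt{n}\delta\}$ and $\lVert E\rVert\leq C_1\sqrt{n}$; nothing in the hypotheses controls $\sigma_2(A)$, so this is unjustified. Indeed, if $\delta$ were a free parameter unrelated to the spectrum the theorem would be false (take $\sigma_1=\sigma_2$ huge, so $v_1$ can be rotated arbitrarily inside a two-dimensional singular space by an arbitrarily small perturbation); $\delta$ must be the gap $\sigma_1-\sigma_2$, and then the classical Wedin denominator $\delta-\lVert E\rVert$ can be negative, since the theorem allows $\delta$ of constant size ($\delta>\delta_0(\varepsilon)$) while $\lVert E\rVert\asymp\sqrt{n}$. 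So the vanilla Wedin bound gives nothing in the relevant regime --- which is precisely why the cited result is nontrivial.

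The second and more serious gap is the probabilistic estimate underlying Step 3: for a \emph{fixed} $r$-dimensional subspace $V$ one has $\lVert EP_V\rVert_{\mathrm{op}}=\sup_{u\in V,\,|u|=1}|Eu|\asymp\sqrt{n}$, not $\sqrt{r}$ --- already a single vector $u$ gives $|Eu|\asymp\sqrt{n}$, since $Eu$ has $n$ independent entries of constant variance. Only two-sided compressions $P_WEP_V$ onto fixed $r$-dimensional subspaces have norm $O(\sqrt{r})$ with high probability. Consequently the decomposition $E=EP_V+E(I-P_V)$ with a matrix-Bernstein or net bound on $EP_V$ cannot deliver the $\sqrt{r}$ numerator. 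The actual argument in \citet{random-perturbation} instead exploits the largeness of $\sigma_1$ (this is where both conditions $\sigma_1\geq n$ and $\sigma_1\geq\sqrt{n}\delta$ enter): from $(A+E)v_1'=\sigma_1'u_1'$ with $\sigma_1'\gtrsim\sigma_1\gg\lVert E\rVert$ one shows the component of $v_1'$ orthogonal to the row space of $A$ is negligible, and the rotation inside that $r$-dimensional space is then controlled by bilinear quantities of the form $u_i^\top Ev_j$ over the fixed singular vectors of $A$, which are $O(1)$ each and accumulate to $O(\sqrt{r})$ relative to the gap $\delta$. Your sketch gestures at this (``absorbed into $\sigma_1(A+E)$''), but the concrete mechanism you propose would fail, and supplying the correct one is the substance of the cited proof.
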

    \begin{theorem}[\citet{random-perturbation}]
        Let $E$ be an $n\times n$ Bernoulli random matrix, and let $A$ be an
        $n\times n$ matrix with rank $r$ satisfying $\sigma_1\geq n$. For every
        $\varepsilon>0$, there exists a constant $C_0>0$ (depending only on
        $\varepsilon$) such that, with probability at least $1-\varepsilon$,
        \[
            \sigma_1-C\leq\sigma_1'\leq\sigma_1+C\sqrt{r}.
        \]
    \end{theorem}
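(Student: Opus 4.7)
The plan is to prove the two inequalities separately: the lower bound comes cheaply from a Rayleigh-quotient argument plus Hoeffding concentration for a bilinear form, whereas the upper bound is the delicate half, where one must exploit the rank-$r$ structure of $A$ together with a Davis--Kahan-type perturbation analysis. Both halves rely crucially on the spectral-gap hypothesis $\sigma_1\geq n$, which ensures that the noise $E$ (whose operator norm is only $O(\sqrt{n})$) is dominated by the signal.

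For the lower bound, I would fix the top left and right singular vectors $u_1,v_1$ of $A$, which are independent of $E$, and observe $\sigma_1'\geq u_1^\top(A+E)v_1=\sigma_1+u_1^\top E v_1$. The bilinear form $u_1^\top E v_1=\sum_{i,j}u_{1,i}v_{1,j}E_{ij}$ is a weighted sum of $n^2$ independent mean-zero bounded random variables with total variance $\|u_1\|_2^2\|v_1\|_2^2=1$, so Hoeffding--Bernstein yields $|u_1^\top E v_1|\leq C(\varepsilon)$ with probability at least $1-\varepsilon/2$, giving $\sigma_1'\geq\sigma_1-C$.

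For the upper bound, let $u_1',v_1'$ be the top singular vectors of $A+E$ and let $P_U,P_V$ be the orthogonal projections onto the $r$-dimensional left and right singular subspaces of $A$, with orthonormal bases $U,V$. Decomposing $u_1'=P_U u_1'+P_U^\perp u_1'$ (and similarly for $v_1'$), the signal part satisfies $(u_1')^\top A v_1'\leq\sigma_1\|P_U u_1'\|\|P_V v_1'\|\leq\sigma_1$, and the noise part $(u_1')^\top E v_1'$ splits into a main piece $(P_U u_1')^\top E(P_V v_1')$ plus three cross pieces, each containing a $P_U^\perp$ or $P_V^\perp$. The main piece is bounded by $\|U^\top E V\|_{\mathrm{op}}$, which is an $r\times r$ matrix with independent sub-Gaussian entries of unit variance, so classical random matrix theory (Bai--Yin/Seginer) gives $\|U^\top E V\|_{\mathrm{op}}\leq C\sqrt{r}$ with high probability. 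Each cross piece is bounded by $\|E\|_{\mathrm{op}}$ times $\|P_U^\perp u_1'\|$ or $\|P_V^\perp v_1'\|$; by a Davis--Kahan estimate applied to the rank-$r$ signal plus noise, the spectral gap forces $\|P_U^\perp u_1'\|,\|P_V^\perp v_1'\|=O(\|E\|_{\mathrm{op}}/\sigma_1)=O(\sqrt{n}/n)=O(1/\sqrt{n})$, so each cross term is $O(\|E\|_{\mathrm{op}}/\sqrt{n})=O(1)$. Summing and applying a union bound over the two high-probability events gives $\sigma_1'\leq\sigma_1+C\sqrt{r}$.

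The main obstacle is producing the $\sqrt{r}$ dependence rather than the $\sqrt{n}$ one obtains from Weyl's inequality combined with the standard operator-norm bound on $E$. This requires combining two quantitatively different random-matrix inputs: a tight $O(\sqrt{r})$ bound on the \emph{compressed} noise $U^\top E V$ living in the top singular subspace, and the crude $O(\sqrt{n})$ bound on the \emph{full} noise $E$ damped by a Davis--Kahan alignment factor $O(1/\sqrt{n})$ in the complementary directions. The spectral-gap assumption $\sigma_1\geq n$ is what makes the damping factor small enough that cross terms contribute only $O(1)$, and without it the decomposition collapses back to the Weyl bound.
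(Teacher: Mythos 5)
This statement is not proved in the paper at all: it is quoted verbatim from the cited reference (\citet{random-perturbation}) as background for the discussion of random graphs, so there is no in-paper proof to compare against. Your reconstruction is essentially the standard argument for this result and is sound in outline: the lower bound via $\sigma_1'\geq u_1^\top(A+E)v_1$ plus Hoeffding for the bilinear form is correct, and the upper bound via decomposing the top singular vectors of $A+E$ along the column/row spaces of $A$ and their complements is the right mechanism for extracting the $\sqrt{r}$ rate.

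Two technical points deserve correction. First, the entries of $U^\top E V$ are \emph{not} independent when $E$ has Bernoulli (non-Gaussian) entries--they are merely uncorrelated sub-Gaussian linear forms in the entries of $E$--so Bai--Yin/Seginer does not apply as stated; the bound $\lVert U^\top E V\rVert \leq C(\varepsilon)\sqrt{r}$ still holds, but should be obtained by an $\varepsilon$-net over the unit spheres of $\R^r$ combined with Hoeffding for each fixed bilinear form $(Ux)^\top E(Vy)$. Second, be careful which ``spectral gap'' your Davis--Kahan step uses: the theorem assumes nothing about $\sigma_2,\dots,\sigma_r$ (they may be arbitrarily small), so any version of Wedin's theorem whose gap involves the nonzero spectrum of $A$ beyond $\sigma_1$ is unavailable. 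The clean route is to use the rank structure directly: since $P_U^\perp A=0$, the equation $(A+E)v_1'=\sigma_1'u_1'$ gives $\sigma_1'\,P_U^\perp u_1'=P_U^\perp E v_1'$, hence $\lVert P_U^\perp u_1'\rVert\leq\lVert E\rVert/\sigma_1'$, and $\sigma_1'\geq\sigma_1-\lVert E\rVert\geq n-C\sqrt{n}$ by Weyl (the regime of bounded $n$ being trivial by adjusting constants). This yields exactly the $O(1/\sqrt{n})$ alignment factor you want, and with it your cross-term and union-bound bookkeeping goes through, giving $\sigma_1'\leq\sigma_1+C(\varepsilon)\sqrt{r}$ as claimed.
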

    In particular, when the rank $r$ is significantly smaller than $n$, the
    bounds in the above theorems are significantly better. Thus the rank-one
    perturbation has little or no effect on the spectral distribution of the
    adjacency matrix. In the following discussions we will ignore the rank-one
    perturbation and assume that the adjacency matrix of random graphs have
    zero mean.

    We will study two cases: (1) the adjacency matrix of the random graph is
    continuously distributed (random weighted graph); (2) the adjacency matrix
    of the random graph is discretely distributed (random unweighted graph).

    It is easy to see that for a Wigner matrix with atom variables $\xi,\zeta$,
    if the distribution of $\xi$ is continuous, then with probability $1$ it
    has simple spectrum (\textit{i.e.}, all eigenvalues have multiplicity one),
    thus no basis ambiguity exists. For its single eigenvectors, we also expect
    these eigenvectors to be continuously distributed on the unit sphere, thus
    the probability that they are uncanonizable is equal to $1$. In fact, we
    can even give a explicit formula for the distribution of some simple
    random matrices, as follows.
    \begin{definition}[GOE]
        The Gaussian orthogonal ensemble (GOE) is defined as a Wigner random
        matrix with atom variables $\xi,\zeta$, where $\xi$ is a standard normal
        random variable and $\zeta$ is a normal random variable with mean zero
        and variance $2$.
    \end{definition}
    \begin{theorem}[\citet{introduction-to-random-matrices}, Section~2.5.1]
        Let $M$ be a $n\times n$ real symmetric matrix drawn from the GOE, with
        eigenvalues $\lambda_1\leq\dots\leq\lambda_n$ and corresponding eigenvectors
        $v_1,\dots,v_n$. Then the eigenvectors $v_1,\dots,v_n$ are uniformly
        distributed on
        \[
            S_+^{n-1}\coloneqq\{x=(x_1,\dots,x_n)\in S^{n-1}\colon x_1>0\},
        \]
        and the eigenvalues $\lambda_1,\dots,\lambda_n$ have joint density
        \[
            p_n(\lambda_1,\dots,\lambda_n)\coloneqq
            \begin{cases}
                \frac1{Z_n}\prod_{1\leq i<j\leq n}|\lambda_i-\lambda_j|\prod_{i=1}
                ^n\mathrm{e}^{-\lambda_i^2/4},&\text{if }\lambda_1\leq\dots\leq
                \lambda_n,\\
                0,&\text{otherwise},
            \end{cases}
        \]
        where $Z_n$ is a normalization constant.
    \end{theorem}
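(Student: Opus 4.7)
The plan is a classical orthogonal-invariance argument: since the GOE density on the space of real symmetric matrices is proportional to $\exp(-\Tr(M^2)/4)$ (by construction of the atom variances, the independent entries combine to give this trace form), the density is invariant under conjugation $M \mapsto OMO^\top$ for every orthogonal $O$. I would exploit this to pass from the Lebesgue measure on symmetric matrices to a measure on $(\Lambda,O)$ via the spectral decomposition $M = O\Lambda O^\top$, and then read off the marginals.

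First, I would parametrize $M$ by its ordered eigenvalues $\lambda_1 \leq \dots \leq \lambda_n$ together with an orthogonal frame of eigenvectors. The key computation is the Jacobian of the map $(O,\Lambda) \mapsto O\Lambda O^\top$. Differentiating $M = O\Lambda O^\top$ in the tangent directions (a symmetric perturbation of $\Lambda$ plus an infinitesimal rotation $O^\top dO$ that is skew-symmetric), the off-diagonal entries of $O^\top dM\, O$ pick up factors of $(\lambda_i - \lambda_j)$ from the commutator $[\Lambda, O^\top dO]$, while the diagonal entries give $d\lambda_i$. Taking the product over $i<j$ yields the Vandermonde factor
\[
    |dM| \;=\; \prod_{1\leq i<j\leq n} |\lambda_i - \lambda_j| \; d\lambda_1 \cdots d\lambda_n \; dO,
\]
where $dO$ is the Haar measure on the orthogonal group (restricted to a fundamental domain for the eigenvector sign ambiguity).

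Next, because $\Tr(M^2) = \sum_i \lambda_i^2$ depends only on the spectrum, the full joint density factorizes into an eigenvalue part proportional to $\prod_{i<j}|\lambda_i-\lambda_j|\prod_i e^{-\lambda_i^2/4}$ and an independent uniform (Haar) distribution on the orthogonal frame. Restricting to the ordered chamber $\lambda_1 \leq \dots \leq \lambda_n$ removes the $n!$-fold overcounting and absorbs into the normalization constant $Z_n$, giving the claimed joint density $p_n$. Since each eigenvector $v_i$ is the $i$-th column of the Haar-distributed orthogonal matrix, it is uniform on the unit sphere $S^{n-1}$; imposing the convention that its first coordinate is positive (which fixes the sign ambiguity) yields the uniform distribution on $S_+^{n-1}$, independently of the eigenvalues.

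The main obstacle is the Jacobian computation: doing it cleanly requires choosing coordinates carefully on the orthogonal group near the identity, showing that the tangential directions orthogonal to the $O(1)^n$ sign stabilizer contribute exactly the factors $(\lambda_i-\lambda_j)$, and verifying the argument on the generic open set where the spectrum is simple (a null set is excluded automatically since continuous distributions of Wigner type place zero mass on the discriminant variety). A secondary subtlety is normalization: the constant $Z_n$ absorbs both the Gaussian normalizer $\prod_i (4\pi)^{-1/2}$ and the volume of the quotient $O(n)/\{\pm 1\}^n$, so one must be careful not to double-count the sign ambiguity when restricting the eigenvectors to $S_+^{n-1}$.
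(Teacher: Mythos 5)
This statement is quoted from \citet{introduction-to-random-matrices} (Section~2.5.1) and the paper gives no proof of its own, so there is nothing internal to compare against; your sketch is the standard argument used in that reference and it is sound: the GOE density $\propto\exp(-\Tr(M^2)/4)$ is conjugation-invariant, the change of variables $M=O\Lambda O^\top$ on the (full-measure) set of simple spectra has Jacobian $\prod_{i<j}|\lambda_i-\lambda_j|\,d\lambda\,dO$ via the commutator computation $O^\top dM\,O=d\Lambda+[O^\top dO,\Lambda]$, and the resulting factorization gives the stated eigenvalue density together with a Haar-distributed eigenvector frame (modulo the $\{\pm1\}^n$ sign stabilizer), whose columns, after fixing signs, are uniform on $S_+^{n-1}$ and independent of the spectrum. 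Your flagged subtleties (ordering versus $n!$ overcounting, the sign quotient, and constants absorbed into $Z_n$) are exactly the right ones, so the proposal is correct and matches the cited proof.
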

    Thus the eigenvectors of such Gaussian random matrices are uniformly distributed
    on the unit sphere, meaning they are canonizable with probability $1$.

    The discrete case is trickier, because when $n$ is finite, the probability
    that the eigenvectors are uncanonizable is no longer $0$. We wish that
    this probability can be upper bounded and asymptotically converges to zero.
    The first fact is that for a large class of random matrices, perturbed or
    not, they have simple spectrum with probability $1-o(1)$.
    \begin{theorem}[\citet{simple-spectrum}]
        Consider real symmetric random matrices $M_n=(\xi_{ij})_{1\leq i,j\leq n}$,
        where the entries $\xi_{ij}$ for $i\leq j$ are jointly independent with
        $\xi_{ji}=\xi_{ij}$, the upper-triangular entries $\xi_{ij},i<j$ have
        non-trivial distribution $\xi$ for some fixed $\mu>0$ such that
        $\mathbb{P}(\xi=x)\leq 1-\mu$ for all $x\in\R$. The diagonal entries
        $\xi_{ii},1\leq i\leq n$ can have an arbitrary real distribution (and
        can be correlated with each other), but are required to be independent
        of the upper diagonal entries $\xi_{ij},1\leq i<j\leq n$. Then for every
        fixed $A>0$ and $n$ sufficiently large (depending on $A,\mu$), the
        spectrum of $M_n$ is simple with probability at least $1-n^{-A}$.
    \end{theorem}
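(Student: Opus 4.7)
I would bound the probability of the bad event $E_n$ that $M_n$ has a repeated eigenvalue, i.e.\ that the discriminant $\Delta(M_n) = \prod_{1 \le i < j \le n}(\lambda_i - \lambda_j)^2$ vanishes. The strategy proceeds in two stages: first a Wegner-type gap estimate showing that eigenvalues repel at every microscopic scale, then a swapping/anti-concentration argument that upgrades ``small gap'' to ``exact coincidence'' and yields the polynomial tail $n^{-A}$.

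For the first stage I would prove that for some $c > 0$ and every $\varepsilon > 0$,
\[
\mathbb{P}\bigl(\exists\, i \ne j : |\lambda_i(M_n) - \lambda_j(M_n)| \le \varepsilon\bigr) \le n^{O(1)} \varepsilon^{c}.
\]
This is a classical Wegner-type estimate for Wigner-like random matrices. The non-degeneracy hypothesis $\mathbb{P}(\xi = x) \le 1-\mu$ enters through a Hal\'asz/Littlewood--Offord anti-concentration bound for linear combinations of independent upper-triangular entries, which controls the imaginary part of the resolvent $\mathrm{Im}\,(M_n - z I)^{-1}_{ii}$ down to arbitrarily fine spectral scales. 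This already shows the spectrum is approximately simple, but cannot by itself deliver the value $\varepsilon = 0$ needed for exact simplicity.

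To rule out exact coincidences and obtain the quantitative $n^{-A}$ bound, I would fix all entries of $M_n$ except a set $S$ of independent off-diagonal pairs $\xi_{ij} = \xi_{ji}$. Because $M_n$ depends on each such pair through a rank-two symmetric perturbation, each eigenvalue $\lambda_k$ is an algebraic function of $(\xi_{ij})_{(i,j) \in S}$ and the discriminant $\Delta(M_n)$ is polynomial of bounded degree in these variables; freezing all but one variable yields an algebraic set of at most $O(n^2)$ bad values for that one variable. The atom-mass hypothesis alone gives only a constant bound per variable, so a naive union bound over eigenvalue pairs is insufficient. The refinement is to take $|S| = \Theta(\log n)$ and run an inverse Littlewood--Offord/iterated swapping argument: at each step the multiplicative gain is controlled from below by a quantity tied to eigenvector delocalization of the conditioned matrix, which is in turn deduced from the point-mass hypothesis. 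Iterating yields exponential savings in $|S|$, and choosing $|S|$ large enough in terms of $A$ converts this into the polynomial tail $n^{-A}$.

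The main obstacle, I expect, is the passage from approximate to exact simplicity in the second stage: a Wegner-type estimate can never reach $\varepsilon = 0$, and the soft atom-mass bound handles one coincidence condition but not the union over the $\binom{n}{2}$ eigenvalue pairs. The heart of the argument is the interplay between the algebraic structure of the discriminant as a low-degree polynomial in a few matrix entries and the arithmetic anti-concentration enforced by the hypothesis on $\xi$, together with the delocalization estimate needed to ensure that the coefficients appearing in Littlewood--Offord are non-degenerate. The fact that the diagonal entries may be arbitrarily and even dependently distributed is a further technical nuisance, but it is neutralized by performing the swapping entirely among the independent off-diagonal entries and treating the diagonal as part of the frozen randomness.
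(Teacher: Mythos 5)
You should first be aware that the paper does not prove this statement at all: it is quoted verbatim as an external theorem of \citet{simple-spectrum} and used as a black box in the appendix discussion of random graphs, so there is no internal proof to compare against. Your proposal therefore has to be judged against the known proof in that reference, and on that score it has genuine gaps rather than being a complete alternative route.

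Two concrete problems. First, the two-stage structure is internally inconsistent: a Wegner-type bound $\mathbb{P}(\exists\, i\neq j:\ |\lambda_i-\lambda_j|\le\varepsilon)\le n^{O(1)}\varepsilon^{c}$, if it held for \emph{every} $\varepsilon>0$ as written, would already give exact simplicity (even with probability $1$) by letting $\varepsilon\downarrow 0$, making your second stage superfluous; what is actually provable for atomic entry distributions are gap estimates with an additive floor (or results at least as deep as the theorem being proved), so stage one either claims too much or is unavailable, and all the content is pushed into stage two. Second, in stage two the step that does all the work---that each swapped off-diagonal entry yields an independent multiplicative saving ``controlled from below by a quantity tied to eigenvector delocalization of the conditioned matrix, which is in turn deduced from the point-mass hypothesis''---is asserted, not argued, and it is exactly the crux. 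The discriminant has degree $n(n-1)$ in the entries (not bounded degree), and after conditioning on the remaining randomness it can vanish identically in the chosen entry, so root-counting gives nothing in degenerate configurations; excluding those configurations and showing that the relevant eigenvectors are not arithmetically structured is precisely where inverse Littlewood--Offord theory and a counting/net argument over structured unit vectors are needed, and your sketch does not supply them. For comparison, the actual proof in \citet{simple-spectrum} avoids the discriminant entirely: by Cauchy interlacing, an eigenvalue of $M_n$ of multiplicity $\ge 2$ must also be an eigenvalue of an $(n-1)\times(n-1)$ minor, with an eigenvector $v$ of that minor orthogonal to the removed column $X$; non-simplicity is thus reduced, after a union bound, to the event $X\cdot v=0$, and the theorem follows by showing via inverse Littlewood--Offord theorems that with probability $1-O(n^{-A-1})$ no unit eigenvector of the minor is structured enough for this orthogonality to have probability exceeding $n^{-A-1}$. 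Your plan invokes the same anti-concentration inputs but lacks both this reduction and the structural control of eigenvectors that makes them usable.
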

    Thus with probability $1-o(1)$ no basis ambiguity exists.

    For sign ambiguity, we expect that the probability that the single eigenvectors
    are canonizable asymptotically converges to $1$ as $n\to\infty$. Denote
    $\vj=(1,1,\dots,1)\in\R^n$ the all-one vector, $\hat{\vj}=(\frac1{\sqrt{n}},
    \frac1{\sqrt{n}},\dots,\frac1{\sqrt{n}})\in\R^n$ the normalized all-one vector.
    One obvious fact is that for an eigenvector $\vu\in\R^n$, if it is
    non-canonizable, then $\vu\cdot\vj=0$. Thus it suffices to show that the
    probability of the inner product between $\vu$ and $\vj$ being zero converges
    to $0$ as $n\to\infty$. This can be derived from the following theorem.
    \begin{theorem}[\citet{universal-properties-of-eigenvectors}]
        Let $\xi,\zeta$ be random variables such that
        \begin{itemize}
            \item $\xi$ and $\zeta$ are sub-exponential random variables,
            \item $\E(\xi)=\E(\zeta)=\E(\xi^3)=0$,
            \item $\E(\xi^2)=1,\E(\xi^4)=3,\E(\zeta^2)=2$,
        \end{itemize}
        and assume $\xi$ is a symmetric random variable. For each $n\geq 1$, let
        $W_n$ be an $n\times n$ Wigner matrix with atom variables $\xi,\zeta$.
        Let $\{a_n\}$ be a sequence of unit vectors with $a_n\in S^{n-1}$ such
        that $\lim_{n\to\infty}\lVert a_n\rVert_{\ell^\infty}=0$, and let $\{i_n\}$
        be a sequence of indices with $i_n\in[n]$. Then
        \[
            \sqrt{n}v_{i_n}(W_n)\cdot a_n\to N(0,1)
        \]
        in distribution as $n\to\infty$.
    \end{theorem}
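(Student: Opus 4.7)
The plan is to prove this universal CLT for Wigner eigenvector projections in two stages: establish it exactly for the Gaussian reference ensemble, then transfer to general Wigner matrices via moment matching. The moment conditions $\E\xi=\E\xi^3=0$, $\E\xi^2=1$, $\E\xi^4=3$ and $\E\zeta=0,\E\zeta^2=2$ are precisely those of $N(0,1)$ and $N(0,2)$, which are the four moments of the off-diagonal and diagonal entries of a GOE matrix. Matching them is no accident --- it is exactly what any four-moment comparison principle requires.

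For the Gaussian base case, if $W_n$ is drawn from the GOE then the orthogonal matrix of eigenvectors is Haar-distributed on $O(n)$, so each $v_{i_n}(W_n)$ is uniform on $S^{n-1}$. Writing $v=g/\lVert g\rVert$ with $g\sim N(0,I_n)$, one gets $v\cdot a_n=(g\cdot a_n)/\lVert g\rVert$. Since $\lVert a_n\rVert=1$, the numerator is exactly $N(0,1)$, and the law of large numbers gives $\lVert g\rVert^2/n\to 1$ almost surely; Slutsky's theorem then delivers $\sqrt{n}\,v\cdot a_n\to N(0,1)$. Notice this step does not even require $\lVert a_n\rVert_\infty\to 0$, since orthogonal invariance makes the law of $v\cdot a_n$ depend only on $\lVert a_n\rVert$.

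To pass to a general Wigner matrix, I would run a Lindeberg-style entry-by-entry swap: replace each independent entry of $W_n$ one at a time by its Gaussian counterpart and estimate the change in $\E F(\sqrt{n}\,v_{i_n}(W_n)\cdot a_n)$ for smooth bounded test functions $F$. Taylor-expanding in the entry being swapped, the zeroth through third-order terms cancel by moment matching, and one must show the remainder is at most $O(n^{-2-\varepsilon})$ per entry so that the total cost over the $\Theta(n^2)$ entries is $o(1)$. Sub-exponentiality of $\xi,\zeta$ supplies the tail control needed to truncate and work on a high-probability event where the Taylor expansion is legitimate. This yields convergence of characteristic functions, hence the desired distributional limit.

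The main obstacle is the remainder estimate in the swap, because eigenvector sensitivity to matrix entries is far more delicate than eigenvalue sensitivity: small spectral gaps can amplify perturbations. Controlling the derivatives of $v_{i_n}(W_n)\cdot a_n$ with respect to a single entry requires representing the eigenvector via the resolvent $G(z)=(W_n-z)^{-1}$ and combining the isotropic local semicircle law with the eigenvector delocalization bound $\lVert v_i\rVert_\infty\le n^{-1/2}\mathrm{polylog}(n)$ of Erd\H{o}s--Schlein--Yau type. This is exactly where $\lVert a_n\rVert_\infty\to 0$ plays its role: combined with delocalization of $v_i$, it forces each single entry to influence $v_i\cdot a_n$ only mildly, and that smallness is what makes the Lindeberg telescope collapse to $o(1)$ rather than blowing up.
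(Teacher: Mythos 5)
There is nothing in the paper to compare against here: this statement is imported verbatim from \citet{universal-properties-of-eigenvectors} and used as a black box in the discussion of random graphs, so the paper contains no proof of it. Your outline does reproduce the strategy of the cited source --- the GOE base case via orthogonal invariance (eigenvectors uniform on $S^{n-1}$, so $\sqrt{n}\,v\cdot a_n=\sqrt{n}\,(g\cdot a_n)/\lVert g\rVert\to N(0,1)$ by Slutsky), followed by a Lindeberg/four-moment swap to transfer to general Wigner ensembles --- so it is the same route, not a different one.

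As a proof, however, it has a genuine gap exactly where you flag ``the main obstacle'': the per-entry remainder bound in the swap is the entire content of the four moment theorem for \emph{eigenvectors}, and it does not follow from the isotropic local law plus $\ell^\infty$-delocalization alone. The map from a single matrix entry to $v_{i_n}(W_n)\cdot a_n$ is not uniformly smooth: its derivatives are inversely proportional to the spectral gaps around $\lambda_{i_n}$, so one additionally needs a gap/level-repulsion estimate (or a regularized resolvent proxy for the eigenvector together with control of the exceptional event of small gaps) before the Taylor expansion can be closed; the symmetry assumption on $\xi$, which you never use, also enters in the cited argument when fixing the sign ambiguity of $v_{i_n}$. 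Your moment bookkeeping is also slightly off: if only the terms through third order cancel, the fourth-order term contributes $\asymp n^{-2}$ per entry and the telescope over $\Theta(n^2)$ swaps is $O(1)$, not $o(1)$; you must invoke the matched fourth moments ($\E\xi^4=3$, which you list but do not use) so that the genuine remainder is the fifth-order term of size $O(n^{-5/2+\varepsilon})$, giving a total $o(1)$. With those ingredients supplied --- i.e.\ essentially the Tao--Vu four moment theorem for eigenvector coefficients --- the sketch becomes the standard proof; without them it is an outline of the difficulty rather than a resolution of it.
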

    Thus in the limit as $n\to\infty$, by taking $a_n=\hat{\vj}$, we have the inner
    product between the eigenvector and $\vj$ following a normal distribution.
    The probability that they have zero inner product is $0$, indicating these
    eigenvectors are almost always canonizable.

    \textbf{Related readings.} We refer the readers to
    \citet{introduction-to-random-matrices} for a introduction of random matrices,
    \citet{low-rank-perturbation} for a survey about the singular values and vectors
    of low rank perturbations of large random matrices, \citet{eigenvectors-survey}
    for a comprehensive survey about eigenvectors of random matrices.

    \section{Further weakening the assumptions}

    \subsection{Further weakening Assumption~\ref{ass:mild}}

    In Section~\ref{sec:canonical}, we mentioned that some eigenvectors are
    intrinsically \emph{uncanonizable}, meaning that it is impossible to
    canonize them based on themselves. However, in graph-level tasks, the input
    is a whole graph with $n$ eigenvectors, not a single eigenvector. When the input
    graph is permuted by a permutation $\sigma$, all the eigenvectors are also
    permuted in the same way. This raises the question of whether we can canonize
    one eigenvector based on other eigenvectors in the same graph.

    Thus we propose to further weaken Assumption~\ref{ass:mild} in the following
    steps. For a given input graph, we first divide its eigenvectors into two
    sets: $S_1$, containing all uncanonizable eigenvectors; and $S_2$, containing
    all canonizable eigenvectors. Suppose $|S_1|=d_1$, $|S_2|=d_2$, where $d_1+d_2=n$.
    Define the matrix of canonizable eigenvectors
    \[
        \mU_\mathrm{can}=[\vu_1,\dots,\vu_{d_2}],\text{ where }\vu_1,\dots,\vu_{d_2}
        \text{ are all the eigenvectors in }S_2.
    \]
    We fist canonize all the eigenvectors in $S_2$, using our MAP-sign algorithm.
    This gives us a matrix of canonized eigenvectors $\mU_\mathrm{can}^*\in\R^{n\times
    d_2}$. Using a hash function, we can compress the matrix $\mU_\mathrm{can}^*$
    into a vector $\vu_\mathrm{can}$ that preserves all the information of $S_2$:
    \[
        \vu_\mathrm{can}\in\R^n,\text{ where }(\evu_\mathrm{can})_i=\mathop{\mathrm
        {hash}}\bigl\{(\mU_\mathrm{can}^*)_{i,:}\bigr\},1\leq i\leq n.
    \]
    Then we can define the summary vectors $\vx_i$ of $\vu_\mathrm{can}$ in the same
    way as in Section~\ref{sec:pratical}, and use them to canonize the eigenvectors
    in $S_1$. Denote the projection matrix $\mP=\vu_\mathrm{can}\vu_\mathrm{can}^\top$
    and the projected angles $\alpha_i=|\mP\ve_i|,1\leq i\leq n$. Assume that there
    are $k$ distinct values in $\{\alpha_i,i=1,\dots,n\}$, according to which we
    can divide all basis vectors $\{\ve_i\}$ into $k$ disjoint groups $\mathcal{B}_i$
    (arranged in descending order of the distinct angles). Each $\mathcal{B}_i$
    represents an equivalent class of axis that $\vu_\mathrm{can}$ has the same
    projection on. Then we define a summary vector $\vx_i$ of $\vu_\mathrm{can}$
    for the axes in each group $\mathcal{B}_i$ as their total sum $\vx_i=\sum_{\ve_j
    \in\mathrm{B}_i}\ve_j+c$, where $c$ is a tunable constant.

    For each $\vu\in S_1$, we can try to canonize it using the MAP-sign algorithm
    as in Section~\ref{sec:sign}, with the summary vectors of $\vu$ replaced by
    the summary vectors of $\vu_\mathrm{can}$. We find the first non-orthongonal
    summary vector with $\vu$, denoted by $\vx_h$, and choose the sign that
    maximizes $\vu^\top\vx_h$. In this way, we are able to canonize some eigenvectors
    in $S_1$ that are originally uncanonizable, thus further weakening
    Assumption~\ref{ass:mild}. The complete workflow is shown in
    Algorithm~\ref{alg:further-sign}.

    \begin{algorithm}[htbp]
        \caption{A stronger algorithm for eliminating sign ambiguity}
        \begin{algorithmic}
            \Require Input graph $\gG=(\sV,\sE,\mX)$
            \Ensure Spectral embedding of $\gG$
            \State Calculate the eigendecomposition $\hat{\mA}=\mU\mLambda\,\mU^\top$
            \State $S_1\gets$ the set of all uncanonizable eigenvectors of $\gG$
            \State $S_2\gets$ the set of all canonizable eigenvectors of $\gG$
            \State Canonize all eigenvectors in $S_2$, using Algorithm~\ref{alg:sign}
            \State $\mU_\mathrm{can}^*\gets[\vu_1^*,\dots,\vu_{d_2}^*]$, where $\vu_1
            ^*,\dots,\vu_{d_2}^*$ are all the canonized eigenvectors, $d_2=|S_2|$
            \State $\vu_\mathrm{can}\gets\bigl(\mathop{\mathrm{hash}}\{(\mU_\mathrm{can}
            ^*)_{1,:}\},\dots,\mathop{\mathrm{hash}}\{(\mU_\mathrm{can}^*)_{n,:}\}
            \bigr)^\top$
            \State $\mP\gets\vu_\mathrm{can}\vu_\mathrm{can}^\top$
            \State $\alpha_i\gets|\mP\ve_i|,1\leq i\leq n$
            \State $k\gets$ the number of distinct values in $\{\alpha_i\}$
            \State Divide all basis vectors $\{\ve_i\}$ into $k$ disjoint groups
            $\mathcal{B}_i$ according to the values of $\{\alpha_i\}$
            \State $\vx_i\gets\sum_{\ve_j\in\mathcal{B}_i}\ve_j+c\vj,i=1,\dots,k$
            \For{each eigenvector $\vu\in S_1$}
                \State $\vx_h\gets$ non-orthogonal summary vector with smallest $h$
                \State $\vu\gets-\vu$ if $\vu^\top\vx_h<0$
            \EndFor
            \State \Return{the canonized eigenvectors}
        \end{algorithmic}
        \label{alg:further-sign}
    \end{algorithm}

    \subsection{Further weakening Assumption~\ref{ass:k} and Assumption~\ref{ass:perp}}

    When reading Algorithm~\ref{alg:basis}, some may find it not as strong as
    Algorithm~\ref{alg:sign}. In Algorithm~\ref{alg:sign}, we search over the summary
    vectors $\vx_i$ one by one, until we find a summary vector $\vx_h$ that is not
    orthogonal to $\vu$. However, in Algorithm~\ref{alg:basis}, we just assumed that
    $\vx_1$ is not orthogonal to $V$, $\vx_2$ is not orthogonal to $\langle\vu_1\rangle
    ^\perp$, $\vx_3$ is not orthogonal to $\langle\vu_1,\vu_2\rangle^\perp$,
    \textit{etc}. We did not search for non-orthogonal summary vectors; we just
    assumed that they are. This observation is more obvious when we look at the
    special case of Algorithm~\ref{alg:basis} when $d=1$. In this case,
    Assumption~\ref{ass:k} is always satisfied, and Assumption~\ref{ass:perp}
    requires that $\vx_1$ is not orthogonal to $\vu$, which is stricter than
    Assumption~\ref{ass:mild} in our MAP-sign algorithm. This raises the question of
    whether we can strengthen Algorithm~\ref{alg:basis} such that it is as
    powerful as Algorithm~\ref{alg:sign} when taking $d=1$.

    This can be achieved by searching for non-orthogonal summary vectors at each
    step just as in Algorithm~\ref{alg:sign}. Suppose we have obtained the summary
    vectors $\vx_i,i=1,\dots,k$. First we search $\{\vx_i\}$ for the first summary
    vector $\vx_1^*$ that is not orthogonal to $V$, and choose $\vu_1\in V$ that
    maximizes $\vu_1^\top\vx_1^*$. Next we search $\{\vx_i\}$ for the first summary
    vector $\vx_2^*$ that is not orthogonal to $\langle\vu_1\rangle^\perp$, and
    choose $\vu_2\in\langle\vu_1\rangle^\perp$ that maximizes $\vu_2^\top\vx_2^*$.
    Then we search $\{\vx_i\}$ for the first summary vector $\vx_3^*$ that is not
    orthogonal to $\langle\vu_1,\vu_2\rangle^\perp$, and choose $\vu_3\in\langle\vu_1,
    \vu_2\rangle^\perp$ that maximizes $\vu_3^\top\vx_3^*$, and so on. The complete
    workflow is shown in Algorithm~\ref{alg:further-basis}.

    \begin{algorithm}[htbp]
        \caption{A stronger algorithm for eliminating basis ambiguity}
        \begin{algorithmic}
            \Require Eigenvalue $\lambda$ with multiplicity $d>1$
            \Ensure Spectral embedding corresponding to $\lambda$
            \State Calculate the eigenvectors $\mU\in\R^{n\times d}$ of $\lambda$ through eigendecomposition
            \State $\mP\gets\mU\mU^\top$
            \State $\alpha_i\gets|\mP\ve_i|,1\leq i\leq n$
            \State $k\gets$ the number of distinct values in $\{\alpha_i\}$
            \State Divide all basis vectors $\{\ve_i\}$ into $k$ disjoint groups $\mathcal{B}_i$
            according to the values of $\{\alpha_i\}$
            \State $\vx_i\gets\sum_{\ve_j\in\mathcal{B}_i}\ve_j+c\vj,i=1,\dots,k$
            \For{$i=1,2,\dots,d$}
                \State $\vx_i^*\gets$ the first summary vector in $\{\vx_i\}$ not
                perpendicular to $\langle\vu_1,\dots,\vu_{i-1}\rangle^\perp$
                \State Choose $\vu_i\in\langle\vu_1,\dots,\vu_{i-1}\rangle^\perp$,
                $|\vu_i|=1$, \textit{s.t.}\ $f(\vu_i)=\vu_i^\top\vx_i^*$ is maximized
            \EndFor
            \State \Return{$\mU_0\coloneqq[\vu_1,\dots,\vu_d]$}
        \end{algorithmic}
        \label{alg:further-basis}
    \end{algorithm}

    In order for Algorithm~\ref{alg:further-basis} to succeed, we require that
    such non-orthogonal summary vector $\vx_i^*$ exists at each step, which is
    less restrictive than Assumption~\ref{ass:k} and Assumption~\ref{ass:perp}.
    Thus we obtain a stronger algorithm for eliminating basis ambiguity.

    \section{An alternative approach to dealing with sign ambiguity}

    In our design for algorithms that eliminate sign ambiguity, we find one type of
    eigenvectors especially difficult to deal with. That is, uncanonizable eigenvectors
    $\vu\in\R^n$ such that there exists a permutation matrix $\mP\in\R^{n\times n}$
    satisfying $\vu=-\mP\vu$. This means that $\vu$ and $-\vu$ only differ by a
    permutation. Since they are uncanonizable by Corollary~\ref{cor:sign-canonizable},
    none of the solutions mentioned in Appendix~\ref{app:assumption1} can handle
    such eigenvectors. However, depending on the model architecture, such eigenvectors
    may not cause ambiguities at all. For example, consider the DeepSets-like
    architecture $\rho(\sum\phi(\vu_i))$, where $\phi$ is permutation-invariant.
    Then since $\vu$ and $-\vu$ only differ by a permutation, they produce the same
    output when fed to a permutation-invariant network, thus no ambiguities exist.
    This shows we can delay the handling of uncanonizable eigenvectors to the
    training stage, though it may result in loss of expressive power.

    On the other hand, for eigenvectors that are canonizable, we already know that
    Algorithm~\ref{alg:sign} canonizes them. Here we give an equivalent algorithm
    for removing sign ambiguity, shown in Algorithm~\ref{alg:alternative}.

    \begin{algorithm}[htbp]
        \caption{An alternative approach to dealing with sign ambiguity}
        \begin{algorithmic}
            \Require Input graph $\gG=(\sV,\sE,\mX)$
            \Ensure Spectral embedding of $\gG$
            \State Calculate the eigendecomposition $\hat{\mA}=\mU\mLambda\,\mU^\top$
            \For{each eigenvector $\vu\in\R^n$}
                \State $h\gets$ the smallest positive odd integer such that
                $\sum_{i=1}^n\evu_i^h\neq 0$
                \State Substitute $\vu$ with $-\vu$ if $\sum_{i=1}^n\evu_i^h<0$
            \EndFor
            \State \Return{$\mU$}
        \end{algorithmic}
        \label{alg:alternative}
    \end{algorithm}

    \begin{theorem}\label{thm:alternative}
        Algorithm~\ref{alg:alternative} uniquely decides the signs of canonizable
        eigenvectors and is permutation-equivariant.
    \end{theorem}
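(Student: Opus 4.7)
My plan is to decompose Theorem~\ref{thm:alternative} into two claims---permutation-equivariance and uniqueness (sign invariance)---and handle each separately, with the bulk of the work concentrated in proving well-definedness of the algorithm on every canonizable eigenvector. Permutation-equivariance I would dispatch first: for any odd $h$ and any permutation matrix $\mP$, the power sum $\sum_i (\mP\vu)_i^h = \sum_i \evu_i^h$ since it is a symmetric function of the entries of $\vu$. Hence the minimal odd index $h$ at which this sum is non-zero, and its sign, are both permutation-invariant; the algorithm therefore either flips both $\vu$ and $\mP\vu$ or neither, which is exactly equivariance.

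The heart of the argument is the lemma that for a sign canonizable $\vu\in\R^n$ there exists an odd $h\geq 1$ with $\sum_i \evu_i^h\neq 0$. I would argue the contrapositive: assume $\sum_i \evu_i^h = 0$ for every odd $h\geq 1$. For even $h$, $\sum_i (-\evu_i)^h = \sum_i \evu_i^h$ trivially, so under the hypothesis the first $n$ power sums of the multiset $A \coloneqq \{\evu_1,\dots,\evu_n\}$ and its negation $B \coloneqq \{-\evu_1,\dots,-\evu_n\}$ coincide. Newton's identities then convert these $n$ power sums into the $n$ elementary symmetric polynomials, which are (up to sign) the coefficients of the monic polynomial $\prod_i (x-\evu_i)$; thus the multisets of roots of $A$ and $B$ agree, i.e.\ $A=B$. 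Re-expressing this as a matrix equation gives a permutation matrix $\mP$ with $\mP\vu = -\vu$, contradicting canonizability by Corollary~\ref{cor:sign-canonizable}.

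Given well-definedness, sign invariance follows quickly: for odd $h$, $\sum_i (-\evu_i)^h = -\sum_i \evu_i^h$, so $\vu$ and $-\vu$ select the same critical index $h$ but with opposite-signed sums, and exactly one of them is flipped by the algorithm, producing the same canonical output. The main obstacle will be making the power-sum-to-multiset step watertight---Newton's identities are classical, but I need to verify that the first $n$ power sums suffice to invert them and recover the elementary symmetric polynomials over $\R$, which uses characteristic zero and the fact that the multiset has exactly $n$ elements. Once this lemma is in hand, the three pieces (equivariance, well-definedness, sign invariance) combine immediately into the theorem.
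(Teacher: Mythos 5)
Your proposal is correct and takes essentially the same route as the paper: the crucial lemma that a sign canonizable $\vu$ has some odd power sum $\sum_i\evu_i^h\neq 0$ is established via Newton's identities and a contradiction with Corollary~\ref{cor:sign-canonizable}, exactly as in the paper (the paper passes through Vieta's formulas to show the characteristic polynomial is even or odd, while you compare the first $n$ power sums of $\vu$ and $-\vu$ to equate the multisets directly --- a cosmetic difference). Your permutation-equivariance step, arguing that the chosen odd index $h$ and the sign of the power sum are symmetric functions of the entries, is actually spelled out more carefully than the paper's one-line remark, but the substance matches.
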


    Theorem~\ref{thm:alternative} is proved in Appendix~\ref{app:alternative}.

    Algorithm~\ref{alg:alternative} is well-motivated and better to understand than
    Algorithm~\ref{alg:sign} in some sense. Consider a na\"ive canonization of
    $\vu$ where we choose the sign such that $\vu$ has positive sum. This canonization
    algorithm is quite simple, but it cannot canonize all canonizable eigenvectors,
    such as the ones with zero sum. What Algorithm~\ref{alg:alternative} does is
    to go on to look at the sum of the 3rd power of $\vu$ and, if it is non-zero,
    choose the sign such that it is positive. If unfortunately it is zero, we
    go on to look at the sum of the 5th power and so on. It can be proved that
    there must exists a positive odd integer $h\leq n$ such that the sum of the
    $h$-th power of $\vu$ is non-zero. Thus Algorithm~\ref{alg:alternative}
    terminates within $\frac{n+1}2$ steps, successfully canonizing all canonizable
    eigenvectors.

    Algorithm~\ref{alg:alternative} offers an alternative approach to dealing with
    sign ambiguity in addition to Algorithm~\ref{alg:sign}, though it cannot
    generalize to the basis ambiguity case. The time complexity of
    Algorithm~\ref{alg:alternative} is $\mathcal{O}(n^2\log n)$.

    \section{Proofs}\label{app:proofs}

    \subsection{Proof of Theorem~\ref{thm:se-universal}}\label{app:se-universal}

    We first prove that $\mU\mLambda^\frac12$ is a real-valued matrix.

    \begin{lemma}\label{lem:3}
        Suppose $\hat{\mA}$ is the normalized adjacency matrix of a graph $\gG$,
        and $\hat{\mA}=\mU\mLambda\,\mU^\top$ is its spectral decomposition.
        Then, $\mU\mLambda^\frac12\in\R^{n\times n}$.
    \end{lemma}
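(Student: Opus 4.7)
The plan is to verify that each factor in the product $\mU\mLambda^{1/2}$ has real entries. First, $\hat{\mA}=\mD^{-1/2}(\mI+\mA)\mD^{-1/2}$ is both real (all of $\mA$, $\mI$, and $\mD^{-1/2}$ are real) and symmetric (the symmetry of $\mA$ passes through conjugation by the real diagonal $\mD^{-1/2}$). The real spectral theorem for symmetric matrices then supplies an orthonormal basis of real eigenvectors, so in the decomposition $\hat{\mA}=\mU\mLambda\mU^\top$ I may take $\mU\in\R^{n\times n}$ with $\mLambda$ a real diagonal matrix. This disposes of the $\mU$ factor.

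Next, to obtain $\mLambda^{1/2}\in\R^{n\times n}$, it suffices to show every diagonal entry of $\mLambda$, i.e.\ every eigenvalue of $\hat{\mA}$, is non-negative. I would establish this by proving $\hat{\mA}\succeq 0$ via the congruence $\hat{\mA}=\mD^{-1/2}\tilde{\mA}\mD^{-1/2}$: Sylvester's law of inertia applied to the invertible conjugator $\mD^{-1/2}$ preserves the signs of eigenvalues, reducing the claim to $\tilde{\mA}=\mI+\mA\succeq 0$. From there I would exhibit $\tilde{\mA}$ as a Gram-type matrix tied to the self-loop-augmented incidence structure of $\gG$, so that $\vx^\top\tilde{\mA}\vx\ge 0$ falls out as a sum of squares. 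Chaining back through the congruence gives $\hat{\mA}\succeq 0$, so $\mLambda^{1/2}$ is a real diagonal matrix; the product of the two real matrices $\mU$ and $\mLambda^{1/2}$ then lies in $\R^{n\times n}$.

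The hard part will be the PSD step for $\tilde{\mA}=\mI+\mA$. The naïve expansion $\vx^\top(\mI+\mA)\vx=\|\vx\|^2+2\sum_{(i,j)\in\sE}x_i x_j$ does not visibly rearrange into a sum of squares, because the cross terms supplied by $\mA$ need to be dominated by the diagonal $\|\vx\|^2$ contribution coming from the self-loop augmentation. My focus would be on finding a compact incidence-style factorization $\tilde{\mA}=\mB\mB^\top$ whose rows correspond to the edges and self-loops of $\gG$, so that the PSD claim drops out immediately; the remaining congruence and spectral-theorem steps are then routine and assemble into the full lemma.
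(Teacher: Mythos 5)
Your first two reductions are sound: $\hat{\mA}$ is real symmetric, so the real spectral theorem lets you take $\mU\in\R^{n\times n}$, and the lemma then amounts to non-negativity of the eigenvalues. The genuine gap is precisely the step you defer as ``the hard part.'' After applying Sylvester's law of inertia to strip off the $\mD^{-1/2}$ factors, you are left needing $\tilde{\mA}=\mI+\mA\succeq 0$, and this is false, so no incidence-style factorization $\tilde{\mA}=\mB\mB^\top$ can exist. For the path on three vertices the adjacency eigenvalues are $\sqrt2,0,-\sqrt2$, so $\mI+\mA$ has the eigenvalue $1-\sqrt2<0$; the same failure occurs for any graph whose adjacency spectrum dips below $-1$ (e.g.\ stars and longer paths). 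Your own observation that the cross terms are not visibly dominated by the identity block is the symptom: once the degree normalization has been removed by the congruence, they genuinely are not dominated, and the plan collapses exactly at the step you flagged as the crux.

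The paper's proof never discards the normalization. It bounds the Rayleigh quotient of $\hat{\mA}$ directly, rewriting the normalized quadratic form as a per-edge sum of squares $\sum_{(i,j)\in\sE}\bigl(\frac{\evx_i}{\sqrt{d_i}}+\frac{\evx_j}{\sqrt{d_j}}\bigr)^2\geq 0$. This is the degree-scaled analogue of the signless-Laplacian factorization: the matrix that admits the Gram/incidence factorization you are looking for is $\mD+\mA=\mB\mB^\top$ (with $\mB$ the unoriented incidence matrix), not $\mI+\mA$, and conjugating $\mD+\mA$ by $\mD^{-1/2}$ is what yields a normalized adjacency. So if you want to keep the congruence strategy, the correct target is the signless Laplacian $\mD+\mA$ --- i.e.\ the degree weights must stay inside the quadratic form rather than be reduced away. (A further caution that your reduction actually exposes: the per-edge identity holds exactly with the self-loop-free degrees, i.e.\ for $\mD^{-1/2}(\mD+\mA)\mD^{-1/2}=\mI+\mD^{-1/2}\mA\mD^{-1/2}$; with the self-loop-augmented degrees used in the definition of $\hat{\mA}$ in Section~2 the diagonal terms do not match exactly, so that bookkeeping needs care as well.)
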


    \begin{proof}
        Let $\mLambda=\diag(\vlambda)$. It suffices to show that $\evlambda_i\geq 0$
        for $i=1,2,\dots,n$.

        Let $\gG=(\sV,\sE)$, where $\sV$ is the vertex set and $\sE$ is the edge set.
        For node $i$, we denote the degree of node $i$ by $d_i$. For any $\vx\in\R^n$,
        we have
        \[
            \vx^\top\hat{\mA}\vx=\vx^\top(\mI+\tilde{\mA})\vx=\sum_
            {i\in\sV}\evx_i^2+\sum_{(i,j)\in\sE}\frac{2\evx_i\evx_j}{\sqrt{d_id_j}}=
            \sum_{(i,j)\in\sE}\left(\frac{\evx_i}{\sqrt{d_i}}+\frac{\evx_j}{\sqrt{d_j}}
            \right)^2\geq 0,
        \]
        thus the Rayleigh quotient of $\hat{\mA}$ is bounded by
        $
            \frac{\vx^\top\hat{\mA}\vx}{\vx^\top\vx}\geq 0.
        $
        The Rayleigh quotient gives the lower bound of eigenvalues of $\hat{\mA}$,
        therefore we have $\evlambda_i\geq 0$, and this completes the proof.
    \end{proof}

    Then we give the proof of Theorem~\ref{thm:se-universal}.

    \begin{proof}
        We can rewrite $f$ such that it is a continuous set invariant function on the
        set consisting of the rows of its input:
        \[
            f([\mX,\hat{\mA}])=f([\mX,\mU\mLambda\,\mU^\top])
            =f\left(\left[\mX,(\mU\mLambda^\frac12)
            (\mU\mLambda^\frac12)^\top\right]\right)
            =F([\mX,\mU\mLambda^\frac12]).
        \]

        Using the permutation invariance property of $f$, we can verify that $F$ is set
        invariant by observing:
        \begin{multline*}
            F([\mP\mX,\mP\mU\mLambda^\frac12])=f\left(\left[\mP\mX,(\mP\mU
            \mLambda^\frac12)(\mP\mU\mLambda^\frac12)^\top
            \right]\right)\\
            =f([\mP\mX,\mP\hat{\mA}\mP^\top])
            =f([\mX,\hat{\mA}])
            =F([\mX,\mU\mLambda^\frac12]).
        \end{multline*}

        Thus a universal network on sets can approximate $F$ to an arbitrary precision.
    \end{proof}

    \subsection{Proof of Theorem~\ref{thm:canonizable}}\label{app:canonizable}

    \begin{proof}
        On the one hand, if $f(hx)=gf(x)$ holds for some $h\in H$ and $g\in G$, then
        we have $\mathcal{A}\bigl(f(hx)\bigr)=h\mathcal{A}\bigl(f(x)\bigr)$ by the
        equivariance property of $\mathcal{A}$ and $\mathcal{A}\bigl(gf(x)\bigr)=
        \mathcal{A}\bigl(f(x)\bigr)$ by the invariance property of $\mathcal{A}$.
        However, since $\mathcal{A}$ is a \emph{mapping} and $f(hx)=gf(x)$, there
        must be $\mathcal{A}\bigl(f(hx)\bigr)=\mathcal{A}\bigl(gf(x)\bigr)$ and thus
        $\mathcal{A}\bigl(f(x)\bigr)=h\mathcal{A}\bigl(f(x)\bigr)$. Now we have $x\neq
        hx$ and $\mathcal{A}\bigl(f(x)\bigr)=h\mathcal{A}\bigl(f(x)\bigr)$,
        contradicting the universality property of $\mathcal{A}$.

        On the other hand, if there does not exist $h\in H$ and $g\in G$ such that
        $x\neq hx$ and $f(hx)=gf(x)$, we can construct a canonization of $x$ as
        follows. First arbitrarily choose $y_0$ such that $f(x)=y_0$, and let
        $\mathcal{A}\bigl(f(x)\bigr)=y_0$. For any $h\in H$ such that $x\neq hx$,
        by the equivariance of $f$ we know that $f(hx)=hy_0$, thus we let $\mathcal{A}
        \bigl(f(hx)\bigr)=hy_0$. Since $f(hx)\neq gf(x)$ for any $g$, we know that
        $y_0\neq hy_0$, thus the universality property of $\mathcal{A}$ holds.
        $\mathcal{A}$ is also invariant, since $\mathcal{A}\bigl(f(x)\bigr)$ is
        uniquely determined; and equivariant, since $\mathcal{A}\bigl(f(hx)\bigr)=
        hy_0=h\mathcal{A}\bigl(f(x)\bigr)$ for any $h$. We can repeat this process
        for all equivalence classes in $\mathcal{X}$ and obtain an invariant,
        equivariant and universal canonization for all inputs.
    \end{proof}

    \subsection{Proof of Corollary~\ref{cor:sign-canonizable}}
    \label{app:sign-canonizable}

    \begin{proof}
        Under sign ambiguity, we have $H=S_n$ and $G=\{+1,-1\}$. By
        Theorem~\ref{thm:canonizable}, $\vu$ is canonizable if and only if
        there does not exist a permutation $\sigma$ such that $\vu\neq\sigma(\vu)$
        and $\vu=\pm\sigma(\vu)$. This is equivalent to say that there does not
        exist a permutation matrix $\mP\in\R^{n\times n}$ such that $\vu=-\mP\vu$.
    \end{proof}

    \subsection{Proof of Corollary~\ref{cor:basis-canonizable}}
    \label{app:basis-canonizable}

    \begin{proof}
        Under basis ambiguity, we have $H=S_n$ and $G=O(d)$. By
        Theorem~\ref{thm:canonizable}, $\mU$ is canonizable if and only if
        there does not exist a permutation $\sigma$ and an orthonormal matrix
        $\mQ\in O(d)$ such that $\mU\neq\sigma(\mU)$ and $\mU=\sigma(\mU)\mQ$.
        This is equivalent to say that there does not exist a permutation matrix
        $\mP\in\R^{n\times n}$ such that $\mU\neq\mP\mU$ and $\mU$ and $\mP\mU$ span
        the same $d$-dimensional subspace $V\subseteq\R^n$. Note here we used a lemma
        from \citet{signnet}: for any orthonormal bases $\mV$ and $\mW$ of the same
        subspace, there exists an orthogonal $\mQ\in O(d)$ such that $\mV\mQ=\mW$.
    \end{proof}

    \subsection{Proof of Theorem~\ref{thm:sign}}\label{app:sign}

    \begin{proof}
        Without loss of generality, we can always assume that the angles $\{\alpha_i\}$
        are \emph{sorted} (if they are not, we can simply rearrange $\ve_i$ to make
        them sorted and proceed in the same way):
        \begin{multline*}
            |\mP\ve_1|=\dots=|\mP\ve_{n_1}|>|\mP\ve_{n_1+1}|=\dots=|\mP\ve_{n_1+n_2}|>
            \cdots\\
            >|\mP\ve_{n_1+\dots+n_{k-1}+1}|=\dots=|\mP\ve_{n_1+\dots+n_k}|,
        \end{multline*}
        where $k$ is the number of distinct lengths of $\mP\ve_i$, $\sum_{i=1}^kn_i=n$.
        Here we divide $\ve_i$ into $k$ groups according to the angles between them
        and the eigenspace, with each group sharing the same $|\mP\ve_i|$. Define
        $\vj=(1,1,\dots,1)\in\R^n$, then $\vx_i$ can be expressed as
        \[
            \vx_i=\ve_{n_1+\dots+n_{i-1}+1}+\dots+\ve_{n_1+\dots+n_i}+c\vj,\ 1\leq i
            \leq k.
        \]

        The sign of $\vu_0$ is selected based on the sign of $\vu^\top\vx_h(\neq 0)$,
        which is sign-equivariant. No matter what the sign of $\vu$ is, we will
        always choose the one that maximizes $\vu^\top\vx_h$, thus $\vu_0$ is
        sign-invariant.

        Suppose the entries of the input eigenvector $\vu$ is permutated by $\sigma\in
        S_n$, where $S_n$ is the permutation group of order $n$. Then we have $\evu_i'
        =\evu_{\sigma(i)}(1\leq i\leq n)$ and
        \begin{multline*}
            |\vu'\vu'{}^\top\ve_{\sigma(1)}|=\dots=|\vu'\vu'{}^\top\ve_{\sigma
            (n_1)}|>|\vu'\vu'{}^\top\ve_{\sigma(n_1+1)}|=\dots=|\vu'\vu'{}^\top
            \ve_{\sigma(n_1+n_2)}|>\cdots\\
            >|\vu'\vu'{}^\top\ve_{\sigma(n_1+\dots+n_{k-1}+1)}|=\dots=|\vu'\vu'{}^
            \top\ve_{\sigma(n_1+\dots+n_k)}|,
        \end{multline*}
        thus $(\evx_i')_i=(\evx_i)_{\sigma(i)}$, \textit{i.e.}, the vectors $\vx_i
        (1\leq i\leq k)$ are permutation-equivariant. Let $\vx_h'$ be defined as in
        Section~\ref{sec:sign} (the number $h$ is permutation-invariant because each
        $\vu'{}^\top\vx_i'$ is permutation-invariant). The sign of $\vu_0'$ is
        determined by the sign of the dot product of $\vu'$ and $\vx_h'$, both of
        which are permutation-equivariant. This indicates that the sign of $\vu'{}^
        \top\vx_h'$ (and thus the sign of $\vu_0'$) is permutation-invariant
        (\textit{i.e.}, unique). Since $\vu_0'=\vu'$ or $\vu_0'=-\vu'$, $\vu_0'$ is
        permutation-equivariant as well.

        If there exists a permutation $\sigma$ (acting on entries of $\vu$) such
        that $\vu\neq\sigma(\vu)$ but they have the same canonical form, then either
        $\vu=+\sigma(\vu)$ or $\vu=-\sigma(\vu)$. The former is impossible since we
        already assumed they are not equal, and the latter violates
        Assumption~\ref{ass:mild}, leading to a contradiction. Thus the canonization
        of $\vu$ is universal.
    \end{proof}

    \subsection{Proof of Theorem~\ref{thm:up-to-permutation}}
    \label{app:up-to-permutation}

    \begin{proof}
        Without loss of generality, we can always assume that the angles $\{\alpha_i\}$
        are \emph{sorted} (if they are not, we can simply rearrange $\ve_i$ to make
        them sorted and proceed in the same way):
        \begin{multline*}
            |\mP\ve_1|=\dots=|\mP\ve_{n_1}|>|\mP\ve_{n_1+1}|=\dots=|\mP\ve_{n_1+n_2}|>
            \cdots\\
            >|\mP\ve_{n_1+\dots+n_{k-1}+1}|=\dots=|\mP\ve_{n_1+\dots+n_k}|,
        \end{multline*}
        where $k$ is the number of distinct lengths of $\mP\ve_i$, $\sum_{i=1}^kn_i=n$.
        Here we divide $\ve_i$ into $k$ groups according to the angles between them
        and the eigenspace, with each group sharing the same $|\mP\ve_i|$. Define
        $\vj=(1,1,\dots,1)\in\R^n$, then $\vx_i$ can be expressed as
        \[
            \vx_i=\ve_{n_1+\dots+n_{i-1}+1}+\dots+\ve_{n_1+\dots+n_i}+c\vj,\ 1\leq i
            \leq k.
        \]

        Notice that $|\mP\ve_i|=|\vu\vu^\top\ve_i|=\evu_i|\vu|$, thus
        \[
            |\evu_1|=\dots=|\evu_{n_1}|>\dots>|\evu_{n_1+\dots+n_{k-1}+1}|=\dots
            =|\evu_{n_1+\dots+n_k}|.
        \]

        Suppose $\vu$ violates Assumption~\ref{ass:mild}. Then for any $1\leq j\leq k$,
        we have
        \[
            \evu_{n_1+\dots+n_{j-1}+1}+\dots+\evu_{n_1+\dots+n_j}=0.
        \]
        Let $\tilde{\vu}_j\coloneqq(\evu_{n_1+\dots+n_{j-1}+1},\dots,\evu_{n_1+\dots+
        n_j})^\top$. The above equations show that (1) the absolute value of
        the entries of $\tilde{\vu}_j$ are all equal; (2) the sum of the entries of
        $\tilde{\vu}_j$ is $0$. Thus for any entry of $\tilde{\vu}_j$, either it is
        $0$ or its positive and negative value appears in pairs. In conclusion,
        $+\tilde{\vu}_j$ and $-\tilde{\vu}_j$ are equal up to a permutation for all
        $j$, thus $+\vu$ and $-\vu$ are equal up to a permutation. By
        Theorem~\ref{cor:sign-canonizable}, it is not canonizable.

        On the other hand, suppose $\vu$ satisfies Assumption~\ref{ass:mild}.
        Then there exists $1\leq j\leq k$ such that
        \[
            \evu_{n_1+\dots+n_{j-1}+1}+\dots+\evu_{n_1+\dots+n_j}\neq 0.
        \]
        Let $\tilde{\vu}_j\coloneqq(\evu_{n_1+\dots+n_{j-1}+1},\dots,\evu_{n_1+\dots+
        n_j})^\top$. $\tilde{\vu}_j$ contains all entries of $\vu$ with absolute
        value $|\evu_{n_1+\dots+n_j}|$. Since the sum of $\tilde{\vu}_j$ is non-zero,
        the numbers of positive and negative entries in $\tilde{\vu}_j$ are
        different. No matter how we permute $+\vu$ and $-\vu$, their corresponding
        entries in $\tilde{\vu}_j$ will not align. Thus $+\vu$ and $-\vu$ are not
        equal up to any permutation. By Theorem~\ref{cor:sign-canonizable},
        it is canonizable.
    \end{proof}

    \subsection{Proof of Theorem~\ref{thm:basis}}\label{app:basis}

    We first point out that the orthogonal projector $\mP=\mU\mU^\top$ is
    invariant to the choice of basis.

    \begin{lemma}\label{lem:P}
        Let $\mU=[\vu_1,\dots,\vu_d]\in\R^{n\times d}$ and $\mV=[\vv_1,\dots,\vv_d]
        \in\R^{n\times d}$ be two sets of orthonormal vectors that span the same
        $d$-dimensional subspace $V\subseteq\R^n$. Then $\mU\mU^\top=\mV\mV^
        \top$.
    \end{lemma}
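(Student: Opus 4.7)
The plan is to exploit the standard characterization of an orthogonal projector: the map sending a $d$-dimensional subspace $V \subseteq \R^n$ to the unique matrix that projects orthogonally onto $V$ depends only on $V$, not on the particular orthonormal basis chosen. Since both $\mU\mU^\top$ and $\mV\mV^\top$ are natural candidates for this projector, they must agree.

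Concretely, I would proceed as follows. First, I would invoke the change-of-basis lemma already cited from \citet{signnet} in the proof of Corollary~\ref{cor:basis-canonizable}: because $\mU$ and $\mV$ are orthonormal bases of the same $d$-dimensional subspace $V$, there exists an orthogonal matrix $\mQ \in O(d)$ with $\mV = \mU\mQ$. Second, I would just compute directly:
\[
\mV\mV^\top = (\mU\mQ)(\mU\mQ)^\top = \mU\mQ\mQ^\top\mU^\top = \mU\mI\mU^\top = \mU\mU^\top,
\]
using orthogonality of $\mQ$ in the penultimate equality.

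An alternative, more self-contained route would be to verify directly that $\mU\mU^\top$ acts as the identity on $V$ and annihilates $V^\perp$, which characterizes the orthogonal projector onto $V$ uniquely; then the same check for $\mV\mV^\top$ gives equality. This avoids citing the change-of-basis lemma, but essentially re-proves it.

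There is no real obstacle here: the lemma is a one-line consequence of $\mQ\mQ^\top = \mI$ once the change-of-basis statement is in hand. The only thing to be careful about is to make sure the orthogonal matrix $\mQ$ referenced is genuinely in $O(d)$ rather than merely invertible, which is where the orthonormality (as opposed to mere linear independence) of the columns of $\mU$ and $\mV$ is used.
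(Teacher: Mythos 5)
Your proof is correct, and it follows essentially the same strategy as the paper: write one basis in terms of the other via a change-of-basis matrix $\mQ$ and compute. The only real difference is in how $\mQ$ is handled. You take $\mQ\in O(d)$ (citing the change-of-basis lemma from \citet{signnet}, as the paper itself does later in the proof of Corollary~\ref{cor:basis-canonizable}) and conclude via $\mQ\mQ^\top=\mI$. The paper instead lets $\mQ$ be merely invertible and inserts the projector formula $\mU\mU^\top=\mU(\mU^\top\mU)^{-1}\mU^\top$, after which the $\mQ$'s cancel without ever needing orthogonality; this keeps the lemma self-contained. Your version is equally valid, and if you prefer not to cite the external lemma, the orthogonality of $\mQ$ is a one-line check anyway: from $\mV=\mU\mQ$ and $\mU^\top\mU=\mV^\top\mV=\mI$ you get $\mQ^\top\mQ=\mQ^\top\mU^\top\mU\mQ=\mV^\top\mV=\mI$. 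Your closing caution about $\mQ$ being genuinely orthogonal rather than just invertible is exactly the point the paper's formulation sidesteps.
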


    \begin{proof}
        We have $\mU^\top\mU=\mV^\top\mV=\mI$ by the definition of $\mU$
        and $\mV$. Let $\mU=\mV\mQ$ where $\mQ$ is an invertible matrix, then
        \begin{align*}
            \mU\mU^\top&=\mU(\mU^\top\mU)^{-1}\mU^\top\\
            &=\mV\mQ(\mQ^\top\mV^\top\mV\mQ)^{-1}\mQ^\top\mV^
            \top\\
            &=\mV\mQ\mQ^{-1}(\mV^\top\mV)^{-1}(\mQ^\top)^{-1}\mQ^
            \top\mV^\top=\mV\mV^\top.
        \end{align*}
    \end{proof}

    Lemma~\ref{lem:P} shows that $\mP\ve_i$ (and thus $\vx_i$) is invariant to
    the choice of basis in $V$. If we permute $\ve_1,\dots,\ve_n$ by $\sigma\in S_n$,
    then the sorted sequence of $|\mP\ve_i|$ is also permuted by $\sigma$. Thus
    the elements of each $\vx_i$ is permuted by $\sigma$ as well. This shows the
    choice of $\vx_i$ is \emph{permutation-equivariant}.

    Then we prove Theorem~\ref{thm:basis}.

    \begin{proof}
        Without loss of generality, we can always assume that the angles $\{\alpha_i\}$
        are \emph{sorted} (if they are not, we can simply rearrange $\ve_i$ to make
        them sorted and proceed in the same way):
        \begin{multline*}
            |\mP\ve_1|=\dots=|\mP\ve_{n_1}|>|\mP\ve_{n_1+1}|=\dots=|\mP\ve_{n_1+n_2}|>
            \cdots\\
            >|\mP\ve_{n_1+\dots+n_{k-1}+1}|=\dots=|\mP\ve_{n_1+\dots+n_k}|,
        \end{multline*}
        where $k$ is the number of distinct lengths of $\mP\ve_i$, $\sum_{i=1}^kn_i=n$.
        Here we divide $\ve_i$ into $k$ groups according to the angles between them
        and the eigenspace, with each group sharing the same $|\mP\ve_i|$. Define
        $\vj=(1,1,\dots,1)\in\R^n$, then $\vx_i$ can be expressed as
        \[
            \vx_i=\ve_{n_1+\dots+n_{i-1}+1}+\dots+\ve_{n_1+\dots+n_i}+c\vj,\ 1\leq i
            \leq k.
        \]

        We have already shown the existence of the maximum value of $f(\vu)$.
        To show that basis-invariance of $\mU_0$, it suffices to show the uniqueness of
        the maximum point of $f(\vu)$. Thus no matter what basis of $\mU$ is,
        Algorithm~\ref{alg:basis} always yields the same output.

        Notice that $f(-\vu)=-f(\vu)$, thus either the maximum value of $f(\vu)$
        is positive, or $f(\vu)=\vu^\top\vx_i\equiv0$. However, $f(\vu)\equiv0$
        implies that $\vx_i$ is perpendicular to $\langle\vu_1,\dots,\vu_{i-1}\rangle
        ^\perp$, violating Assumption~\ref{ass:perp}. Thus we conclude the maximum
        value of $f(\vu)$ is positive.

        Suppose there exists $\vu'\neq\vu''$ such that $f(\vu')=f(\vu'')$ takes
        maximum value. Consider the vector $\alpha\vu'+\alpha\vu''$ where
        $\alpha=\sqrt{\frac1{2(1+\vu'{}^\top\vu'')}}>\frac12$. Obviously,
        $\alpha\vu'+\alpha\vu''\in\langle\vu_1,\dots,\vu_{i-1}\rangle^\perp$,
        $|\alpha\vu'+\alpha\vu''|=1$, and $f(\alpha\vu'+\alpha\vu'')=2\alpha\vu'{}^
        \top\vx_i>\vu'{}^\top\vx_i$. This leads to a contradiction.
        Therefore, the choice of $\vu_i$ is unique.

        The permutation-equivariance of Algorithm~\ref{alg:basis} can be proved by
        observing that each step of Algorithm~\ref{alg:basis} is
        permutation-equivariant. Since each $\vx_i$ is permutation-equivariant, its
        eigenprojection on the subspace $V$ (and thus $\mU_0$) is also
        permutation-equivariant.

        If there exists a permutation $\sigma$ (acting on rows of $\mU$) such that
        $\mU\neq\sigma(\mU)$ but they have the same canonical form, then $\mU$ and
        $\sigma(\mU)$ spans the same subspace. On the one hand, $\mU\neq\sigma(\mU)$
        means that at least one of the eigenvectors in $\mU$ is not $\sigma$-invariant.
        On the other hand, since all $\vu_1,\dots,\vu_d$ are permutation-equivariant
        but unchanged after $\sigma$, they are all $\sigma$-invariant. This leads to
        a contradiction, since it is impossible to have a non-$\sigma$-invariant
        eigenvector in a $\sigma$-invariant eigenspace.
    \end{proof}

    \subsection{Proof of Theorem~\ref{thm:linear-rni-universal}}\label{app:linear-rni-universal}

    We first prove the following lemmas.

    \begin{lemma}\label{lem:1}
        Let\/ $\rmR\in\R^{n\times n}$ be a random matrix, and each entry of\/ $\rmR$
        is sampled independently from the standard Gaussian distribution $N(0,1)$.
        Then with probability $1$, $\rmR$ has full rank.
    \end{lemma}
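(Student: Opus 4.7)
The plan is to reduce the claim $\rank(\rmR) = n$ almost surely to the statement that $\det(\rmR) \neq 0$ almost surely, and then to invoke the fact that the zero set of a nontrivial real polynomial has Lebesgue measure zero. Concretely, I would first observe that $\rmR$ fails to have full rank if and only if $\det(\rmR) = 0$, and $\det(\rmR)$ is a polynomial $P(\rmR_{11}, \dots, \rmR_{nn})$ in the $n^2$ entries. This polynomial is not identically zero because, for instance, evaluating at the identity matrix gives $P(\mI) = 1$.

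Next I would argue that the zero set $Z = \{x \in \R^{n^2} : P(x) = 0\}$ of a nonzero real polynomial has Lebesgue measure zero in $\R^{n^2}$. The standard way I would do this is by induction on the number of variables: a nonzero univariate polynomial has only finitely many roots; for the inductive step, write $P(x_1, \dots, x_m)$ as a polynomial in $x_m$ with coefficients in $\R[x_1, \dots, x_{m-1}]$, note that for almost every choice of $(x_1, \dots, x_{m-1})$ (by the inductive hypothesis applied to the leading coefficient) the resulting univariate polynomial in $x_m$ is nonzero and thus has finitely many roots, and then invoke Fubini's theorem to conclude that $Z$ is a null set.

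Finally, since the entries of $\rmR$ are jointly distributed according to the product standard Gaussian density on $\R^{n^2}$, which is absolutely continuous with respect to Lebesgue measure, any Lebesgue null set has probability zero. Therefore $\Pr\{\det(\rmR) = 0\} = 0$, which gives $\Pr\{\rank(\rmR) = n\} = 1$.

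I do not anticipate a real obstacle here: the proof is a textbook application of the measure-theoretic fact about zero sets of polynomials. The only step that requires a genuine (short) argument rather than a one-line citation is the induction establishing that $Z$ has Lebesgue measure zero; everything else is immediate from the definition of the Gaussian density and the observation that $\det$ is a nontrivial polynomial.
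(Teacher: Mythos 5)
Your proof is correct, but it takes a different route from the paper. The paper argues column by column: it views the first $k-1$ columns as fixed, notes that they span a proper subspace of $\R^n$, which has Lebesgue measure zero, so the $k$-th column (having an absolutely continuous law) falls into that span with probability $0$; iterating over $k$ gives almost-sure linear independence of all $n$ columns. You instead reduce full rank to $\det(\rmR)\neq 0$, observe that $\det$ is a polynomial in the $n^2$ entries that is not identically zero (value $1$ at $\mI$), prove by induction plus Fubini that the zero set of a nonzero polynomial is Lebesgue-null, and conclude via absolute continuity of the Gaussian product measure. Your argument is more general and reusable --- it handles any algebraic degeneracy condition and any absolutely continuous joint law in one stroke --- at the price of carrying the polynomial-zero-set lemma (your Fubini induction sketch is fine; just note that in the inductive step one should take the leading coefficient with respect to the actual degree of $P$ in $x_m$, treating the degree-zero case separately). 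The paper's column-wise argument is more elementary, needing only that a proper subspace is null, though it implicitly uses the same conditioning/Fubini idea when it ``views the first columns as fixed.'' Either proof establishes the lemma.
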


    \begin{proof}
        We denote the first column of $\rmR$ by $\rmR_{:,1}$. It is linearly independent
        because $\rmR_{:,1}=\vzero$ with probability $1$. Then we view $\rmR_{:,1}$ as
        fixed, and consider the second column $\rmR_{:,2}$. The probability that
        $\rmR_{:,2}$ falls into the span of $\rmR_{:,1}$ is $0$, thus with probability
        $1$, $\rmR_{:,1}$ and $\rmR_{:,2}$ are linearly independent.

        Generally, let us consider the $k$-th column $\rmR_{:,k}$. The first $k-1$
        columns of $\rmR$ forms a subspace in $\R^n$ whose Lebesgue
        measure is $0$. Thus $\rmR_{:,k}$ falls into this subspace with probability
        $0$. By inference, we have all the columns of $\rmR$ are linearly independent
        with probability $1$, \textit{i.e.}, $P(\rank(\rmR)=n)=1$.
    \end{proof}

    \begin{lemma}\label{lem:2}
        Let $\mA\in\R^{s\times n}$, $\mB\in\R^{s\times m}$ be two matrices. Then the
        equation $\mA\mX=\mB$ has a solution iff.\ $\rank(\mA)=\rank([\mA,\mB])$.
    \end{lemma}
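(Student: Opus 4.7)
The plan is to reduce the matrix equation to a collection of vector equations and then invoke the classical Rouché-Capelli criterion. Writing $\mB=[\vb_1,\dots,\vb_m]$ and $\mX=[\vx_1,\dots,\vx_m]$ column by column, the equation $\mA\mX=\mB$ is equivalent to the family of vector equations $\mA\vx_i=\vb_i$ for $i=1,\dots,m$. Hence $\mA\mX=\mB$ is solvable if and only if each $\vb_i$ lies in the column space $\mathop{\mathrm{col}}(\mA)$.

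Next I would translate this column-space condition into the rank condition. Since $\mathop{\mathrm{col}}([\mA,\mB])=\mathop{\mathrm{col}}(\mA)+\mathop{\mathrm{col}}(\mB)$ and $\mathop{\mathrm{col}}(\mA)\subseteq\mathop{\mathrm{col}}([\mA,\mB])$, we always have $\rank(\mA)\leq\rank([\mA,\mB])$, with equality precisely when $\mathop{\mathrm{col}}(\mB)\subseteq\mathop{\mathrm{col}}(\mA)$. Putting the two reductions together:
\[
\mA\mX=\mB\text{ solvable}\ \Longleftrightarrow\ \vb_i\in\mathop{\mathrm{col}}(\mA)\ \forall i\ \Longleftrightarrow\ \mathop{\mathrm{col}}(\mB)\subseteq\mathop{\mathrm{col}}(\mA)\ \Longleftrightarrow\ \rank([\mA,\mB])=\rank(\mA).
\]

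For the forward direction concretely, if $\mX_0$ solves $\mA\mX_0=\mB$ then every column of $\mB$ is an $\mA$-linear combination of columns of $\mA$, so adjoining $\mB$ adds no new vectors to the column span and the ranks agree. For the reverse direction, the rank equality forces each $\vb_i\in\mathop{\mathrm{col}}(\mA)$, so one can pick any $\vx_i$ with $\mA\vx_i=\vb_i$ and assemble them into a solution $\mX$.

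There is no real obstacle here since this is a standard textbook fact; the only mild care needed is to justify the equivalence $\rank([\mA,\mB])=\rank(\mA)\Leftrightarrow\mathop{\mathrm{col}}(\mB)\subseteq\mathop{\mathrm{col}}(\mA)$, which follows immediately from the monotonicity of rank under column adjunction together with the observation that strict inclusion of column spaces strictly increases the rank. No special structure of $\mA$ or $\mB$ (such as squareness or full rank) is required.
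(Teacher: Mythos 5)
Your proposal is correct and follows essentially the same route as the paper's proof: reduce $\mA\mX=\mB$ to the column-wise systems $\mA\vx_i=\vb_i$, and characterize solvability by whether the columns of $\mB$ lie in the column space of $\mA$, which is equivalent to $\rank(\mA)=\rank([\mA,\mB])$. Your explicit mention of the Rouch\'e--Capelli criterion and of rank monotonicity under column adjunction just makes the same argument slightly more formal.
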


    \begin{proof}
        First we prove the necessity. Suppose $\mA\mX=\mB$ has a solution. Then,
        \[
            [\mA_{:,1},\mA_{:,2},\dots,\mA_{:,n}]\mX_{:,i}=\mB_{:,i},
        \]
        where $\mM_{:,i}$ denotes the $i$-th column of matrix $\mM$. This means
        each column of $\mB$ can be expressed as a linear combination of the columns
        of $\mA$, and therefore each column of $[\mA,\mB]$ can be expressed as a
        linear combination of the columns of $\mA$.

        On the other hand, it is obvious that each column of $\mA$ can be expressed
        as a linear combination of the columns of $[\mA,\mB]$. Thus we have
        $\rank(\mA)=\rank([\mA,\mB])$.

        Then we prove the sufficiency. Since $\rank(\mA)=\rank([\mA,\mB])$, and
        each column of $\mA$ can be expressed as a linear combination of columns
        of $[\mA,\mB]$, we have the columns of $\mA$ and the columns of $[\mA,\mB]$
        are equivalent. Therefore, each column of $\mB$ can be expressed as a
        linear combination of columns of $\mA$, \textit{i.e.}, $\mA\vx=\mB_{:,i}$ has a
        solution for $i=1,\dots,m$. Thus the equation $\mA\mX=\mB$ has a solution.
    \end{proof}

    Then we give the proof of Theorem~\ref{thm:linear-rni-universal}.

    \begin{proof}
        For any prediction $\mZ\in\R^{n\times d'}$, we wish to prove that with
        probability $1$, there exists parameters of a linear GNN with RNI $\mW\in\R^
        {d\times d'}$ such that
        \begin{equation}\label{eq:eq}
            [\mX,\rmR]\mW=\mZ.
        \end{equation}

        By Lemma~\ref{lem:2}, the necessary and sufficient condition that \Eqref{eq:eq}
        has a solution $\mW$ is $\rank([\mX,\rmR])=\rank([\mX,\rmR,\mZ])$.

        By Lemma~\ref{lem:1}, with probability $1$, $\rank(\rmR)=n$, therefore
        $\rank([\mX,\rmR])=\rank([\mX,\rmR,\mZ])=n$.

        Thus, in conclusion, with probability $1$, there exists parameters of a linear
        GNN with RNI $\mW\in\R^{d\times d'}$ such that the GNN produces $\mZ$.

        We can also prove linear GNNs' equivariance by observing that for any
        permutation matrix $\mP\in\R^{n\times n}$,
        \[
            [\mP\mX,\rmR]\mW=\mP[\mX,\rmR]\mW=\mP\mZ,
        \]
        where we used $\mP\rmR=\rmR$ because each entry of $\mP\rmR$ is also sampled
        from the standard Gaussian matrix $N(0,1)$.
    \end{proof}

    \subsection{Proof of Theorem~\ref{thm:alternative}}\label{app:alternative}

    The following lemmas are used in our proof.

    \begin{lemma}[Newton's Identities]\label{lem:newton}
        Let $x_1,x_2,\dots,x_n$ be variables, denote for $k\geq 1$ by $P_k$
        the $k$-th power sum:
        \[
            P_k=\sum_{i=1}^nx_i^k=x_1^k+\dots+x_n^k,
        \]
        and for $k\geq 0$ denote by $e_k$ the elementary symmetric polynomial.
        Then we have
        \[
            P_k=(-1)^{k-1}ke_k+\sum_{i=1}^{k-1}(-1)^{k-1+i}e_{k-i}P_i,
        \]
        for all $n\geq 1$ and $n\geq k\geq 1$.
    \end{lemma}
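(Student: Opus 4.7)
The plan is to prove Newton's identities via the standard generating-function argument. Define the polynomial
\[
    E(t) = \prod_{i=1}^n (1 + x_i t) = \sum_{j=0}^n e_j\, t^j,
\]
whose coefficients are, by definition, the elementary symmetric polynomials. The key observation is that the power sums $P_k$ arise naturally from the logarithmic derivative of $E(t)$, so Newton's identities will fall out of comparing two expressions for $E'(t)$.

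First I would compute $E'(t)/E(t)$. Differentiating $\log E(t) = \sum_i \log(1 + x_i t)$ gives
\[
    \frac{E'(t)}{E(t)} = \sum_{i=1}^n \frac{x_i}{1 + x_i t}.
\]
Expanding each summand as a geometric series (formally in $t$) yields
\[
    \frac{E'(t)}{E(t)} = \sum_{i=1}^n x_i \sum_{j \geq 0} (-x_i t)^j = \sum_{j \geq 0} (-1)^j P_{j+1}\, t^j,
\]
so that $E'(t) = E(t) \cdot \sum_{j \geq 0} (-1)^j P_{j+1}\, t^j$. Reindexing with $i = j+1$, the right-hand side becomes $\sum_{i \geq 1} (-1)^{i-1} P_i\, t^{i-1} \cdot E(t)$.

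Next I would compare the coefficient of $t^{k-1}$ on both sides for $1 \leq k \leq n$. On the left, $E'(t) = \sum_{j \geq 1} j\, e_j\, t^{j-1}$ contributes $k\, e_k$. On the right, the coefficient of $t^{k-1}$ in the product is $\sum_{i=1}^{k} (-1)^{i-1} P_i\, e_{k-i}$ (using $e_0 = 1$). Equating and isolating the $i = k$ term,
\[
    k\, e_k = (-1)^{k-1} P_k + \sum_{i=1}^{k-1} (-1)^{i-1} e_{k-i}\, P_i.
\]
Solving for $P_k$ and multiplying through by $(-1)^{k-1}$ gives
\[
    P_k = (-1)^{k-1} k\, e_k + \sum_{i=1}^{k-1} (-1)^{k-1+i} e_{k-i}\, P_i,
\]
which is exactly the claimed identity.

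The only subtlety in this plan is bookkeeping of signs in the last step, since a miscounted exponent by $\pm 1$ would flip the sign pattern. The condition $n \geq k$ is exactly what ensures $e_k$ refers to a genuine coefficient of $E(t)$ rather than zero, and no separate case analysis is needed. The generating-function identity itself is purely formal in $t$, so no convergence issues arise; everything reduces to matching polynomial coefficients.
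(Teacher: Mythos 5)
Your proof is correct. Note that the paper does not actually prove this lemma: Newton's identities are stated as a classical fact (alongside Vieta's formulas) and used as a black box in the proof of Lemma~\ref{lem:h}, so there is no in-paper argument to compare against. Your generating-function derivation is the standard self-contained route and the details check out: with $E(t)=\prod_i(1+x_it)=\sum_j e_jt^j$ the logarithmic derivative is a legitimate formal-power-series manipulation (the constant term of $E$ is $1$, so dividing by $E(t)$ is allowed; alternatively you could avoid the logarithm entirely by differentiating the product directly, $E'(t)=\sum_i x_i\prod_{j\neq i}(1+x_jt)$), and comparing coefficients of $t^{k-1}$ with $e_0=1$ gives $k\,e_k=(-1)^{k-1}P_k+\sum_{i=1}^{k-1}(-1)^{i-1}e_{k-i}P_i$, from which your sign bookkeeping correctly recovers $P_k=(-1)^{k-1}k\,e_k+\sum_{i=1}^{k-1}(-1)^{k-1+i}e_{k-i}P_i$. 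As a sanity check the cases $k=1,2,3$ give $P_1=e_1$, $P_2=e_1P_1-2e_2$, and $P_3=e_1P_2-e_2P_1+3e_3$, which match exactly the instances the paper invokes in the proof of Lemma~\ref{lem:h}. Your remark about $n\geq k$ is also apt: it guarantees $k\,e_k$ is the honest coefficient of $t^{k-1}$ in $E'(t)$, so no degenerate case is needed.
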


    \begin{lemma}[Vieta's Formulas]\label{lem:vieta}
        Let $f(x)=a_nx^n+a_{n-1}x^{n-1}+\dots+a_1x+a_0$ be a polynomial of degree
        $n$, $e_k$ be the elementary symmetric polynomial. Then we have
        \[
            e_1=-\frac{a_{n-1}}{a_n},\quad e_2=\frac{a_{n-2}}{a_n},\quad\dots,\quad
            e_n=(-1)^n\frac{a_0}{a_n}.
        \]
    \end{lemma}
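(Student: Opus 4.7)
The plan is to derive Vieta's formulas from the factored form of $f(x)$ and match coefficients. First I would invoke the fundamental theorem of algebra (working over $\mathbb{C}$, or assuming the roots $x_1,\dots,x_n$ live in a splitting field) to write
\[
f(x)=a_n(x-x_1)(x-x_2)\cdots(x-x_n).
\]
This reduces the lemma to a purely combinatorial identity about expanding a product of linear factors.

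Next, I would expand the product. By a straightforward induction on $n$, one shows
\[
\prod_{i=1}^{n}(x-x_i)=\sum_{k=0}^{n}(-1)^k e_k(x_1,\dots,x_n)\,x^{n-k},
\]
where $e_0=1$ and $e_k$ is the $k$-th elementary symmetric polynomial, defined as the sum of all $\binom{n}{k}$ products of $k$ distinct variables from $\{x_1,\dots,x_n\}$. The sign $(-1)^k$ appears because each term in $e_k$ arises by choosing the $-x_i$ factor from exactly $k$ of the linear factors and the $x$ factor from the remaining $n-k$.

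Then I would compare coefficients. Multiplying the above by $a_n$ and matching the coefficient of $x^{n-k}$ with that of $f(x)=a_nx^n+a_{n-1}x^{n-1}+\dots+a_0$ gives
\[
a_{n-k}=a_n\,(-1)^k\,e_k(x_1,\dots,x_n),\qquad k=0,1,\dots,n.
\]
Solving for $e_k$ yields $e_k=(-1)^k a_{n-k}/a_n$, which instantiates at $k=1,2,n$ to the three displayed identities in the lemma.

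Since every step is a routine manipulation, there is no real obstacle; the only point worth being careful about is the combinatorial expansion of $\prod_i(x-x_i)$, which is handled cleanly by induction or by observing that each monomial $x^{n-k}$ in the expansion corresponds to choosing a subset $S\subset\{1,\dots,n\}$ of size $k$ from which to pick the constant $-x_i$, contributing $(-1)^k\prod_{i\in S}x_i$ to the coefficient.
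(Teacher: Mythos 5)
Your proof is correct: factoring $f(x)=a_n\prod_{i=1}^n(x-x_i)$ over a splitting field, expanding to get $\prod_i(x-x_i)=\sum_{k=0}^n(-1)^k e_k x^{n-k}$, and matching coefficients yields exactly $e_k=(-1)^k a_{n-k}/a_n$, which is the stated identity. The paper treats Vieta's formulas as a classical fact and gives no proof of its own, so your argument is simply the standard one and there is nothing to reconcile.
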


    We first show that $h$ exists.

    \begin{lemma}\label{lem:h}
        Let $\vu\in\R^n$. Assume that there does not exists a permutation matrix
        $\mP\in\R^{n\times n}$ such that $\vu=-\mP\vu$. Then there exists a
        positive odd integer $h\leq n$ such that $\sum_{i=1}^n\evu_i^h\neq 0$.
    \end{lemma}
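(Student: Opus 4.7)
The plan is a proof by contradiction: assume $P_k := \sum_{i=1}^n \evu_i^k = 0$ for every odd $k$ with $1 \leq k \leq n$, and derive from this that the multiset of entries of $\vu$ is invariant under negation, producing a permutation matrix $\mP$ with $\vu = -\mP\vu$.

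First I would use Newton's identities (Lemma~\ref{lem:newton}) to prove, by induction on odd $k \leq n$, that $e_k = 0$ for every odd $k \leq n$. The base case $k=1$ is immediate since $P_1 = e_1$. For the inductive step, suppose $e_1, e_3, \ldots, e_{2j-1}$ all vanish; Newton's identity at $k = 2j+1$ expresses $P_{2j+1}$ as $(2j+1)e_{2j+1}$ plus a sum of terms of the form $\pm e_{2j+1-i} P_i$ with $1 \leq i \leq 2j$. In each such term, either $i$ is odd (and $P_i = 0$ by assumption) or $i$ is even (and then $2j+1-i$ is odd and $\leq 2j-1$, so $e_{2j+1-i}=0$ by the inductive hypothesis). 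Thus $(2j+1)e_{2j+1} = P_{2j+1} = 0$, giving $e_{2j+1} = 0$.

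Next, by Vieta's formulas (Lemma~\ref{lem:vieta}), the characteristic polynomial
\[
    f(x) = \prod_{i=1}^n (x - \evu_i) = \sum_{k=0}^n (-1)^k e_k\, x^{n-k}
\]
has vanishing coefficient at $x^{n-k}$ whenever $k$ is odd. Hence $f$ contains only monomials $x^{n-k}$ with $k$ even, which gives the functional equation $f(-x) = (-1)^n f(x)$. Consequently, if $\alpha$ is a root of $f$ with multiplicity $m$, so is $-\alpha$; and roots equal to $0$ are self-paired. Therefore the multiset $\{\evu_1, \ldots, \evu_n\}$ is closed under negation, so there exists a bijection $\sigma$ of $\{1,\dots,n\}$ with $\evu_{\sigma(i)} = -\evu_i$ for all $i$. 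Letting $\mP$ be the permutation matrix associated with $\sigma$, we obtain $\mP\vu = -\vu$, i.e.\ $\vu = -\mP\vu$, contradicting the hypothesis and completing the proof.

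The main technical obstacle is the induction step, where one has to verify that every mixed term in Newton's identity is killed either by the inductive hypothesis on $e$'s or by the standing assumption on odd power sums; the parity bookkeeping is what makes the argument work. A minor care point is handling zero entries of $\vu$, which are their own negatives in the root-pairing argument but cause no issue since the induced $\sigma$ can fix them.
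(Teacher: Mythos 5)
Your proposal is correct and follows essentially the same route as the paper's proof: a contradiction argument using Newton's identities to kill the odd elementary symmetric polynomials, Vieta's formulas to kill the odd coefficients of $\prod_i(x-\evu_i)$, and the resulting symmetry of the roots under negation to produce a permutation $\mP$ with $\vu=-\mP\vu$. Your write-up is in fact slightly more explicit than the paper's (a clean induction for the Newton step and the unified functional equation $f(-x)=(-1)^nf(x)$ in place of the paper's separate even/odd cases), but the argument is the same.
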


    \begin{proof}
        Suppose the opposite holds, \textit{i.e.}, $P_i=0$ for all odd $1\leq i
        \leq n$, where $P_i$ is the $i$-th power sum of entries of $\vu$. Let
        $\evu_1,\evu_2,\dots,\evu_n$ be the roots of the polynomial
        $f(x)=a_nx^n+a_{n-1}x^{n-1}+\dots+a_1x+a_0$.

        If $n$ is even, by Lemma~\ref{lem:newton},
        \begin{align*}
            0&=P_1=e_1,\\
            0&=P_3=e_1P_2-e_2P_1+3e_3,\\
            0&=P_5=e_1P_4-e_2P_3+e_3P_2-e_4P_1+5e_5,\\
            &\vdotswithin{=}\\
            0&=P_{n-1}=e_1P_{n-2}-e_2P_{n-3}+\dots+(n-1)e_{n-1},
        \end{align*}
        which gives us $e_1=e_3=\dots=e_{n-1}=0$. Then, by Lemma~\ref{lem:vieta},
        we have $a_{n-1}=a_{n-3}=\dots=a_1=0$. This indicates that $f(x)=g(x^2)$
        for some polynomial $g(x)$. Similarly, if $n$ is odd, by Lemma~\ref{lem:newton},
        \begin{align*}
            0&=P_1=e_1,\\
            0&=P_3=e_1P_2-e_2P_1+3e_3,\\
            0&=P_5=e_1P_4-e_2P_3+e_3P_2-e_4P_1+5e_5,\\
            &\vdotswithin{=}\\
            0&=P_n=e_1P_{n-1}-e_2P_{n-2}+\dots+ne_n,
        \end{align*}
        which gives us $e_1=e_3=\dots=e_n=0$. Then, by Lemma~\ref{lem:vieta}, we
        have $a_{n-1}=a_{n-3}=\dots=a_0=0$. This indicates that $f(x)=xg(x^2)$
        for some polynomial $g(x)$. Either way, all $n$ roots of $f(x)$ are
        symmetric with respect to the $y$-axis. Then there must exist a permutation
        matrix such that $\vu=-\mP\vu$, leading to a contradiction. Thus
        Lemma~\ref{lem:h} holds.
    \end{proof}

    Then we prove Theorem~\ref{thm:alternative}.

    \begin{proof}
        By Lemma~\ref{lem:h}, we have shown the existence of $h$. Since flipping
        the sign of $\vu$ also flips the sign of $\sum_{i=1}^n\evu_i^h$ (because
        $h$ is odd), Algorithm~\ref{alg:alternative} uniquely decides the sign
        of $\vu$. Since the algorithm outputs either $\vu$ or $-\vu$, it is also
        permutation-equivariant.
    \end{proof}

    \subsection{Proof of Theorem~\ref{thm:bound}}\label{app:bound}

    We prove that under mild conditions, the loss of expressive power induced by
    truncating RSE can be upper bounded, as shown in the following theorem.

    \begin{theorem}\label{thm:bound}
        Let\/ $\Omega\subset\R^{n\times d}\times\R^{n\times n}$ be a compact set
        of graphs, $[\mX,\hat{\mA}]\in\Omega$. Let\/ $\NN$ be a universal neural
        network on sets. Given an invariant graph function $f$ defined over\/
        $\Omega$ that can be $\varepsilon$-approximated by an $L_p$-Lipschitz
        continuous function and arbitrary $\varepsilon>0$, for any integer
        $0<k\leq n$, there exist parameters of\/ $\NN$ such that for all
        graphs $[\mX,\hat{\mA}]\in\Omega$,
        \[
            \bigl\lvert f([\mX,\hat{\mA}])-\NN([\mX,(\mU\mLambda^\frac12)_{:,-k:},
            \vzero])\bigr\rvert<\sqrt{n-k}L_p\lambda_{n-k}+\varepsilon.
        \]
        Here the $L_p$-Lipschitz continuity of $f$ is defined using the Frobenius
        norm on the input domain, $0\leq\lambda_1\leq\dots\leq\lambda_n\leq 2$ are
        the eigenvalues of $\hat{\mA}$, $\vzero\in\R^{n\times(n-k)}$.
    \end{theorem}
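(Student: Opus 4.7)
The plan is to combine the universality of Theorem~\ref{thm:se-universal} with the $L_p$-Lipschitz approximability of $f$ via a three-term triangle inequality that isolates the truncation error. Let $g$ denote the promised $L_p$-Lipschitz (with respect to Frobenius norm) $\varepsilon$-approximator of $f$. Following the reduction used in the proof of Theorem~\ref{thm:se-universal}, I would lift $g$ to a permutation-invariant set function $G$ acting on the rows of $[\mX,\mU\mLambda^{\frac12}]$, using the identity $\hat{\mA}=(\mU\mLambda^{\frac12})(\mU\mLambda^{\frac12})^\top$. Because $\Omega$ is compact and zero-padding is continuous, the set of truncated inputs $[\mX,(\mU\mLambda^{\frac12})_{:,-k:},\vzero]$ also lives in a compact subset of set-valued inputs, so a single universal set-NN can be chosen whose output approximates $G$ uniformly on both the full and the truncated inputs to accuracy at most $\varepsilon/2$.

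I would then apply the triangle inequality
\[
|f([\mX,\hat{\mA}])-\NN(\text{trunc})|\leq|f-G(\text{full})|+|G(\text{full})-G(\text{trunc})|+|G(\text{trunc})-\NN(\text{trunc})|,
\]
where the first term is controlled by the $\varepsilon$-approximation hypothesis and the third by the universal approximation just constructed. Both of these together contribute at most $\varepsilon$. The only remaining quantity is the middle term, which I would bound by Lipschitz continuity: the input perturbation is a matrix with at most $n-k$ nonzero columns of the form $\sqrt{\lambda_i}\,\vu_i$ with $\|\vu_i\|_2=1$, and by the ascending ordering of the eigenvalues each such $\lambda_i$ is at most $\lambda_{n-k}$. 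A short calculation on the Frobenius norm, combined with the Lipschitz constant $L_p$, then yields the announced $\sqrt{n-k}\,L_p\,\lambda_{n-k}$ bound, completing the proof.

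The main obstacle I anticipate is a subtlety about the domain on which the Lipschitz hypothesis is interpreted. Eigendecomposition is not globally Lipschitz when eigenvalues collide, so the assumption must mean Lipschitz continuity on the $[\mX,\mU\mLambda^{\frac12}]$ representation with Frobenius norm, not directly on $[\mX,\hat{\mA}]$; otherwise the truncation comparison would require Davis--Kahan or Weyl-type matrix-perturbation tools that would contaminate the constant. A secondary wrinkle is that the universality invoked from Theorem~\ref{thm:se-universal} must hold on the enlarged compact input set that includes zero-padded truncations, rather than only the image of $\Omega$ under full eigendecomposition; verifying this requires only a routine compactness argument but should be made explicit before combining the three pieces.
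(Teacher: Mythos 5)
There is a genuine gap in the step you call a ``short calculation,'' and it stems from where you apply the Lipschitz hypothesis. The theorem states that $L_p$-Lipschitz continuity is with respect to the Frobenius norm \emph{on the input domain} $\Omega$, i.e.\ on pairs $[\mX,\hat{\mA}]$ --- not on the embedding representation $[\mX,\mU\mLambda^{\frac12}]$ as you propose to reinterpret it. Your anticipated ``obstacle'' (that working on $[\mX,\hat{\mA}]$ would force Davis--Kahan or Weyl-type perturbation arguments) does not arise, because the paper's key observation is that the zero-padded truncated embedding $[(\mU\mLambda^{\frac12})_{:,-k:},\vzero]$ is exactly the reweighted spectral embedding of the modified matrix $\hat{\mA}'=\sum_{i=n-k+1}^{n}\lambda_i\vu_i\vu_i^\top$, which shares the \emph{same} eigenvectors as $\hat{\mA}$; hence $F(\text{trunc})=f([\mX,\hat{\mA}'])$ and no eigenvector-perturbation analysis is needed. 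The truncation error is then controlled in the graph domain by
\[
\bigl\lVert\hat{\mA}-\hat{\mA}'\bigr\rVert_{\mathrm F}
=\sqrt{\lambda_1^2+\dots+\lambda_{n-k}^2}\leq\sqrt{n-k}\,\lambda_{n-k},
\]
which is precisely where the stated constant comes from.

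By contrast, under your reading the perturbation in the embedding domain has columns $\sqrt{\lambda_i}\,\vu_i$, so its Frobenius norm is $\sqrt{\sum_{i\leq n-k}\lambda_i}\leq\sqrt{(n-k)\,\lambda_{n-k}}$, and your middle term would only yield $\sqrt{n-k}\,L_p\sqrt{\lambda_{n-k}}$, not the claimed $\sqrt{n-k}\,L_p\,\lambda_{n-k}$ (these differ whenever $\lambda_{n-k}\notin\{0,1\}$, and your bound is strictly weaker for $\lambda_{n-k}<1$, the typical low-frequency regime). Moreover, Lipschitzness of $f$ on $[\mX,\hat{\mA}]$ does not transfer with the same constant to the lifted set function $G$ on $[\mX,\mU\mLambda^{\frac12}]$, since the map from the embedding to its Gram matrix is quadratic; so your triangle-inequality skeleton is fine, but the middle term must be bounded after pulling the truncated input back to $f([\mX,\hat{\mA}'])$, as the paper does, rather than by a Lipschitz estimate in the embedding coordinates.
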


    We can see from Theorem~\ref{thm:bound} that the upper bound of the loss of
    expressive power decreases when $k$ increases, and when $k=n$, the network
    becomes universal. We give its proof as follows.

    By Lemma~\ref{lem:3}, we know that $\lambda_i\geq 0$ for $i=1,2,\dots,n$.
    Next we prove that $\lambda_i\leq 2$.

    \begin{lemma}\label{lem:4}
        Suppose $\hat{\mA}$ is the normalized adjacency matrix of a graph $\gG$,
        and $\lambda_1<\dots<\lambda_n$ are its eigenvalues. Then $\lambda_i\leq 2$,
        for $i=1,2,\dots,n$.
    \end{lemma}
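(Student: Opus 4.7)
The plan is to mirror the Rayleigh-quotient argument already used in Lemma~\ref{lem:3}, but this time applied to $2\mI - \hat{\mA}$. Showing $\lambda_n \leq 2$ is equivalent to showing that $2\mI - \hat{\mA}$ is positive semi-definite, which will follow from writing the quadratic form as a sum of manifestly non-negative terms.

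Concretely, for any $\vx \in \R^n$ I would introduce the change of variable $\vy = \mD^{-\frac12}\vx$ so that $y_i = x_i/\sqrt{d_i}$, where $d_i = 1 + \deg(i)$ is the $i$-th diagonal entry of $\mD$ (recall $\mD$ is the degree matrix of $\tilde{\mA} = \mI + \mA$). Under this substitution one obtains
\[
\vx^\top \hat{\mA}\,\vx \;=\; \vy^\top(\mI+\mA)\vy \;=\; \sum_{i\in\sV} y_i^2 \;+\; 2\sum_{(i,j)\in\sE} y_i y_j,
\qquad
\vx^\top\vx \;=\; \vy^\top \mD\,\vy \;=\; \sum_{i\in\sV} (1+\deg(i)) y_i^2.
\]
Subtracting these expressions (after multiplying the second by $2$) and rearranging the degree sum using the standard identity $\sum_i \deg(i) y_i^2 = \sum_{(i,j)\in\sE}(y_i^2 + y_j^2)$, I expect the quadratic form to split as
\[
\vx^\top(2\mI - \hat{\mA})\vx \;=\; \sum_{i\in\sV} y_i^2 \;+\; \sum_{i\in\sV} \deg(i)\,y_i^2 \;+\; \sum_{(i,j)\in\sE} (y_i - y_j)^2,
\]
which is a sum of non-negative terms. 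Hence $2\mI - \hat{\mA} \succeq 0$, and by the Rayleigh quotient $\lambda_i \le 2$ for all $i$.

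The main obstacle, if any, is purely bookkeeping: one must be careful that $\mD$ refers to the degree matrix of the augmented $\tilde{\mA} = \mI + \mA$ rather than of $\mA$ itself, so that the constant term $\sum_i y_i^2$ arising from the self-loop in $\mI + \mA$ matches a corresponding $\sum_i y_i^2$ in $\vx^\top \vx$. Once that is tracked correctly, the grouping into $\sum y_i^2 + \sum \deg(i) y_i^2 + \sum (y_i - y_j)^2$ is immediate, and the lemma follows. As a sanity check, equality in the final sum would force $\vy = 0$, so in fact $\lambda_i < 2$ strictly, consistent with the convention $\lambda_{n-k} \in [0,2]$ used in Theorem~\ref{thm:bound}.
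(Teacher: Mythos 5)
Your proof is correct and takes essentially the same route as the paper's: a Rayleigh-quotient argument mirroring Lemma~\ref{lem:3}, writing the quadratic form of $2\mI-\hat{\mA}$ as a sum of squares whose edge terms $(y_i-y_j)^2=\bigl(x_i/\sqrt{d_i}-x_j/\sqrt{d_j}\bigr)^2$ are exactly the squares the paper uses for $\vx^\top(\mI-\tilde{\mA})\vx\geq 0$ before adding $2\vx^\top\vx$. If anything, your bookkeeping with $d_i=1+\deg(i)$ is tidier than the paper's and additionally yields the strict bound $\lambda_i<2$.
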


    \begin{proof}
        In the proof of Lemma~\ref{lem:3}, we proved $\vx^\top(\mI+\tilde{\mA})
        \vx\geq 0$. Similarly, we have
        \[
            \vx^\top(\mI-\tilde{\mA})\vx=\sum_{(i,j)\in\sE}\left(\frac{x_i}
            {\sqrt{d_i}}-\frac{x_j}{\sqrt{d_j}}\right)^2\geq 0.
        \]
        Thus,
        \[
            \vx^\top\hat{\mA}\vx=\vx^\top(-\mI+\tilde{\mA})\vx+2\vx^
            \top\vx\leq 2\vx^\top\vx.
        \]
        This shows that the Rayleigh quotient is bounded by $\frac{\vx^\top
        \hat{\mA}\vx}{\vx^\top\vx}\leq 2$, therefore $\lambda_i\leq 2$.
    \end{proof}

    Then we give the proof of Theorem~\ref{thm:bound}.

    \begin{proof}
        Let $0\leq\lambda_1\leq\dots\leq\lambda_n\leq 2$ be the eigenvalues of $\hat{\mA}$
        and $\vu_1,\dots,\vu_n$ be the corresponding eigenvectors. Then
        \[
            \hat{\mA}=\lambda_1\vu_1\vu_1^\top+\dots+\lambda_n\vu_n\vu_n^
            \top.
        \]
        We also define
        \[
            \hat{\mA}'\coloneqq\lambda_{n-k+1}\vu_{n-k+1}\vu_{n-k+1}^\top+\dots+
            \lambda_n\vu_n\vu_n^\top.
        \]

        By Theorem~\ref{thm:se-universal} and the assumptions in our theorem,
        we know that there exists a permutation-invariant network on sets such that
        \[
            \bigl\lvert F([\mX,(\mU\mLambda^\frac12)_{:,-k:},\vzero])-\NN([\mX,
            (\mU\mLambda^\frac12)_{:,-k:},\vzero])\bigr\rvert<\frac{\varepsilon}2.
        \]
        Since $f$ can be approximated by an $L_p$-Lipschitz continuous function, we have
        \begin{align*}
            \bigl\lvert f([\mX,\hat{\mA}])-F([\mX,(\mU\mLambda^\frac12)_{:,-k:},
            \vzero])\bigr\rvert&=\bigl\lvert f([\mX,\hat{\mA}])-f([\mX,\hat{\mA}'])
            \bigr\rvert\\
            &\leq L_p\bigl\lVert[\mX,\hat{\mA}]-[\mX,\hat{\mA}']\bigr\rVert_\mathrm{F}
            +\frac{\varepsilon}2\\
            &=L_p\bigl\lVert[\vzero,\lambda_1\vu_1\vu_1^\top+\dots+
            \lambda_{n-k}\vu_{n-k}\vu_{n-k}^\top]\bigr\rVert_\mathrm{F}+
            \frac{\varepsilon}2\\
            &=L_p\sqrt{\lambda_1^2+\dots+\lambda_{n-k}^2}+\frac{\varepsilon}2\\
            &\leq\sqrt{n-k}L_p\lambda_{n-k}+\frac{\varepsilon}2.
        \end{align*}
        Combining the two inequalities above gives us
        \[
            \bigl\lvert f([\mX,\hat{\mA}])-\NN([\mX,(\mU\mLambda^\frac12)_{:,-k:},
            \vzero])\bigr\rvert<\sqrt{n-k}L_p\lambda_{n-k}+\varepsilon.
        \]
    \end{proof}

    \section{Dataset details}\label{app:datasets}

    \textbf{ZINC} (MIT License) consists of 12K molecular graphs from the ZINC
    database of commercially available chemical compounds. These molecular graphs
    are between 9 and 37 nodes large. Each node represents a heavy atom (28
    possible atom types) and each edge represents a bond (3 possible types).
    The task is to regress constrained solubility (logP) of the molecule. The
    dataset comes with a predefined 10K/1K/1K train/validation/test split.

    \textbf{OGBG-MOLTOX21 and OGBG-MOLPCBA} (MIT License) are molecular property
    prediction datasets adopted by OGB from MoleculeNet. These datasets use a
    common node (atom) and edge (bond) featurization that represent chemophysical
    properties. OGBG-MOLTOX21 is a multi-mask binary graph classification dataset
    where a qualitative (active/inactive) binary label is predicted against 12
    different toxicity measurements for each molecular graph. OGBG-MOLPCBA is
    also a multi-task binary graph classification dataset from OGB where an
    active/inactive binary label is predicted for 128 bioassays.

    Details of the three datasets are summarized in Table~\ref{tab:details}.

    \begin{table}[htbp]
        \centering
        \caption{Details of the datasets.}
        \begin{tabular}{cccc}
            \toprule
            Dataset          & ZINC             & ogbg-moltox21    & ogbg-molpcba \\
            \midrule
            \#Graphs         & 12000            & 7831             & 437929 \\
            Avg \#Nodes      & 23.2             & 18.6             & 26.0 \\
            Avg \#Edges      & 24.9             & 19.3             & 28.1 \\
            Task Type        & Regression       & Binary Classification & Binary Classification \\
            Metric           & MAE              & ROC-AUC          & AP \\
            \bottomrule
        \end{tabular}
        \label{tab:details}
    \end{table}

    \section{Hyperparameter settings}\label{app:hyperparameters}

    \subsection{Real-world tasks}

    We evaluate the proposed MAP on three real-world datasets: ZINC, OGBG-MOLTOX21
    and OGBG-MOLPCBA, on a server with 6 NVIDIA 3080 Ti GPUs and 2 NVIDIA 1080 Ti
    GPUs. We consider 4 GNN architectures: GatedGCN, PNA, SAN and GraphiT, with 4
    different positional encodings: no PE, LapPE with random Sign, SignNet and MAP\@.
    We follow the same settings as \citet{lspe} for models with no PE or LapPE,
    and same settings as \citet{signnet} for models with SignNet or MAP. All
    baseline scores reported in Table~\ref{tab:zinc}, \ref{tab:moltox21} \&
    \ref{tab:molpcba} are taken from the original papers. As shown in
    \Figref{fig:settings}, for models with no PE or LapPE, the input features are
    directly fed into the base model; for models with SignNet, the eigenvectors are
    first processed by SignNet and then concatenated with the original node features
    as input to the base model; for models with MAP, the PEs are first processed by
    a normal GNN and then concatenated with the original node features as input to
    the base model. These settings align with the original papers.

    \begin{figure}[htbp]
        \centering
        \def\svgwidth{135mm}
        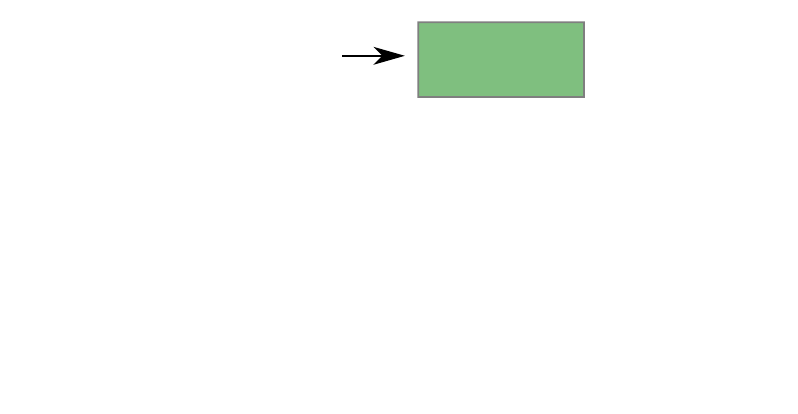
        \caption{Our experiment settings with different PEs.}
        \label{fig:settings}
    \end{figure}

    The main hyperparameters in our experiments are listed as follows.
    \begin{itemize}
        \item $k$: the number of eigenvectors used in the PE\@.
        \item $L_1$: the number of layers of the base model.
        \item $h_1$: the hidden dimension of the base model.
        \item $h_2$: the output dimension of the base model.
        \item $\lambda$: the initial learning rate.
        \item $t$: the patience of the learning rate schedular.
        \item $r$: the factor of the learning rate schedular.
        \item $\lambda_{\min}$: the minimum learning rate of the learning rate
        schedular.
        \item $L_2$: the number of layers of SignNet or the normal GNN (when using
        MAP as PE).
        \item $h_3$: the hidden dimension of SignNet or the normal GNN (when using
        MAP as PE).
    \end{itemize}

    The values of these hyperparameters in our experiments are listed in
    Table~\ref{tab:hyperparameters}.

    \begin{table}[htbp]
        \centering
        \caption{Hyperparameter details for experiments on real-world datasets.}
        \resizebox{\textwidth}{!}{
            \begin{tabular}{ccccccccccccc}
                \toprule
                            & Model          & PE             & $k$            & $L_1$          & $h_1$          & $h_2$          & $\lambda$      & $t$            & $r$            & $\lambda_{\min}$ & $L_2$          & $h_3$ \\
                \midrule
                \multirow{12}[2]{*}{\begin{sideways}ZINC\end{sideways}} & GatedGCN       & None           & 0              & 16             & 78             & 78             & 0.001          & 25             & 0.5            & 1e-6           & -              & - \\
                            & GatedGCN       & LapPE + RS     & 8              & 16             & 78             & 78             & 0.001          & 25             & 0.5            & 1e-6           & -              & - \\
                            & GatedGCN       & SignNet        & 8              & 16             & 67             & 67             & 0.001          & 25             & 0.5            & 1e-6           & 8              & 67 \\
                            & GatedGCN       & MAP            & 8              & 16             & 69             & 67             & 0.001          & 25             & 0.5            & 1e-5           & 6              & 69 \\
                            \cmidrule{2-13}
                            & PNA            & None           & 0              & 16             & 70             & 70             & 0.001          & 25             & 0.5            & 1e-6           & -              & - \\
                            & PNA            & LapPE + RS     & 8              & 16             & 80             & 80             & 0.001          & 25             & 0.5            & 1e-6           & -              & - \\
                            & PNA            & SignNet        & 8              & 16             & 70             & 70             & 0.001          & 25             & 0.5            & 1e-6           & 8              & 70 \\
                            & PNA            & MAP            & 8              & 16             & 70             & 70             & 0.001          & 25             & 0.5            & 1e-6           & 6              & 70 \\
                            \cmidrule{2-13}
                            & SAN            & None           & 0              & 10             & 64             & 64             & 0.0003         & 25             & 0.5            & 1e-6           & -              & - \\
                            & SAN            & MAP            & 16             & 10             & 40             & 40             & 0.0007         & 25             & 0.5            & 1e-5           & 6              & 40 \\
                            \cmidrule{2-13}
                            & GraphiT        & None           & 0              & 10             & 64             & 64             & 0.0003         & 25             & 0.5            & 1e-6           & -              & - \\
                            & GraphiT        & MAP            & 16             & 10             & 48             & 48             & 0.0007         & 25             & 0.5            & 1e-6           & 6              & 48 \\
                \midrule
                \multirow{9}[2]{*}{\begin{sideways}MOLTOX21\end{sideways}} & GatedGCN       & None           & 0              & 8              & 154            & 154            & 0.001          & 25             & 0.5            & 1e-5           & -              & - \\
                            & GatedGCN       & LapPE + RS     & 3              & 8              & 154            & 154            & 0.001          & 25             & 0.5            & 1e-5           & -              & - \\
                            & GatedGCN       & MAP            & 3              & 8              & 150            & 150            & 0.001          & 22             & 0.14           & 5e-6           & 8              & 150 \\
                            \cmidrule{2-13}
                            & PNA            & None           & 0              & 8              & 206            & 206            & 0.0005         & 10             & 0.8            & 2e-5           & -              & - \\
                            & PNA            & MAP            & 16             & 8              & 115            & 113            & 0.0005         & 10             & 0.8            & 8e-5           & 7              & 115 \\
                            \cmidrule{2-13}
                            & SAN            & None           & 0              & 10             & 88             & 88             & 0.0007         & 25             & 0.5            & 1e-6           & -              & - \\
                            & SAN            & MAP            & 12             & 10             & 88             & 88             & 0.0007         & 25             & 0.5            & 1e-5           & 8              & 88 \\
                            \cmidrule{2-13}
                            & GraphiT        & None           & 0              & 10             & 88             & 88             & 0.0007         & 25             & 0.5            & 1e-6           & -              & - \\
                            & GraphiT        & MAP            & 16             & 10             & 64             & 64             & 0.0007         & 25             & 0.5            & 1e-6           & 6              & 64 \\
                \midrule
                \multirow{5}[2]{*}{\begin{sideways}MOLPCBA\end{sideways}} & GatedGCN       & None           & 0              & 8              & 154            & 154            & 0.001          & 25             & 0.5            & 1e-4           & -              & - \\
                            & GatedGCN       & LapPE + RS     & 3              & 8              & 154            & 154            & 0.001          & 25             & 0.5            & 1e-4           & -              & - \\
                            & GatedGCN       & MAP            & 3              & 8              & 200            & 200            & 0.001          & 25             & 0.5            & 1e-5           & 8              & 200 \\
                            \cmidrule{2-13}
                            & PNA            & None           & 0              & 4              & 510            & 510            & 0.0005         & 4              & 0.8            & 2e-5           & -              & - \\
                            & PNA            & MAP            & 16             & 4              & 304            & 304            & 0.0005         & 10             & 0.8            & 2e-5           & 8              & 304 \\
                \bottomrule
            \end{tabular}
        }
        \label{tab:hyperparameters}
    \end{table}

    \subsection{Synthetic tasks}

    To verify the expressive power of RSE, we conduct experiments on the synthetic
    \textsc{Exp} dataset. The dataset consists of a set of 1-WL indistinguishable
    non-isomorphic graph pairs. If a network reaches above 50\,\% accuracy on this
    dataset, it must have expressive power beyond the 1-WL test. DeepSets-RSE
    is a two-layer DeepSets model with RSE as PE, whereas Linear-RSE is a one-layer
    linear model with RSE as PE\@. We use Optuna \citep{optuna} to optimize the
    hyperparameters of our models. The values of hyperparameters of our models are
    as follows:
    \begin{itemize}
        \item \textbf{DeepSets-RSE}: the learning rate $\lambda=0.002385602941230316$,
        the hidden dimension of the first linear layer $w_1=60$, the hidden dimension
        of the second linear layer $w_2=76$, the dropout rate \citep{dropout}
        $p=0.13592575703525184$, the weight decay of Adam optimizer $\mathit{wd}
        =0.0005$.
        \item \textbf{Linear-RSE}: the learning rate $\lambda=0.0006867736568978745$,
        the hidden dimension $w=109$, the weight decay of Adam optimizer
        $\mathit{wd}=0.0001$.
    \end{itemize}

\end{document}